\documentclass[10pt]{article} % For LaTeX2e
\usepackage{amsmath, amssymb}
\usepackage{graphicx} 
  
\usepackage{url}
\usepackage{todonotes} 
  
\usepackage{algorithm}
\usepackage{algorithmic}
\usepackage{booktabs}

\usepackage{amsthm}
\usepackage{subfig} 
% If accepted, instead use the following line for the camera-ready submission:
%\usepackage[accepted]{tmlr}
% To de-anonymize and remove mentions to TMLR (for example for posting to preprint servers), instead use the following:
\usepackage[preprint]{tmlr}

% Optional math commands from https://github.com/goodfeli/dlbook_notation.
%%%%% NEW MATH DEFINITIONS %%%%%

\usepackage{amsmath,amsfonts,bm}

% Mark sections of captions for referring to divisions of figures

% Highlight a newly defined term

% Figure reference, lower-case.

% Figure reference, capital. For start of sentence

% Section reference, lower-case.

% Section reference, capital.

% Reference to two sections.

% Reference to three sections.

% Reference to an equation, lower-case.
\def\eqref#1{equation~\ref{#1}}
% Reference to an equation, upper case

% A raw reference to an equation---avoid using if possible

% Reference to a chapter, lower-case.

% Reference to an equation, upper case.

% Reference to a range of chapters

% Reference to an algorithm, lower-case.

% Reference to an algorithm, upper case.

% Reference to a part, lower case

% Reference to a part, upper case

\def\1{\bm{1}}

% Random variables

% rm is already a command, just don't name any random variables m

% Random vectors

% Elements of random vectors

% Random matrices

% Elements of random matrices

% Vectors

% Elements of vectors

% Matrix

% Tensor
\DeclareMathAlphabet{\mathsfit}{\encodingdefault}{\sfdefault}{m}{sl}
\SetMathAlphabet{\mathsfit}{bold}{\encodingdefault}{\sfdefault}{bx}{n}

% Graph

% Sets

\def\sC{{\mathbb{C}}}

% Don't use a set called E, because this would be the same as our symbol
% for expectation.

% Entries of a matrix

% entries of a tensor
% Same font as tensor, without \bm wrapper

% The true underlying data generating distribution

% The empirical distribution defined by the training set

% The model distribution

% Stochastic autoencoder distributions

 % Laplace distribution

% Wolfram Mathworld says $L^2$ is for function spaces and $\ell^2$ is for vectors
% But then they seem to use $L^2$ for vectors throughout the site, and so does
% wikipedia.

 % See usage in notation.tex. Chosen to match Daphne's book.

\DeclareMathOperator*{\argmax}{arg\,max}
\DeclareMathOperator*{\argmin}{arg\,min}

\usepackage{hyperref}
\usepackage{url}

\newcommand{\ie}{\textit{i.e.}\ }
\newcommand{\eg}{\textit{e.g.}\ }

\newtheorem{theorem}{Theorem}
\newtheorem{corollary}[theorem]{Corollary}
\newtheorem{lemma}[theorem]{Lemma}
\newtheorem{remark}[theorem]{Remark}
\newtheorem{definition}[theorem]{Definition}

\DeclareMathOperator{\Relu}{ReLU}
\DeclareMathOperator{\Range}{Range}
\DeclareMathOperator{\Null}{Null}

\newcommand{\pp}[2]{ \frac{\partial #1}{\partial #2} }
\newcommand{\dd}[2]{ \frac{\mathrm{d} #1}{\mathrm{d} #2} }

\newcommand{\dxdTh}{\partial {x}_{\Theta}/\partial \Theta}

\newcommand{\edits}[1]{#1}

\title{\edits{Differentiating Through Integer Linear Programs with Quadratic Regularization and Davis-Yin Splitting}}

% Authors must not appear in the submitted version. They should be hidden
% as long as the tmlr package is used without the [accepted] or [preprint] options.
% Non-anonymous submissions will be rejected without review.

\author{\name {Daniel McKenzie}   \email dmckenzie@mines.edu \\
      \addr Department of Applied Mathematics and Statistics\\
      Colorado School of Mines
      \AND
      \name Samy Wu Fung \email swufung@mines.edu \\
      \addr Department of Applied Mathematics and Statistics \\
      Department of Computer Science\\
      Colorado School of Mines
      \AND
      \name Howard Heaton   \email research@typal.academy\\
      \addr Typal Academy}

% The \author macro works with any number of authors. Use \AND 
% to separate the names and addresses of multiple authors.

  % Insert correct month for camera-ready version
 % Insert correct year for camera-ready version
 % Insert correct link to OpenReview for camera-ready version

\begin{document}

\maketitle

\begin{abstract}
In many applications, a combinatorial problem must be repeatedly solved with similar, but distinct parameters. Yet, the parameters $w$ are not directly observed; only contextual data $d$ that correlates with $w$ is available. It is tempting to use a neural network to predict $w$ given $d$. However, training such a model requires reconciling the discrete nature of combinatorial optimization with the gradient-based frameworks used to train neural networks. \edits{We study the case where} the problem in question is an Integer Linear Program (ILP). \edits{We propose applying a three-operator splitting technique, also known as Davis-Yin splitting (DYS), to the quadratically regularized continuous relaxation of the ILP. We prove that the resulting scheme is compatible with the recently introduced Jacobian-free backpropagation (JFB). Our experiments on two representative ILPs: the shortest path problem and the knapsack problem, demonstrate that this combination---DYS on the forward pass, JFB on the backward pass---yields a scheme which scales more effectively to high-dimensional problems than existing schemes.} All code associated with this paper is available at \url{https://github.com/mines-opt-ml/fpo-dys}.
\end{abstract}

\section{Introduction}
\label{sec: Introduction}
Many high-stakes decision problems in healthcare \citep{zhong2021preface}, logistics and scheduling~\citep{sbihi2010combinatorial,kacem2021preface}, and transportation~\citep{wang2021deep} can be viewed as a two step process. In the first step, one gathers data about the situation at hand. This data is used to assign a value (or cost) to outcomes arising from each possible action. The second step is to select the action yielding maximum value (alternatively, lowest cost). Mathematically, this can be framed as an optimization problem with a data-dependent cost function: 
\begin{equation}
    x^\star(d) \triangleq \argmin_{x \in \mathcal{X}} f(x;d).
    \label{eq:Combinatorial_Problem}
\end{equation} 
In this work, we focus on the case where $\mathcal{X}\subset \mathbb{R}^{n}$ is a finite constraint set and $f(x;d) = w(d)^{\top}x$ is a linear function. This class of problems is quite rich, containing the shortest path, traveling salesperson, and sequence alignment problems, to name a few. Given $w(d)$, solving \eqref{eq:Combinatorial_Problem} may be straightforward (\eg shortest path) or NP-hard (\eg traveling salesperson problem \citep{karp1972reducibility}). However, our present interest is settings where the dependence of $w(d)$ on $d$ is {\em unknown}.
In such settings, it is intuitive to {\em learn a mapping} $w_\Theta(d) \approx w(d)$ and then solve
\begin{equation}
    {x}_\Theta(d) \triangleq \argmin_{x \in \mathcal{X}} w_{\Theta}(d)^\top x 
    \label{eq:Param_Comb_Problem}
\end{equation}
in lieu of $w(d)$. The observed data $d$ is called the {\em context}. As an illustrative running example, consider the shortest path prediction problem shown in Figure~\ref{fig: FromBerthet}, which is studied in \citet{berthet2020learning} and \citet{poganvcic2019differentiation}.  

At first glance, it may appear natural to tune the weights $\Theta$ so as to minimize the difference between $w_{\Theta}(d)$ and $w(d)$. However, this is only possible if $w(d)$ is available at training time. Even if this approach is feasible, it may not be advisable. If $w_{\Theta}(d)$ is a near-perfect predictor of $w(d)$ we can expect $x_{\Theta}(d) \approx x^{\star}(d)$. However, if there are even small differences between $w_{\Theta}(d)$ and $w(d)$ this can manifest in wildly different solutions \citep{bengio1997using,wilder2019melding}. Thus, we focus on methods where $\Theta$ is tuned by minimizing the discrepancy between $x_{\Theta}(d)$ and $x^{\star}(d)$. \edits{Such methods are instances of the decision-focused learning paradigm \cite{mandi2023decision}, as the criterion we are optimizing for is the quality of $x_{\Theta}(d)$ (the ``decision'') not the fidelity of $w_{\Theta}(d)$ (the ``prediction'').}

The key obstacle in using gradient-based algorithms to tune $\Theta$ in such approaches is ``differentiating through'' the solution $x_{\Theta}(d)$ of \eqref{eq:Param_Comb_Problem} to obtain a gradient with which to update $\Theta$. Specifically, the combinatorial nature of  $\mathcal{X}$  may cause the solution ${x}_\Theta(d)$ to remain unchanged for many small perturbations to $\Theta$;
%, this is a nontrivial task. Specifically, for 
%many small perturbations of $\Theta$, the solution ${x}(d)$ will be unchanged; 
yet, for some perturbations   ${x}_\Theta(d)$ may ``jump'' to a different point in $\mathcal{X}$. Hence, the gradient $\mbox{d}{x}_\Theta\big/\mbox{d}w_{\Theta}$ is always either zero or undefined. To compute an informative gradient, we  follow \citet{wilder2019melding} by relaxing \eqref{eq:Param_Comb_Problem} to a quadratic program over the convex hull of $\mathcal{X}$  with an added regularizer (see \eqref{eq:Theta_Reg_Problem}). 
\begin{figure*}
    \centering
    \begin{tabular}{ccc}
    \includegraphics[width=0.3\textwidth]{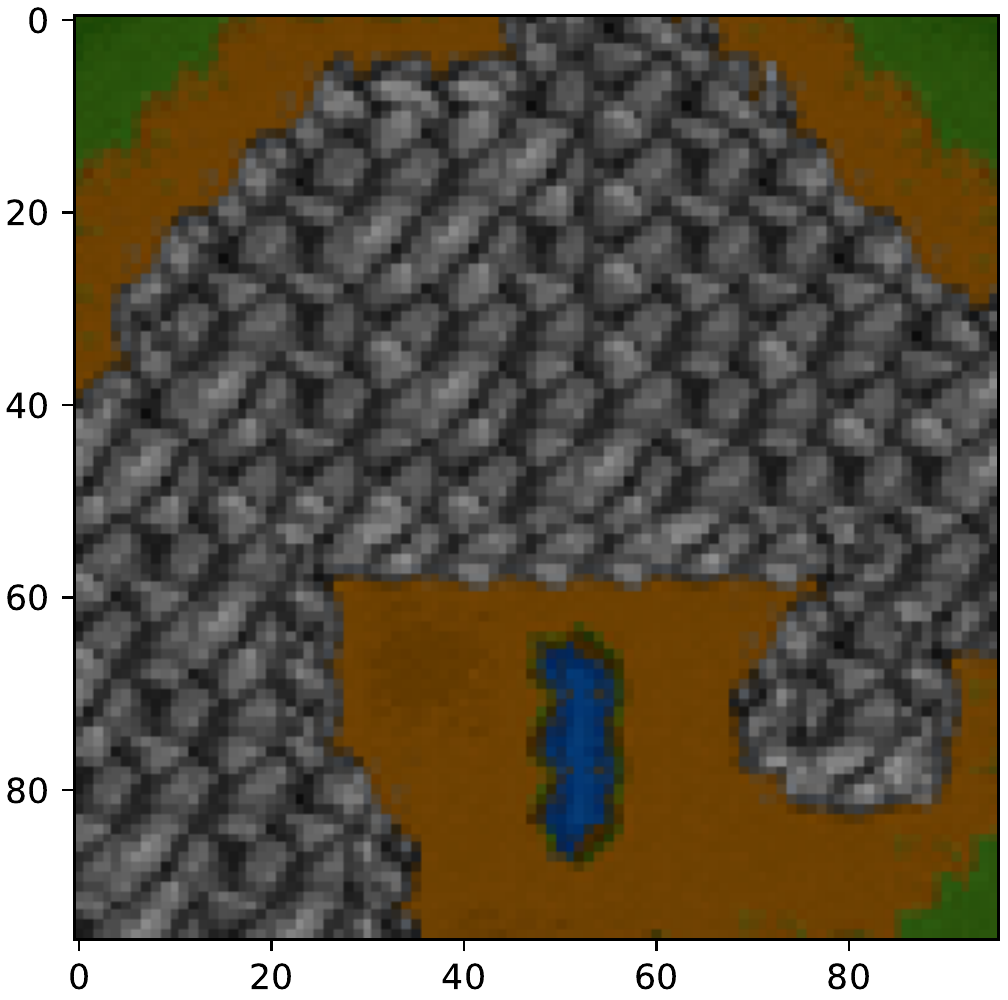} 
    &
    \includegraphics[width=0.3\textwidth]{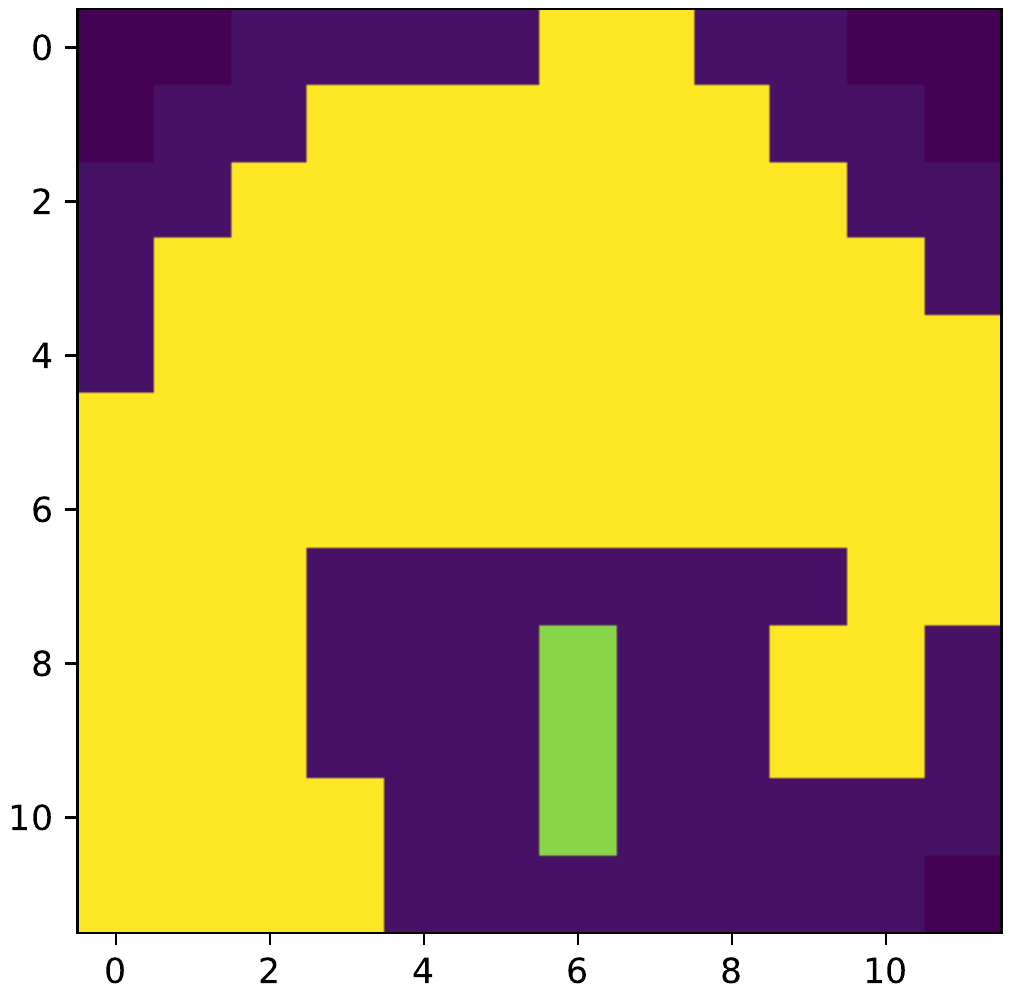}
    &
    \includegraphics[width=0.3\textwidth]{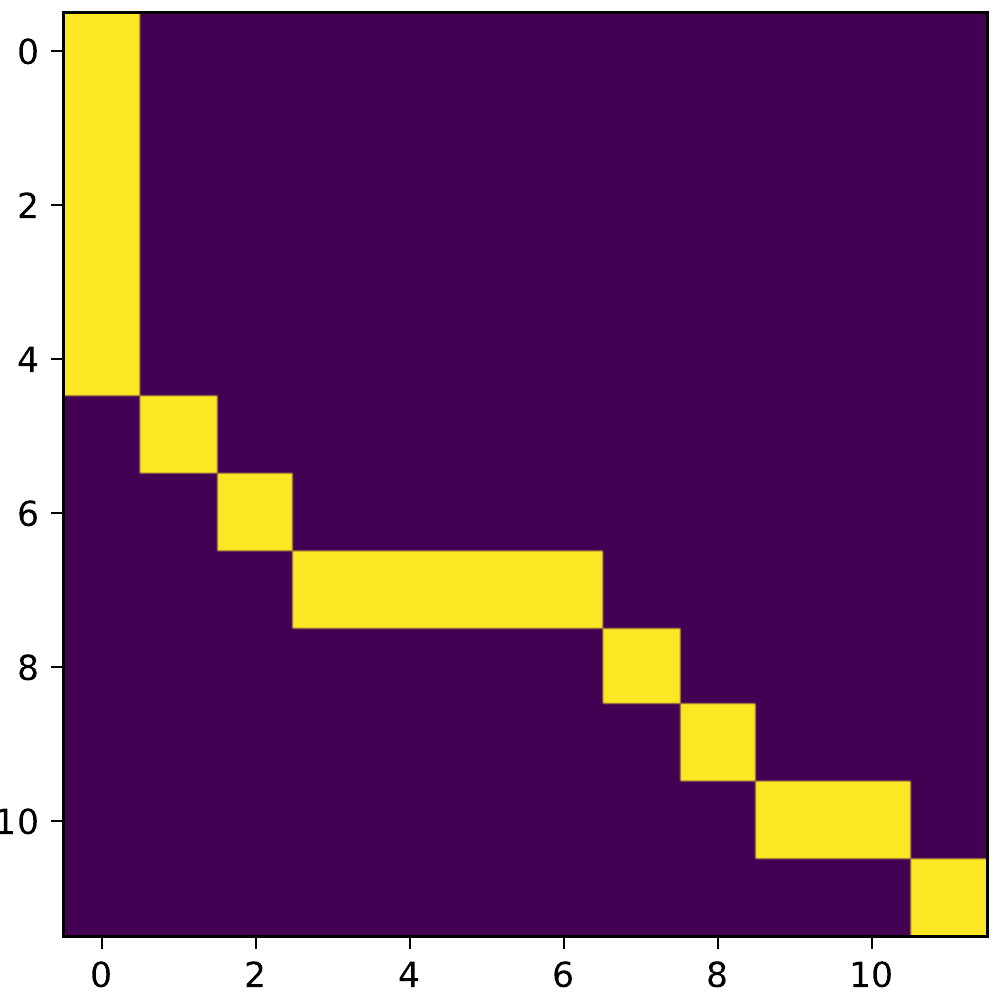}
    \end{tabular}
    \caption{The shortest path prediction problem \citep{poganvcic2019differentiation}. The goal is to find the shortest path (from top-left to bottom-right) through a randomly generated terrain map from the Warcraft II tileset \citep{guyomarchwarcraft}. The  contextual data $d$, shown in (a), is an image sub-divided into 8-by-8 squares, each representing a vertex in a 12-by-12 grid graph.  The cost of traversing each square, {\em i.e.} $w(d)$, is shown in (b), with darker shading representing lower cost. The true shortest path is shown in (c). 
    }
    \label{fig: FromBerthet}
\end{figure*}

\paragraph{Contribution}
% \todo{remove references to P\&O in this paragraph.}  
Drawing upon recent advances in convex optimization \citep{ryu2022large} and implicit neural networks \citep{fung2022jfb, mckenzie2021operator}, we propose a method designed specifically for ``differentiating through'' large-scale integer linear programs. Our approach is fast, easy to implement using our provided code, and, unlike several prior works (\eg see \citet{poganvcic2019differentiation, berthet2020learning}), trains completely on GPU. Numerical examples herein demonstrate our approach, run using only standard computing resources, easily scales to problems with tens of thousands of variables. Theoretically, we verify our approach computes an informative gradient via a refined analysis of Jacobian-free Backpropagation (JFB) \citep{fung2022jfb}. In summary, we do the following.
\begin{itemize}
    \item[$\vartriangleright$]
    Building upon \citet{mckenzie2021operator}, we use the three operator splitting technique of \citet{davis2017three} to propose \texttt{DYS-Net}.
    \item[$\vartriangleright$]
    We provide theoretical guarantees for differentiating through the fixed point of a certain non-expansive, but not contractive, operator.
    \item[$\vartriangleright$]
    We numerically show \texttt{DYS-Net} easily handles combinatorial problems with tens of thousands of variables.
\end{itemize}
\section{Preliminaries}
\label{sec:P-then-O}
% \todo[inline]{change title. maybe: I'd be fine with ``The Learning-to-Optimize Paradigm'' and follow Wotao's group's terminology.}
\paragraph{LP Reformulation} We focus on optimization problems of the form \eqref{eq:Combinatorial_Problem} where $f(x;d) = w(d)^{\top}x$ and $\mathcal{X}$ is the integer or binary points of a polytope, which we may assume to be expressed in standard form \citep{ziegler2012lectures}:
\begin{equation} 
    \mathcal{X} = \mathcal{C} \cap \mathbb{Z}^{n} 
    \ \  \text{ or } \ \  
    \mathcal{X} = \mathcal{C} \cap \{0,1\}^n,
    \quad \text{ where } \quad 
    \mathcal{C} = \left\{x \in \mathbb{R}^n:\ Ax = b \text{ and } x \geq 0\right\}.
    \label{eq:Polytope_standard_form} 
\end{equation}
In other words, \eqref{eq:Combinatorial_Problem} is an Integer Linear Program (ILP). 
Similar to works by \citet{elmachtoub2022smart,wilder2019melding,mandi2020interior} and others we replace the model \eqref{eq:Param_Comb_Problem} with its continuous relaxation, and redefine
\begin{equation}
    {x}_\Theta(d) \triangleq  \argmin_{x \in \mathcal{C}} w_{\Theta}(d)^\top x
    \label{eq: Param_LP_problem}
\end{equation}
as a step towards making the computation of $\mbox{d}{x}_\Theta\big/\mbox{d}w_{\Theta}$ feasible.

\paragraph{Losses and Training Data} 
We assume access to training data in the tuple form $(d, x^\star(d))$. \edits{Such data may be obtained by observing experienced agents solve \eqref{eq:Combinatorial_Problem}, for example taxi drivers in a shortest path problem \cite{piorkowski2009crawdad}.} We focus on the $\ell_2$ loss:
\begin{equation} 
    \mathcal{L}(\Theta) \triangleq \mathbb{E}_{d\sim \mathcal{D}} \left[\ell_2(\Theta; d)\right], \\
    \quad \text{ where } \quad 
    \ell_2(\Theta; d) = \|x^\star(d) - {x}_{\Theta}(d)\|^2,
    \label{eq:def_end_to_end_loss} 
\end{equation}
and $\mathcal{D}$ is the distribution of contextual data, although more sophisticated losses can be used with only superficial changes to our results. In principle we select the optimal weights by solving $\argmin_\Theta \mathcal{L}(\Theta)$. In practice, the population risk is inaccessible, and so we minimize empirical risk instead \citep{vapnik1999nature}:
\begin{equation}
    \argmin_{\Theta} \frac{1}{N}\sum_{i=1}^{N} \ell_2\left(\Theta; d_i\right).
\end{equation}

\paragraph{Argmin Differentiation} For notational brevity, we temporarily suppress the dependence on $d$. The gradient of $\ell_2$ with respect to $\Theta$, evaluated on a single data point is
\begin{equation*}
    \dd{}{\Theta}\left[\ell_2(\Theta)\right] = \dd{}{\Theta} \left[ \|x_\Theta - x^\star \|^2 \right]
    =  ( x_\Theta - x^\star)^\top   \pp{x_\Theta}{w_\Theta} \dd{w_\Theta}{\Theta} .
    %\label{eq: lA-gradient}
\end{equation*}
%Difficulty in tunting $\Theta$ arises when gradient descent is the computation  of $\partial {x}_{\Theta}/\partial w$. 
As discussed in Section~\ref{sec: Introduction}, ${x}^\star_{\Theta}$ is piecewise constant as a function of $w_{\Theta}$, and this remains true for the LP relaxation \eqref{eq: Param_LP_problem}. Consequently, for all $w_\Theta$, either $\dxdTh = 0$ or $\dxdTh$ is undefined---neither case yields an informative gradient. To remedy this, \citet{wilder2019melding,mandi2020interior} propose adding a small amount of regularization to the objective function in \eqref{eq: Param_LP_problem} to make it strongly convex. This ensures ${x}_{\Theta}$ is a continuously differentiable function of $w_\Theta$. We add a small quadratic regularizer, modulated by $\gamma \geq 0$, to obtain the objective 
\begin{equation}
    f_{\Theta}(x;\gamma,d) \triangleq w_{\Theta}(d)^\top x + \gamma \|x\|_2^2
    \label{eq:objective-regularized}
\end{equation}
and henceforth replace \eqref{eq: Param_LP_problem} by  
\begin{equation}
   {x}_{\Theta}(d) \triangleq  \argmin_{x\in\mathcal{C}} f_\Theta(x; \gamma, d).
    \label{eq:Theta_Reg_Problem}
\end{equation}
During training, we aim to solve \eqref{eq:Theta_Reg_Problem} {\em and} efficiently compute the derivative  $\dxdTh$.

\edits{\paragraph{Pitfalls of Relaxation} Finally, we offer a note of caution when using the quadratically regularized problem \eqref{eq:Theta_Reg_Problem} as a proxy for the original integer linear program \eqref{eq:Combinatorial_Problem}. Temporarily defining
\begin{equation*}
 \hat{x}_\Theta(d) \triangleq \argmin_{x \in \mathcal{X}} w_{\Theta}(d)^\top x  \quad  \tilde{x}_\Theta(d) \triangleq  \argmin_{x \in \mathcal{C}} w_{\Theta}(d)^\top x \quad {x}_{\Theta}(d) \triangleq  \argmin_{x\in\mathcal{C}} f_\Theta(x; \gamma, d),
\end{equation*}
we observe that two sources of error have arisen; one from replacing $\mathcal{X}$ with $\mathcal{C}$ (i.e. the discrepancy between $\hat{x}_{\Theta}$ and $\tilde{x}_{\Theta}$), and one from replacing the linear objective function with its quadratically regularized counterpart ({\em i.e.} the discrepancy between $\tilde{x}_{\Theta}$ and $x_{\Theta}$). When $A$ is {\em totally unimodular}, $\tilde{x}_{\Theta}$ is guaranteed to be integral, in which case $\hat{x}_{\Theta} = \tilde{x}_{\Theta}$. Moreover, \citet[Theorem 2]{wilder2019melding} shows that
\begin{equation*}
    0 \leq w_{\Theta}(d)^{\top}\left(x_{\Theta}(d) - \tilde{x}_{\Theta}(d)\right) \leq \gamma\max_{x,y\in\mathcal{C}}\|x - y\|^2
\end{equation*}
whence, as in the proof \citet[Proposition 2.2]{berthet2020learning}, $\lim_{\gamma \to 0^{+}}x_{\Theta}(d) = \tilde{x}_{\Theta}(d)$ (at least when $\mathcal{C}$ is compact). Thus, for totally unimodular ILPs ({\em e.g.} the shortest path problem), it is reasonable to expect that $x_{\Theta}(d)$ obtained from \eqref{eq:Theta_Reg_Problem}, for well-tuned $\Theta$ and small $\gamma$, are of acceptable quality. For ILPs which are not totally unimodular no such theoretical guarantees exist. However, we observe that this relaxation works reasonably well, at least for the knapsack problem.}

\section{Related Works}
\label{sec:related_works}
We highlight recent works in \edits{decision-focused learning---particularly for mixed integer programs---and related areas.}

\paragraph{Optimization Layers} Introduced in \citet{amos2017optnet}, and studied further in \citet{agrawal2019differentiable,agrawal2019differentiating,bolte2021nonsmooth,blondel2022efficient}, an {\em optimization layer} is a modular component of a deep learning framework where forward propagation consists of solving a parametrized optimization problem. Consequently, backpropagation entails differentiating the solution to this problem with respect to the parameters. Optimization layers are a promising technology as they are able to incorporate hard constraints into deep neural networks. Moreover, as their output is a (constrained) minimizer of the objective function, it is easier to interpret than the output of a generic layer.  
 
\paragraph{\edits{Decision-focused learning for LPs}} 
\edits{When an optimization layer is fitted to a data-dependent optimization problems of the form \eqref{eq:Combinatorial_Problem}, with the goal of maximizing the quality of the predicted solution, this is usually referred to as {\em decision-focused learning.}} We narrow our focus to the case where the objective function in \eqref{eq:Combinatorial_Problem} is linear.\citet{wilder2019melding} and \citet{elmachtoub2022smart} are two of the earliest works applying deep learning techniques to data-driven LPs, and delineate two fundamentally different approaches. Specifically, \citet{wilder2019melding} proposes replacing the LP with a continuous and strongly convex relaxation, as described in Section~\ref{sec:P-then-O}. This approach is extended to ILPs in \citet{ferber2020mipaal}, and to non-quadratic regularizers in \citet{mandi2020interior}. On the other hand, \citet{elmachtoub2022smart} avoid modifying the underlying optimization problem and instead propose a new loss function; dubbed the Smart Predict-then-Optimize (SPO) loss, to be used instead of the $\ell_2$ loss. We emphasize that the SPO loss requires access to the true cost vectors $w(d)$, a setting which we do not consider.\\
Several works define a continuously differentiable proxy for the solution to the {\em unregularized} LP \eqref{eq: Param_LP_problem}, which we rewrite here as\footnote{For sake of notational simplicity, the dependence on $d$ is implicit.}
\begin{equation}
    x^{\star}(w) = \argmin_{x \in \mathcal{C}} w^{\top}x,
    \label{eq:LP_problem_g}
\end{equation}
In \citet{berthet2020learning}, a stochastic perturbation is considered:
\begin{equation}
    x^{\star}_{\varepsilon}(w) = \mathbb{E}_{Z}\left[\argmin_{x \in \mathcal{C}} \left(w + \varepsilon Z\right)^{\top}x \right],
    \label{eq:berthet_relaxation}
\end{equation}
which is somewhat analogous to Nesterov-Spokoiny smoothing \citep{nesterov2017random} in zeroth-order optimization. \edits{For some draws of $Z$ the additively perturbed cost vector $w + \varepsilon z$ may have negative entries, which may cause a solver ({\em e.g.} Dijkstra's algorithm) applied to the associated LP to fail. To avoid this, \cite{dalle2022learning} suggest using a {\em multiplicative perturbation} of $w$ instead.}
\citet{poganvcic2019differentiation} define a piecewise-affine interpolant to $\ell(x^{\star}(w))$, where $\ell$ is a suitable loss function. \\
We note a bifurcation in the literature; \citet{wilder2019melding,ferber2020mipaal,elmachtoub2022smart} assume access to training data of the form $(d, w(d))$, whereas \citet{poganvcic2019differentiation,berthet2020learning,sahoo2022gradient} use training data of the form $(d, x^{\star}(d))$. \edits{The former setting is frequently referred to as the {\em predict-then-optimize} problem.} Our focus is on the latter setting. We refer the reader to \citet{kotary2021end,mandi2023decision} for recent \edits{surveys} of this area.

\paragraph{Deep Equilibrium Models} \citet{bai2019deep,el2021implicit} propose the notion of {\em deep equilibrium model} (DEQ), also known as an {\em implicit neural network}. A DEQ is a neural network for which forward propagation consists of (approximately) computing a fixed point of a parametrized operator. We note that \eqref{eq:Theta_Reg_Problem} may be reformulated  as a fixed point problem,
\begin{equation}
    \text{Find } {x}_{\Theta} \text{ such that } {x}_{\Theta} = P_{\mathcal{C}}\left({x}_{\Theta} - \alpha \nabla_x f_\Theta ({x}_{\Theta};d)\right),
\end{equation} 
where $P_{\mathcal{C}}$ is the orthogonal projection\footnote{For a set $\mathcal{A} \subseteq \mathbb{R}^n$, $P_{\mathcal{A}}(x) \triangleq \argmin_{z\in\mathcal{A}} \|z - x\|$.} onto $\mathcal{C}$. Thus, DEQ techniques may be applied to constrained optimization layers \citep{chen2021enforcing,blondel2022efficient,mckenzie2021operator}. However, the cost of computing $P_{\mathcal{C}}$ can be prohibitive, see the discussion in Section \ref{sec:proposed_work}. 

\paragraph{Learning-to-Optimize (L2O)}
Our work \edits{is related to the} learning-to-optimize (L2O) framework~\citep{chen2021learning, li2017learning, chen2018theoretical, liu2023towards}, where an optimization algorithm is learned and its outputs are used to perform inferences. However, traditional L2O methods use a fixed number of layers (\ie unrolled algorithms). Our approach attempts to differentiate through the solution $x^\star_\Theta$ and is therefore most closely aligned with works that employ implicit networks within the L2O framework~\citep{amos2017optnet, heaton2023explainable, mckenzie2021operator,gilton2021deep,liu2022online}. We highlight that, unlike many L2O works, our goal is not to learn a faster optimization method for fully specified problems. Rather, we seek to solve partially specified problems given contextual data by combining learning and optimization techniques.

\paragraph{Computing the derivative of a minimizer with respect to parameters}
In all of the aforementioned works, the same fundamental problem is encountered: $\dxdTh$ must be computed where $x_{\Theta}$ is the solution of a (constrained) optimization problem with objective function parametrized by $\Theta$. The most common approach to doing so, proposed in \citet{amos2017optnet} and used in \citet{ruthotto2018optimal,wilder2019melding,mandi2020interior,ferber2020mipaal}, starts with the KKT conditions for constrained optimality:
\begin{equation*}
    \frac{\partial f_{\Theta}}{\partial x}({x}_{\Theta}) + A^\top\hat{\lambda} + \hat{\nu} = 0,
    \ 
    A{x} - b = 0,
    \  
    D(\hat{\nu}){x}_{\Theta} = 0,
\end{equation*}
where $\hat{\lambda}$ and $\hat{\nu} \geq 0$ are Lagrange multipliers associated to the optimal solution ${x}_{\Theta}$ \citep{bertsekas1997nonlinear} and $D(\hat{\nu})$ is a matrix with $\hat{\nu}$ along its diagonal. Differentiating these equations with respect to $\Theta$ and rearranging yields
\begin{equation}
    \begin{bmatrix}
        \frac{\partial^2 f_{\Theta}}{\partial x^2} & A & I \\[4pt] 
        A^{\top} & 0 & 0 \\[4pt] 
        D(\hat{\nu}) & 0 & D({x}_{\Theta})  
    \end{bmatrix} 
    \begin{bmatrix} 
        \frac{d{x}_{\Theta}}{d\Theta} \\[4pt] 
        \frac{d\hat{\lambda}}{d\Theta} \\[4pt] 
        \frac{d\hat{\nu}}{d\Theta} 
    \end{bmatrix} 
    = \begin{bmatrix}
        \frac{\partial^2 f_{\Theta}}{\partial x \partial \Theta} \\[4pt]
        0 \\[4pt] 
        0 
    \end{bmatrix}.
    \label{eq:KKT_Der_Eq}
\end{equation}
The matrix and right hand side vector in   \eqref{eq:KKT_Der_Eq} are computable, thus enabling one to solve for $\frac{d{x}_{\Theta}}{d\Theta}$ (as well as $\frac{d\hat{\lambda}}{d\Theta}$ and $\frac{d\hat{\nu}}{d\Theta}$). \edits{We emphasize that both primal ({\em i.e.}, $x_{\Theta}$) and dual ({\em i.e.}, $\hat{\lambda}$ and $\hat{\nu}$) variables at optimality are required. This constrains the choice of optimization schemes to those that track primal and dual variables, and prohibits the use of fast, primal-only schemes. Following \citep{amos2017optnet}, most works employing this approach use a primal-dual interior point method, which computes acceptable approximations to $x_{\Theta}$, $\hat{\lambda}$, and $\hat{\nu}$ at a cost of $\mathcal{O}\left(\max\{n,m\}^3\right)$, assuming $x\in\mathbb{R}^n$ 
and $A \in \mathbb{R}^{m\times n}$.} \\
Using the stochastic proxy \eqref{eq:berthet_relaxation}, \citet{berthet2020learning} derive a formula for $d x^{\star}_{\varepsilon}/dw$ which is also an expectation, and hence can be efficiently approximated using Monte Carlo methods. \citet{poganvcic2019differentiation} show that the gradients of their interpolant are strikingly easy to compute, requiring just one additional solve of \eqref{eq:LP_problem_g} with perturbed cost $w^\prime$. This approach is extended by \citet{sahoo2022gradient} which proposes to avoid computing $d x^{\star}_{\varepsilon}/dw$ entirely, replacing it with the negative identity matrix. This is similar in spirit to our use of Jacobian-free Backpropagation (see Section~\ref{sec:proposed_work}).

\section{DYS-Net}
\label{sec:proposed_work}
We now introduce our proposed model, {\tt DYS-net}. We use this term to refer to the model {\em and} the training procedure. Fixing an architecture for $w_{\Theta}$, and an input $d$, {\tt DYS-net} computes an approximation to $x_{\Theta}(d)$ in a way that is easy to backpropagate through: 

\begin{equation}
    \texttt{DYS-net}(d;\ \Theta) \approx x_{\Theta}
    \triangleq \argmin_{x\in \mathcal{C}} f_{\Theta}(x;\gamma, d).
    \label{eq:definition-DYS}
\end{equation}

\paragraph{The Forward Pass} As we wish to compute ${x}_{\Theta}$ and $\dxdTh$ for high dimensional settings ({\em i.e.} large $n$), we eschew second-order methods ({\em e.g.} Newton's method) in favor of first-order methods such as projected gradient descent (PGD). With PGD, a sequence $\{x^k\}$ of estimates of $x_\Theta$ are computed so that $x_\Theta = \lim_{k\rightarrow\infty} x^k$ where
\begin{equation*}
x^{k+1} = P_{\mathcal{C}}\left(x^k - \alpha \nabla_x f(x^{k}; \gamma, d)\right) \quad \text{for all} \ k\in\mathbb{N}.
\end{equation*} 
This approach works for simple sets $\mathcal{C}$ for which there exists an explicit form of $P_{\mathcal{C}}$, {\em e.g.} when $\mathcal{C}$ is the probability simplex \citep{duchi2008efficient, condat2016fast, li2021simplex}. However, for general polytopes $\mathcal{C}$ no such form exists, thereby requiring a second iterative procedure to compute $P_{\mathcal{C}}(x^k)$. \edits{This projection must be computed at every iteration of every forward pass for every sample of every epoch, and this dominates the computational cost, see \citet[Appendix 4]{mckenzie2021operator}.}

% \edits{We reiterate that the Davis-Yin splitting is a three operator splitting technique, and hence is able to decouple the polytope constraints into
% relatively simple constraints. Other two-operator splitting techniques (e.g. forward-backward, Douglas-
% Rachford, etc) may not be able to do so. This is significant because the projection onto $\mathcal{C}$ needs to be
% computed at every iteration of every forward pass for every sample of every epoch, and this dominates the }

To avoid this expense, we draw on recent advances in the convex optimization literature, particularly Davis-Yin splitting or DYS \citep{davis2017three}. \edits{DYS is a three operator splitting technique, and hence is able to decouple the polytope constraints into relatively simple constraints \citet{davis2017three}. This technique has been used elsewhere, {\em e.g.} \citet{pedregosa2018adaptive,yurtsever2021three}.} Specifically, we adapt the architecture incorporating DYS proposed in \citet{mckenzie2021operator}. To this end, we rewrite $\mathcal{C}$ as
\begin{equation} 
    \mathcal{C} = \{x:\ Ax = b \text{ and } x \geq 0\} 
    = \underbrace{\{x: \ Ax = b\}}_{\triangleq  \, \mathcal{C}_1} \cap \underbrace{\{x: \ x\geq 0\}}_{\triangleq  \, \mathcal{C}_2}
    = \mathcal{C}_1 \cap \mathcal{C}_2.
    \label{eq:CanonicalPolytope} 
\end{equation}
While $P_{\mathcal{C}}$ is hard to compute, both $P_{\mathcal{C}_1}$ and $P_{\mathcal{C}_2}$ can be computed cheaply via explicit formulas, once a singular value decomposition (SVD) is computed for $A$. We verify this via the following lemma (included for completeness, as the two results are already known).

\def\LemmaProjections{If $\mathcal{C}_1, \mathcal{C}_2$ are as in \eqref{eq:CanonicalPolytope} and $A$ is full-rank then:
    %If $\mathcal{C}_1 \triangleq \{x: \ Ax = b\}, \mathcal{C}_2 \triangleq \{x: \ x\geq 0\}$ and , then
    \begin{enumerate}
        \item $P_{\mathcal{C}_1}(z) = z - A^\dagger(Az - b)$, where $A^\dagger = V \Sigma^{-1} U^\top$ and $U\Sigma V^\top$ is the compact SVD of $A$. % \ie $U$ and $V$ have orthonormal columns and $\Sigma$ is invertible;
        
        \item $P_{\mathcal{C}_2}(z)= \Relu(z) \triangleq \max\{0,z\}$.%, applied element-wise.
    \end{enumerate}
}

\begin{lemma}
\label{lemma:Formulas_for_projections}
    \LemmaProjections
\end{lemma}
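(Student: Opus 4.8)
The plan is to handle the two parts independently, exploiting that $\mathcal{C}_1$ is an affine subspace while $\mathcal{C}_2$ is the nonnegative orthant, and in each case to solve the defining least-squares problem $P_{\mathcal{C}_i}(z) = \argmin_{x \in \mathcal{C}_i} \tfrac12\|x - z\|^2$ explicitly.

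For part 1, I would first note that since $A$ is full-rank (hence full row rank, so $AA^\top$ is invertible and $\mathcal{C}_1 \neq \emptyset$), the problem $\min \tfrac12\|x-z\|^2$ subject to $Ax = b$ has a strictly convex objective over a nonempty affine set, so it admits a unique minimizer characterized by the Lagrange conditions. Introducing a multiplier $\lambda$, stationarity gives $x - z + A^\top \lambda = 0$, i.e. $x = z - A^\top\lambda$; substituting into $Ax = b$ yields $AA^\top\lambda = Az - b$, hence $\lambda = (AA^\top)^{-1}(Az-b)$ and therefore $P_{\mathcal{C}_1}(z) = z - A^\top(AA^\top)^{-1}(Az - b)$. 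The remaining step is to identify $A^\top(AA^\top)^{-1}$ with $A^\dagger$: writing the compact SVD $A = U\Sigma V^\top$ (with $\Sigma$ square and invertible by the full-rank hypothesis, $U^\top U = I$, $V^\top V = I$), one computes $AA^\top = U\Sigma^2 U^\top$, so $A^\top(AA^\top)^{-1} = V\Sigma U^\top U \Sigma^{-2} U^\top = V\Sigma^{-1}U^\top$, which is the claimed formula for $A^\dagger$.

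For part 2, I would observe that $\|x - z\|^2 = \sum_{i} (x_i - z_i)^2$ decouples across coordinates and the constraint $x \geq 0$ is likewise separable, so the minimization splits into $n$ independent scalar problems $\min_{x_i \geq 0} (x_i - z_i)^2$. Each has solution $x_i = z_i$ when $z_i \geq 0$ and $x_i = 0$ when $z_i < 0$, i.e. $x_i = \max\{0, z_i\}$; stacking the coordinates gives $P_{\mathcal{C}_2}(z) = \Relu(z)$.

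I do not anticipate a genuine obstacle here, as both results are classical; the only points requiring care are the SVD bookkeeping in part 1 (the shapes of $U,\Sigma,V$ in the \emph{compact} SVD, and why the full-rank assumption makes $\Sigma$ and $AA^\top$ invertible and $\mathcal{C}_1$ nonempty) and, if one wants full rigor, invoking first-order optimality conditions for a convex problem with affine constraints (no constraint qualification issue arises) to conclude the stationary point is the global minimizer.
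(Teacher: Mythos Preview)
Your proposal is correct and is the standard textbook derivation. The paper does not actually supply a proof of this lemma---it is stated ``for completeness, as the two results are already known''---so there is no proof in the paper to compare against; your Lagrangian/KKT computation for $P_{\mathcal{C}_1}$ together with the SVD identification $A^\top(AA^\top)^{-1}=V\Sigma^{-1}U^\top$, and the coordinate-wise argument for $P_{\mathcal{C}_2}$, are exactly what one would expect.
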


\edits{We remark that the SVD of $A$ is computed offline, and is a once-off cost. Alternatively, one could compute $P_{\mathcal{C}_1}$ by using an iterative method ({\em e.g.} GMRES) to find the (least squares) solution to $Av = Az - b $, whence $P_{\mathcal{C}_1}(z) = z - v$, but this would need to be done at each iteration of each forward pass for every sample of every epoch. Thus, it is unclear whether this procides computational savings over the once-off computation of the SVD. However, this could prove useful when $A$ is only provided implicitly, {\em i.e.} via access to a matrix-vector product oracle.}

%Further splitting of $\mathcal{C}_1$ can yield even simpler projections, see \citet{mckenzie2021operator}.
\noindent The following theorem allows one to approximate $x_{\Theta}$ using only $P_{\mathcal{C}_1}$ and $P_{\mathcal{C}_2}$, not $P_{\mathcal{C}}$.

\def\mainTextApplyT{1}
\def\mainTextTDYSParam{1}
\def\thmDYSFP{
Let $\mathcal{C}_1,\mathcal{C}_2$ be as in \eqref{eq:CanonicalPolytope}, and suppose $f_{\Theta}(x;\gamma,d) = w_{\Theta}(d)^\top x + \frac{\gamma}{2} \|x\|_2^2$ for any neural network $w_{\Theta}(d)$. For any $\alpha \in (0, 2/\gamma)$ define the sequence $\{z^k\}$ by:
\begin{equation}
    z^{k+1} = T_{\Theta}(z^k) \quad \text{ for all } k \in \mathbb{N}
    \ifthenelse{\mainTextApplyT = 1}
    {
        \label{eq:Apply_T_maintext}
        \xdef\mainTextApplyT{0}
    }{} 
\end{equation}
where 
%\begin{small}
\begin{equation}
   T_\Theta(z)\!\!  \triangleq z \!-\! P_{\mathcal{C}_2}(z) + P_{\mathcal{C}_1}\!\left(\left(2 - \alpha\gamma\right)P_{\mathcal{C}_2}(z)\! -\! z\! -\! \alpha w_{\Theta}(d)\right). 
    \ifthenelse{\mainTextTDYSParam = 1}
    {
        \label{eq: T-DYS-Param_simpler_maintext}
        \xdef\mainTextTDYSParam{0} 
    }{}    
\end{equation}
%\end{small}
If $x^k \triangleq P_{\mathcal{C}_2}(z^{k})$ then \edits{the sequence $\{x^k\}$ converges to $x_\Theta$ in equation \ref{eq:Theta_Reg_Problem} and} $\|x^{k+1}- x^{k}\|^2_2 = O(1/k)$.
}
\begin{theorem}
\label{thm:DYS_Fixed_Point}
\thmDYSFP

\label{thm:Use_DYS}
\end{theorem}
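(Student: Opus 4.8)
The plan is to recognize $T_\Theta$ as (a reparametrization of) the Davis--Yin splitting operator applied to the sum of three convex functions and then invoke the standard convergence theory for DYS. First I would set up the minimization problem \eqref{eq:Theta_Reg_Problem} as $\min_x g_1(x) + g_2(x) + h(x)$ with $g_1 = \iota_{\mathcal{C}_1}$ the indicator of $\{x : Ax = b\}$, $g_2 = \iota_{\mathcal{C}_2}$ the indicator of $\{x \geq 0\}$, and $h(x) = w_\Theta(d)^\top x + \tfrac{\gamma}{2}\|x\|_2^2$, which is $\gamma$-smooth (indeed $\nabla h(x) = w_\Theta(d) + \gamma x$ is $\gamma$-Lipschitz) and $\gamma$-strongly convex. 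The DYS iteration with stepsize $\alpha$ is
\begin{equation*}
  x_g = \mathrm{prox}_{\alpha g_2}(z), \quad
  x_h = \mathrm{prox}_{\alpha g_1}\!\bigl(2x_g - z - \alpha\nabla h(x_g)\bigr), \quad
  z^+ = z + x_h - x_g.
\end{equation*}
Since $g_1, g_2$ are indicators, $\mathrm{prox}_{\alpha g_i} = P_{\mathcal{C}_i}$, and $\nabla h(x_g) = w_\Theta(d) + \gamma x_g = w_\Theta(d) + \gamma P_{\mathcal{C}_2}(z)$, so $2x_g - z - \alpha \nabla h(x_g) = (2 - \alpha\gamma)P_{\mathcal{C}_2}(z) - z - \alpha w_\Theta(d)$; substituting into $z^+ = z - x_g + x_h$ recovers exactly \eqref{eq: T-DYS-Param_simpler_maintext}. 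So $z^{k+1} = T_\Theta(z^k)$ is precisely the DYS iteration, and Lemma~\ref{lemma:Formulas_for_projections} gives the closed forms for $P_{\mathcal{C}_1}, P_{\mathcal{C}_2}$.

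Next I would apply the convergence theorem of \citet{davis2017three}: when $h$ is $L$-smooth and $\alpha \in (0, 2/L)$, the operator $T_\Theta$ is averaged (in particular nonexpansive), its fixed-point set is nonempty (the polytope $\mathcal{C} = \mathcal{C}_1 \cap \mathcal{C}_2$ is assumed compact and the regularized objective has a unique minimizer by strong convexity), and the shadow sequence $x^k = P_{\mathcal{C}_2}(z^k)$ converges to a minimizer of $g_1 + g_2 + h$, i.e. to $x_\Theta$ from \eqref{eq:Theta_Reg_Problem}. Here $L = \gamma$, which is why the stepsize range is $(0, 2/\gamma)$. Strong convexity of $h$ guarantees the minimizer is unique, so the limit is unambiguously $x_\Theta$; one should also check a constraint-qualification/feasibility condition so that $\mathrm{prox}$ of the indicator sum behaves correctly, but $\mathcal{C}_1 \cap \mathrm{ri}(\mathcal{C}_2) \neq \emptyset$ holds for any feasible ILP of interest (it is the assumption that $\mathcal{C}$ contains a strictly positive point, or can be reduced to that case).

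For the rate $\|x^{k+1} - x^k\|_2^2 = O(1/k)$, I would use that $T_\Theta$ is $\beta$-averaged for some $\beta \in (0,1)$ depending on $\alpha\gamma$, hence the fixed-point residuals $\|z^{k+1} - z^k\|_2^2$ are summable with partial sums bounded by a constant times $\|z^0 - z^\star\|_2^2 \beta/(1-\beta)$; since the sequence $\|z^{k+1} - z^k\|_2$ is also monotonically nonincreasing (a standard consequence of nonexpansiveness of $T_\Theta$: $\|z^{k+1} - z^k\| = \|T_\Theta z^k - T_\Theta z^{k-1}\| \leq \|z^k - z^{k-1}\|$), summability plus monotonicity forces $\|z^{k+1} - z^k\|_2^2 = O(1/k)$. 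Finally $x^k = P_{\mathcal{C}_2}(z^k)$ and $P_{\mathcal{C}_2}$ is $1$-Lipschitz, so $\|x^{k+1} - x^k\|_2 \leq \|z^{k+1} - z^k\|_2$ and the same $O(1/k)$ bound transfers to the shadow sequence.

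The main obstacle I anticipate is not any single hard estimate but rather the bookkeeping of matching the precise algebraic form of $T_\Theta$ in \eqref{eq: T-DYS-Param_simpler_maintext} to the canonical DYS operator — in particular being careful that the gradient step uses $\nabla h$ evaluated at $x_g = P_{\mathcal{C}_2}(z)$ (not at $z$), which is exactly what produces the $(2 - \alpha\gamma)$ coefficient, and confirming the stepsize bound $2/\gamma$ lines up with the Lipschitz constant of $\nabla h$. Once that identification is clean, the convergence of the shadow sequence and the $O(1/k)$ residual rate follow directly from the cited DYS analysis together with the elementary monotonicity-plus-summability argument above.
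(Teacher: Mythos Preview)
Your proposal is correct and follows essentially the same route as the paper: identify $T_\Theta$ with the Davis--Yin operator for $\iota_{\mathcal{C}_1}+\iota_{\mathcal{C}_2}+h$, use $\gamma$-smoothness (equivalently, $1/\gamma$-cocoercivity via Baillon--Haddad, which is what the paper invokes) to get averagedness for $\alpha\in(0,2/\gamma)$ and hence convergence of the shadow sequence, then obtain the $O(1/k)$ squared-residual rate from averagedness and transfer it to $x^k$ by $1$-Lipschitzness of $P_{\mathcal{C}_2}$. The only cosmetic difference is that the paper cites the wrapped results in \citet{mckenzie2021operator} and \citet{ryu2022large} rather than \citet{davis2017three} directly, and it does not spell out the summability-plus-monotonicity step you give for the rate; your version is slightly more self-contained but not a different argument.
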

\noindent A full proof is included in Appendix~\ref{appendix:proofs}; here we provide a proof sketch. Substituting the gradient
\begin{equation*}
    \nabla_z f_{\Theta}(z;\gamma,d) = w_{\Theta}(d) + \gamma z
\end{equation*}
for $F_{\Theta}(\cdot,d)$ into the formula for $T_{\Theta}(\cdot)$ given in \citet[Theorem 3.2]{mckenzie2021operator} and rearranging yields \eqref{eq: T-DYS-Param_simpler_maintext}. Verifying the assumptions of \citet[Theorem 3.3]{mckenzie2021operator} are satisfied is straightforward. For example, $\nabla_z f_{\Theta}(z;\gamma,d)$ is $1/\gamma$-cocoercive by the Baillon-Haddad theorem \citep{baillon1977quelques,bauschke2009baillon}. \\
The forward pass of {\tt DYS-net} consists of iterating \eqref{eq:Apply_T_maintext} until a suitable stopping criterion is met. For simplicity, in presenting Algorithm~\ref{alg:Alg1} we assume this to be a maximum number of iterations is reached. \edits{One may also consider a stopping criterion of $\|z_{k+1} - z_k\| \leq \mathrm{tol}$.}

% The simplified expression for $T_{\Theta}$ given in \eqref{eq: T-DYS-Param_simpler} will be useful later. The next result shows that the simple fixed point iteration method, applied with $T_{\Theta}$, will converge for small enough $\alpha$ (see \citep[Sec 2.2.1]{ryu2022large} for a proof).
% \begin{corollary}
% \label{cor:KM_iteration}
% With notation and assumptions as in Theorem~\ref{thm:DYS_Fixed_Point}, take $\alpha \in (0,2/\gamma)$, if the sequence
% $\{z^k\}$ is defined by $z^{k+1} = T_{\Theta}(z^k),$ then $x^k \triangleq P_{\mathcal{C}_2}(z^{k}) \to {x}_{\Theta}$ with rate $\mathcal{O}(1/k)$.
% \end{corollary}
 
\paragraph{The Backward Pass} From Theorem \ref{thm:DYS_Fixed_Point}, we deduce the following fixed point condition:
\begin{equation}
    {x}_{\Theta} = P_{\mathcal{C}_2}({z}_\Theta),
    \quad \text{ for some }\ 
    {z}_\Theta \in \{ z : z =  T_{\Theta}({z})\}.
    \label{eq: VI-DYS}
\end{equation}
As discussed in \citet{mckenzie2021operator}, instead of backpropagating through every iterate of the forward pass, we may derive a formula for $dx_{\Theta}/d\Theta$ by appealing to the implicit function theorem and differentiating both sides of \eqref{eq: VI-DYS}:
\begin{equation}
    \dd{{z}_\Theta}{\Theta} = \pp{T_{\Theta}}{\Theta} + \pp{T_{\Theta}}{z}\dd{{z}_\Theta}{\Theta}
    \quad \implies\quad 
    \mathcal{J}_\Theta(z_\Theta)\ \dd{{z}_\Theta}{\Theta} 
    =\pp{T_{\Theta}}{\Theta} \text{ where }  \mathcal{J}_\Theta(z) = I - \pp{T_{\Theta}}{z}.
    \label{eq:Jac_based_equation}
\end{equation}
We notice two immediate problems not addressed by \citet{mckenzie2021operator}: (i) $T_{\Theta}$ is not everywhere differentiable with respect to $z$, as $P_{\mathcal{C}_2}$ is not; (ii) if $T_{\Theta}$ were a contraction (\ie Lipschitz in $z$ with constant less than $1$), then $\mathcal{J}_{\Theta}$ would be invertible. However, this is not necessarily the case. Thus, it is not clear {\em a priori} that \eqref{eq:Jac_based_equation} can be solved for $\mbox{d}{{z}_\Theta}/\mbox{d}{\Theta}$. Our key result (Theorem \ref{thm: JFB_Applies!} below) is to provide reasonable conditions under which $\mathcal{J}_\Theta(z_\Theta)$ is invertible.

\noindent Assuming these issues can be resolved, one may compute the gradient of the loss using the chain rule:
\begin{equation*}
    \dd{\ell}{\Theta} = \dd{\ell}{x}\dd{{x}_{\Theta}}{\Theta} = \dd{\ell}{x}\left(\dd{P_{\mathcal{C}_{\edits{2}}}}{z}\dd{{z}_\Theta}{\Theta} \right) = \dd{\ell}{x}\dd{P_{\mathcal{C}_{\edits{2}}}}{z}\mathcal{J}_{\Theta}^{-1}\pp{T_{\Theta}}{\Theta} \label{eq: full_gradient}
\end{equation*}
This approach requires solving a linear system with $\mathcal{J}_\Theta$ which becomes particularly expensive when $n$ is large. Instead, we use {\em Jacobian-free Backpropagation} (JFB) \citep{fung2022jfb}, also known as one-step differentiation \citep{bolte2023one}, in which the Jacobian $\mathcal{J}_\Theta$ is replaced with the identity matrix. This leads to an approximation of the true gradient $\mbox{d}{\ell}/\mbox{d}{\Theta}$ using
\begin{equation}
    p_\Theta(x_{\Theta}) 
    = \left[\pp{\ell}{x}\dd{P_{\mathcal{C}_{\edits{2}}}}{z}\pp{T_{\Theta}}{\Theta}\right]_{(x,z) = ( x_{\Theta}, z_\Theta)}.
    \label{eq:p_Theta}
\end{equation}
\edits{This update can be seen as a zeroth order Neumann expansion of $\mathcal{J}_\Theta^{-1}$~\cite{fung2022jfb, bolte2023one}.}
We show \eqref{eq:p_Theta} is a valid descent direction by resolving the two problems highlighted above. We begin by rigorously deriving a formula for $\partial {T_{\Theta}}/\partial {z}$. Recall the following generalization of the Jacobian to non-smooth operators due to   \citet{clarke1983optimization}.

\begin{definition}
    For any locally \edits{Lipschitz} $F: \mathbb{R}^{n}\to\mathbb{R}^n$ let $D_{F}$ denote the set upon which $F$ is differentiable. The Clarke Jacobian of $F$ is the set-valued function defined as
    \begin{equation*}
        \partial F(\bar{z}) = \left\{\begin{array}{cc} \left.\dd{F}{z}\right|_{z= \bar{z}} & \text{ if } \bar{z} \in D_F \\ \operatorname{Con}\left\{\lim_{z^{\prime}\to \bar{z}: z^{\prime}\in D_F}   \left.\dd{F}{z}\right|_{z = z^{\prime}} \right\} & \text{ if } \bar{z} \notin D_{F} \end{array}\right. 
    \end{equation*}
    Where $\operatorname{Con}\left\{\right\}$ denotes the convex hull of a set. 
\end{definition}

The Clarke Jacobian of $P_{\mathcal{C}_2}$ is easily computable, see Lemma \ref{lemma:dPC2/dz}. Define the (set-valued) functions
\begin{equation*}
        c(\alpha) \triangleq \partial \max(0,\alpha)
        = \begin{cases} 
        \begin{array}{cl}
             1 & \text{ if } \alpha > 0 \\ 0 & \text{ if } \alpha < 0  \\ [0,1] & \text{ if } \alpha = 0 
        \end{array}
        \end{cases} \quad \text{ and } \quad \tilde{c}(\alpha) = 
        \begin{cases} 1 & \text{ if } \alpha > 0 \\ 0 & \text{ if } \alpha \leq 0 
        \end{cases}
    \end{equation*}
Then
 \begin{equation}
        \partial P_{\mathcal{C}_2}(\bar{z}) 
        =  \left[\dd{}{z} \Relu(z) \right]_{z=\overline{z}}
        = \mathrm{diag}(c(\overline{z})),
        %= P_{\mathcal{H}_{2,\overline{z}}}(\overline{z})
    \end{equation} 
where $c$ is applied element-wise. We shall now provide a consistent rule for selecting, at any $\bar{z}$, an element of $\partial P_{\mathcal{C}_2}(\bar{z})$. If $z_i \neq 0$ for all $i$ then $\partial P_{\mathcal{C}_2}$ is a singleton. If one or more $z_i = 0$ then $\partial P_{\mathcal{C}_2}$ is multi-valued, so we choose the element of $\partial P_{\mathcal{C}_2}$ with $0$ in the $(i,i)$ position for every $z_i = 0$. We write $dP_{\mathcal{C}_2}/dz$ for this chosen element, and note that
\begin{equation*}
    \left.\dd{P_{\mathcal{C}_2}}{z}\right|_{z = \bar{z}}  =  \mathrm{diag}(\tilde{c}(\overline{z})) \in \partial P_{\mathcal{C}_2}(\bar{z})
\end{equation*}
This aligns with the default rule for assigning a sub-gradient to $\Relu$ used in the popular machine learning libraries TensorFlow\citep{abadi2016tensorflow}, PyTorch \citep{paszke2019pytorch} and JAX \citep{bradbury2018jax}, and has been observed to yield networks which are more stable to train than other choices \citep{bertoin2021numerical}.

\noindent Given the above convention, we can compute $\partial {T_{\Theta}}/\partial {z}$, interpreted as an element of the Clarke Jacobian of $T_{\Theta}$ chosen according to a consistent rule. Surprisingly, $\partial {T_{\Theta}}/\partial {z}$ may be expressed using only orthogonal projections to hyperplanes. Throughout, we let $e_i \in \mathbb{R}^{n}$ be the one-hot vector with $1$ in the $i$-th position and zeros elsewhere, and $a_i^{\top}$ be the $i$-th row of $A$. For any subspace $\mathcal{H}$ we denote the orthogonal subspace as $\mathcal{H}^{\perp}$. The following theorem, the proof of which is given in Appendix~\ref{appendix:proofs}, characterizes $\partial {T_{\Theta}}/\partial {z}$.

\def\thmJacobianT{Suppose $A$ is full-rank and $\mathcal{H}_1 \triangleq \operatorname{Null}(A)$, 
    $\mathcal{H}_{2,z} \triangleq
       \operatorname{Span}\left(e_i : z_i > 0\right)$. Then for all $\hat{z}\in\mathbb{R}^n$
    \begin{equation}
        \left.\pp{T_{\Theta}}{z}\right|_{z=\bar{z}} = P_{\mathcal{H}_{1}^{\bot}}P_{\mathcal{H}_{2,\bar{z}}^{\bot}} + (1-\alpha\gamma)\cdot P_{\mathcal{H}_{1}}P_{\mathcal{H}_{2,\bar{z}}}.
        \label{eq:Jac_Expressed_Projections_2}
    \end{equation}
}

\begin{theorem}
   \label{thm:Jacobian_of_T}
   \thmJacobianT
\end{theorem}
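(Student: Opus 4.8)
The plan is to compute the Clarke Jacobian of $T_\Theta$ directly from its defining formula \eqref{eq: T-DYS-Param_simpler_maintext} using the chain rule, and then simplify the composition of projections into the clean form \eqref{eq:Jac_Expressed_Projections_2}. Write $R \triangleq P_{\mathcal{C}_2}$ so that, by the convention established above, $dR/dz|_{\bar z} = \mathrm{diag}(\tilde c(\bar z))$, which is precisely the orthogonal projection $P_{\mathcal{H}_{2,\bar z}}$ onto $\mathcal{H}_{2,\bar z} = \operatorname{Span}(e_i : \bar z_i > 0)$. Also, from Lemma~\ref{lemma:Formulas_for_projections}, $P_{\mathcal{C}_1}(y) = y - A^\dagger(Ay - b)$ is affine with constant derivative $dP_{\mathcal{C}_1}/dy = I - A^\dagger A = P_{\Null(A)} = P_{\mathcal{H}_1}$.

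First I would differentiate $T_\Theta(z) = z - R(z) + P_{\mathcal{C}_1}\big((2-\alpha\gamma)R(z) - z - \alpha w_\Theta(d)\big)$ term by term with respect to $z$, treating $w_\Theta(d)$ as constant in $z$. The chain rule gives
\begin{equation*}
    \frac{\partial T_\Theta}{\partial z}\bigg|_{\bar z}
    = I - P_{\mathcal{H}_{2,\bar z}} + P_{\mathcal{H}_1}\Big((2-\alpha\gamma)P_{\mathcal{H}_{2,\bar z}} - I\Big).
\end{equation*}
Expanding, this equals $\big(I - P_{\mathcal{H}_{2,\bar z}}\big) - P_{\mathcal{H}_1}\big(I - P_{\mathcal{H}_{2,\bar z}}\big) + (1-\alpha\gamma)P_{\mathcal{H}_1}P_{\mathcal{H}_{2,\bar z}}$. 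Now I would use that $I - P_{\mathcal{H}} = P_{\mathcal{H}^\perp}$ for any subspace $\mathcal{H}$: the first two terms combine as $(I - P_{\mathcal{H}_1})(I - P_{\mathcal{H}_{2,\bar z}}) = P_{\mathcal{H}_1^\perp}P_{\mathcal{H}_{2,\bar z}^\perp}$, which yields exactly \eqref{eq:Jac_Expressed_Projections_2}. The identity $P_{\mathcal{H}_1^\perp} = P_{\Null(A)^\perp} = P_{\Range(A^\top)} = A^\dagger A$ follows from the full-rank assumption on $A$ and the SVD, and $P_{\mathcal{H}_{2,\bar z}}$, $P_{\mathcal{H}_{2,\bar z}^\perp}$ are diagonal $0/1$ matrices selecting, respectively, the positive and non-positive coordinates of $\bar z$.

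The one genuinely delicate point—and the main obstacle—is the legitimacy of applying the chain rule at non-differentiable points of $R$, i.e. where some $\bar z_i = 0$. Since $P_{\mathcal{C}_1}$ is affine (hence $C^\infty$) and only $R$ is nonsmooth, the composition rule for Clarke Jacobians applies cleanly: one has $\partial(P_{\mathcal{C}_1}\circ g)(\bar z) = \frac{dP_{\mathcal{C}_1}}{dy}\,\partial g(\bar z)$ exactly (equality, not just inclusion) whenever the outer map is continuously differentiable, and likewise addition of the smooth map $z \mapsto z$ and the nonsmooth $-R$ behaves additively on Clarke Jacobians when all-but-one summand is $C^1$. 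I would invoke these standard calculus rules for the Clarke Jacobian \citep{clarke1983optimization} to justify that the formula above, with the specific selection $dR/dz = \mathrm{diag}(\tilde c(\bar z))$, is indeed the corresponding consistently-selected element of $\partial T_\Theta(\bar z)$. The remaining verifications—that $\mathrm{diag}(\tilde c(\bar z))$ is the orthogonal projector onto $\mathcal{H}_{2,\bar z}$, and that $I - A^\dagger A$ is the orthogonal projector onto $\Null(A)$—are routine linear algebra consequences of the compact SVD of $A$ and require no further comment.
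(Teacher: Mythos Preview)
Your proposal is correct and follows essentially the same route as the paper: differentiate $T_\Theta$ via the chain rule, identify $dP_{\mathcal{C}_1}/dy = P_{\mathcal{H}_1}$ and $dP_{\mathcal{C}_2}/dz = P_{\mathcal{H}_{2,\bar z}}$, and then simplify using $I - P_{\mathcal{H}} = P_{\mathcal{H}^\perp}$. Your factoring $(I - P_{\mathcal{H}_1})(I - P_{\mathcal{H}_{2,\bar z}}) + (1-\alpha\gamma)P_{\mathcal{H}_1}P_{\mathcal{H}_{2,\bar z}}$ is slightly cleaner than the paper's term-by-term regrouping, and your explicit discussion of why the Clarke-Jacobian chain rule applies (affine outer map, single nonsmooth summand) fills in a justification the paper leaves implicit.
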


\let\AND\relax % undefine the \AND command in algorithmic
\begin{algorithm}[t]
\caption{{\tt DYS-Net}}
\label{alg:Alg1}
\begin{algorithmic}[1]
    \STATE{{\bf Inputs:}$A$ and $b$ defining $\mathcal{C}$, hyperparameters $\alpha,\gamma, K$ }
    \STATE{{\bf Initialize:} Attach neural network $w_{\Theta}(\cdot)$. Compute SVD of $A$ for $P_{\mathcal{C}_{\edits{2}}}$ formula}
    \STATE{{\bf Forward Pass}:} Initialize $z^0$. Given $w_{\Theta}(d)$ compute $z^1,\ldots, z^K$ using \eqref{eq:Apply_T_maintext}. Return $x^K \triangleq P_{\mathcal{C}_1}(z^{K}) \approx {x}_{\Theta}$.
    \STATE{{\bf Backward Pass}:} Compute $p_\Theta(x^K) \approx p_\Theta({x}_{\Theta})$ using \eqref{eq:p_Theta} and return. 
\end{algorithmic}
\end{algorithm}

% \begin{algorithm}[t]
% \caption{{\tt DYS-Net} Training with JFB}
% \label{alg:Alg1}
% \begin{algorithmic}[1]
%     \STATE{{\bf Input:} $A$ and $b$ defining $\mathcal{C}$, $f_{\Theta}$}
%     \STATE{Initialize $\Theta^0$ randomly}
%     % \STATE{Precompute for $P_{\mathcal{C}_2}$ \DMnote{elaborate}.}
%     \STATE{Compute SVD of $A$ for $P_{\mathcal{C}_1}$ formula}
%     \FOR{$m=0,\ldots,M-1$}
%         \STATE Compute $x^K = P_{\mathcal{C}_1}(z^{K}) \approx {x}_{\Theta^{m}}$ using iteration $z^{k+1} = T_{\Theta^m}(z^k)$. \\
%         \STATE Compute $p_\Theta(x^K) \approx p_\Theta({x}_{\Theta})$ using \eqref{eq:p_Theta}. \\
%         \STATE $\Theta^{m+1} = \Theta^m - \eta p_{\Theta^m}(x^K)$.
%     \ENDFOR
% \end{algorithmic}
% \end{algorithm}

\noindent To show  JFB is applicable, it suffices to verify   $\|\partial {T_{\Theta}}/ \partial {z}\| < 1$ when evaluated at ${z}_\Theta$. Theorem \ref{thm:Jacobian_of_T} enables us to show this inequality holds when ${x}_{\Theta}$ satisfies a commonly-used regularity condition, which we formalize as follows.

\begin{definition}[LICQ condition, specialized to our case]
    Let ${x}_{\Theta}$ denote the solution to \eqref{eq:Theta_Reg_Problem}. Let $\mathcal{A}({x}_{\Theta}) \subseteq \{1,\ldots, n\}$ denote the set of active positivity constraints:
    \begin{equation}
        \mathcal{A}({x}_{\Theta}) \triangleq \{i: \ [{x}_{\Theta}]_i = 0 \}.
    \end{equation}
    The point   ${x}_{\Theta}$ satisfies the {\em Linear Independence Constraint Qualification} (LICQ) condition if the vectors
    \begin{equation}
        \{a_1,\ldots, a_m\}\cup\{e_i :  i \in  \mathcal{A}({x}_{\Theta})\}
    \end{equation}
    are linearly independent.
\end{definition}

\def\ThmJFBapplies{
    If the LICQ condition holds at ${x}_{\Theta}$, $A$ is full-rank and $\alpha \in (0, 2/\gamma)$, then 
    $\| \partial T_\Theta / \partial z \|_{z={z}_\Theta} < 1$. 
}
\begin{theorem}
    \label{thm: JFB_Applies!}
    \ThmJFBapplies
\end{theorem}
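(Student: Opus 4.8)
The plan is to leverage Theorem~\ref{thm:Jacobian_of_T}, which expresses $\partial T_\Theta/\partial z$ at $z=z_\Theta$ as $M \triangleq P_{\mathcal{H}_1^\perp}P_{\mathcal{H}_{2,z_\Theta}^\perp} + (1-\alpha\gamma)\, P_{\mathcal{H}_1}P_{\mathcal{H}_{2,z_\Theta}}$, and to bound its operator norm by carefully analyzing the two summands. First I would record the elementary facts: $\|1-\alpha\gamma\| < 1$ since $\alpha \in (0, 2/\gamma)$, and each of $P_{\mathcal{H}_1}, P_{\mathcal{H}_1^\perp}, P_{\mathcal{H}_{2,z_\Theta}}, P_{\mathcal{H}_{2,z_\Theta}^\perp}$ is an orthogonal projector, hence of norm $\le 1$. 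A naive triangle-inequality bound gives only $\|M\| \le 1 + |1-\alpha\gamma|$, which is useless; the point of the LICQ hypothesis is to control the \emph{interaction} between the subspaces $\mathcal{H}_1 = \operatorname{Null}(A)$ and $\mathcal{H}_{2,z_\Theta} = \operatorname{Span}(e_i : [x_\Theta]_i > 0)$.

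The key structural observation I would aim to establish is that the two summands act on (essentially) orthogonal pieces of $\mathbb{R}^n$, so that $\|Mv\|^2$ decomposes cleanly. Concretely, write $P_1 \triangleq P_{\mathcal{H}_1}$ and $Q \triangleq P_{\mathcal{H}_{2,z_\Theta}}$; then $M = (I-P_1)(I-Q) + (1-\alpha\gamma)P_1 Q$. I would show that LICQ forces a compatibility between $\operatorname{Null}(A)$ and the coordinate subspace indexed by the \emph{inactive} constraints which makes $\operatorname{ran}\big((I-P_1)(I-Q)\big)$ and $\operatorname{ran}(P_1 Q)$ orthogonal, or at least makes the relevant cross-terms vanish. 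The cleanest route: LICQ says $\{a_1,\dots,a_m\} \cup \{e_i : i \in \mathcal{A}(x_\Theta)\}$ is linearly independent, equivalently $\operatorname{Row}(A) \cap \operatorname{Span}(e_i : i \in \mathcal{A}(x_\Theta)) = \{0\}$ with the spans in ``general position'', equivalently $\mathcal{H}_1^\perp = \operatorname{Row}(A)$ and $\mathcal{H}_{2,z_\Theta}^\perp = \operatorname{Span}(e_i : i \in \mathcal{A})$ intersect trivially. This intersection condition is exactly what guarantees that the product of the two complementary projectors is a strict contraction, via the classical fact that $\|P_U P_W\| < 1$ whenever $U \cap W^\perp$ \ldots — more precisely, $\|P_U P_W\| = \cos\theta(U, W^\perp)$ where $\theta$ is the Friedrichs angle, and this is $<1$ iff $U \cap W = U \cap W^\perp{}^\perp$ behaves well; I would invoke the standard lemma that $\|P_U P_W\| < 1$ iff $U^\perp + W^\perp$ is closed and $U \cap W = \{0\}$ (automatic in finite dimensions once the intersection is trivial).

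So the concrete steps, in order: (1) Using Theorem~\ref{thm:Jacobian_of_T}, reduce to bounding $\|M\|$ with $M$ as above. (2) Prove that LICQ is equivalent to $\mathcal{H}_1^\perp \cap \mathcal{H}_{2,z_\Theta}^\perp = \{0\}$ (a rank/dimension count: $\dim \mathcal{H}_1^\perp = m$, $\dim \mathcal{H}_{2,z_\Theta}^\perp = |\mathcal{A}(x_\Theta)|$, and linear independence of the union of spanning vectors gives trivial intersection). (3) Decompose $\mathbb{R}^n$ according to the four ``principal-angle'' pieces of the pair $(\mathcal{H}_1, \mathcal{H}_{2,z_\Theta})$ and verify that on the piece where both projectors in $(I-P_1)(I-Q)$ act as identity, $Q$ acts as zero (and vice versa), so that for every unit $v$, $\|Mv\|^2 \le \max\{\|(I-P_1)(I-Q)v\|^2 + (1-\alpha\gamma)^2\|P_1 Q v\|^2, \dots\}$; push this through to get $\|M\| \le \max\{1, |1-\alpha\gamma|\}$ on the ``aligned'' part. (4) On the remaining part — the subspace where the Friedrichs angle between $\mathcal{H}_1$ and $\mathcal{H}_{2,z_\Theta}$ is nontrivial — use step (2) to conclude the relevant projector product has norm strictly below $1$, and combine. (5) Conclude that at least one of the two geometric mechanisms (strict contraction factor $|1-\alpha\gamma|<1$ from the regularization, or strict subspace-angle contraction from LICQ) applies on each piece, yielding $\|M\| < 1$ overall.

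I expect the main obstacle to be step (3)/(4): cleanly decomposing the action of $M$ using the simultaneous structure of the two projector pairs $(P_{\mathcal{H}_1}, P_{\mathcal{H}_1^\perp})$ and $(Q, I-Q)$, and ruling out a ``resonance'' in which a vector nearly fixed by $(I-P_1)(I-Q)$ also has large $P_1 Q$-component — it is precisely here that LICQ must be used, and getting the Friedrichs-angle argument to interlock with the $(1-\alpha\gamma)$ factor without loose constants requires care. A secondary subtlety is that $M$ is not symmetric, so $\|M\|$ is a largest singular value rather than a largest eigenvalue; I would handle this either by bounding $\|M^\top M\|$ directly or by the $\|Mv\|^2$-decomposition above, which sidesteps symmetry.
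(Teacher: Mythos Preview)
Your proposal is correct and follows essentially the same route as the paper: show LICQ $\Rightarrow \mathcal{H}_1^\perp \cap \mathcal{H}_{2,z_\Theta}^\perp = \{0\}$, then bound $\|Mv\|^2$ via Pythagoras and the Friedrichs angle between $\mathcal{H}_1^\perp$ and $\mathcal{H}_{2,z_\Theta}^\perp$, combining with $|1-\alpha\gamma|<1$. One simplification you are overthinking in step~(3): the orthogonality of the two summands' ranges is automatic---$(I-P_1)(I-Q)$ lands in $\mathcal{H}_1^\perp$ and $P_1Q$ lands in $\mathcal{H}_1$---so no four-block principal-angle decomposition is needed; the paper simply splits $v = v_1 + v_2$ along $\mathcal{H}_{2,z_\Theta}$ alone, writes $Mv = P_{\mathcal{H}_1^\perp}v_2 + (1-\alpha\gamma)P_{\mathcal{H}_1}v_1$, applies Pythagoras, and then uses LICQ only to get $\|P_{\mathcal{H}_1^\perp}v_2\|^2 \le \cos(\tau)\,\|v_2\|^2$.
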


The significance of Theorem \ref{thm: JFB_Applies!}, which is proved in Appendix~\ref{appendix:proofs}, is outlined by the following corollary, stating that using $p_{\Theta}$ instead of the true gradient $d\ell/d\Theta$ is still guaranteed to decrease $\ell(\Theta)$, at least for small enough step-size.

\def\CorJFB{If $T_\Theta$ is continuously differentiable with respect to $\Theta$ at $z_\Theta$, the assumptions in Theorem \ref{thm: JFB_Applies!} hold and $(\partial T_\Theta / \partial \Theta)^\top (\partial T_\Theta / \partial \Theta)$ has condition number sufficiently small, then $p_{\Theta}(x_{\Theta})$ is a descent direction for $\ell$ with respect to $\Theta$.}
     
\begin{corollary}
    \label{cor: JFB_Applies!}
    \CorJFB
\end{corollary}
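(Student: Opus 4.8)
The plan is to invoke the general JFB descent guarantee established in \citet{fung2022jfb}, instantiated with the specifics supplied by Theorem \ref{thm: JFB_Applies!}. Recall that $p_\Theta(x_\Theta)$ is the approximate gradient obtained by replacing $\mathcal{J}_\Theta(z_\Theta)^{-1} = (I - \partial T_\Theta/\partial z)^{-1}$ with the identity in the chain-rule expression for $d\ell/d\Theta$. To show $p_\Theta$ is a descent direction, it suffices to show $\langle p_\Theta(x_\Theta),\, d\ell/d\Theta \rangle > 0$ (identifying the gradient with its transpose). First I would write both the true gradient and the approximate gradient in the factored form $d\ell/d\Theta = g^\top \mathcal{J}_\Theta^{-1} (\partial T_\Theta/\partial \Theta)$ and $p_\Theta = g^\top (\partial T_\Theta/\partial \Theta)$, where $g^\top \triangleq (\partial \ell/\partial x)(dP_{\mathcal{C}_2}/dz)$ is the shared left factor evaluated at $(x_\Theta, z_\Theta)$.

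The key analytic ingredient is that Theorem \ref{thm: JFB_Applies!} gives $\|\partial T_\Theta/\partial z\|_{z=z_\Theta} < 1$, so $\mathcal{J}_\Theta(z_\Theta) = I - \partial T_\Theta/\partial z$ is invertible and, more importantly, its symmetric part $\tfrac12(\mathcal{J}_\Theta + \mathcal{J}_\Theta^\top)$ is positive definite: for any $v$, $v^\top \mathcal{J}_\Theta v = \|v\|^2 - v^\top (\partial T_\Theta/\partial z) v \geq (1 - \|\partial T_\Theta/\partial z\|)\|v\|^2 > 0$. The inner product of interest is
\begin{equation*}
    \left\langle p_\Theta,\ \dd{\ell}{\Theta} \right\rangle
    = \left( g^\top \pp{T_\Theta}{\Theta} \right) \left( g^\top \mathcal{J}_\Theta^{-1} \pp{T_\Theta}{\Theta} \right)^\top
    = u^\top \mathcal{J}_\Theta^{-\top} u,
    \quad \text{where } u \triangleq \left(\pp{T_\Theta}{\Theta}\right)^\top g.
\end{equation*}
Since $\mathcal{J}_\Theta$ has positive-definite symmetric part, so does $\mathcal{J}_\Theta^{-1}$ (and hence $\mathcal{J}_\Theta^{-\top}$), giving $u^\top \mathcal{J}_\Theta^{-\top} u > 0$ whenever $u \neq 0$. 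This is essentially Lemma/Theorem 2.1 of \citet{fung2022jfb}; I would cite it and supply the short positive-definiteness argument above for self-containedness.

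The remaining step — and the one where the condition-number hypothesis on $(\partial T_\Theta/\partial \Theta)^\top(\partial T_\Theta/\partial \Theta)$ enters — is to rule out the degenerate case $u = 0$ while $d\ell/d\Theta \neq 0$, and more sharply to guarantee the descent direction is not merely nonnegative but strictly aligned. Here $u = (\partial T_\Theta/\partial\Theta)^\top g$ can vanish if $g$ lies in the left null space of $\partial T_\Theta/\partial\Theta$; requiring $(\partial T_\Theta/\partial \Theta)^\top(\partial T_\Theta/\partial \Theta)$ to be well-conditioned (in particular, full-rank with controlled smallest singular value) forces $\|u\|$ to be comparably large relative to $\|g\|$, and an analogous bound controls $\|d\ell/d\Theta\|$, so that $\langle p_\Theta, d\ell/d\Theta\rangle \geq c\,\|d\ell/d\Theta\|^2$ for some $c>0$ depending on $1 - \|\partial T_\Theta/\partial z\|$ and the conditioning constant. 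I would make this quantitative by bounding $\langle p_\Theta, d\ell/d\Theta \rangle$ below using the singular values of $\partial T_\Theta/\partial \Theta$ and the coercivity constant of $\mathcal{J}_\Theta^{-1}$, mirroring the argument in \citet[Theorem 3.1 and its proof]{fung2022jfb}. I expect this last bookkeeping — translating "condition number sufficiently small" into the precise inequality that certifies descent — to be the main (though still routine) obstacle; everything upstream follows directly from Theorem \ref{thm: JFB_Applies!} and the continuous-differentiability hypothesis on $T_\Theta$ in $\Theta$, which ensures $\partial T_\Theta/\partial \Theta$ and hence $p_\Theta$ are well-defined at $z_\Theta$.
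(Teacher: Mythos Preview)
Your high-level plan---extract the contraction bound from Theorem~\ref{thm: JFB_Applies!} and feed it into the descent guarantee of \citet{fung2022jfb}---is exactly what the paper does. The paper's proof is a two-line invocation: Theorem~\ref{thm: JFB_Applies!} gives an explicit contraction constant $\Gamma = \sqrt{\max\{\cos(\tau),(1-\alpha\gamma)^2\}}$, and the main theorem of \citet{fung2022jfb} then certifies descent provided the condition number of $(\partial T_\Theta/\partial\Theta)^\top(\partial T_\Theta/\partial\Theta)$ is below $1/\Gamma$.

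However, your attempt to unpack that cited result contains a genuine algebraic error. With $B \triangleq \partial T_\Theta/\partial\Theta \in \mathbb{R}^{n\times p}$ and $g^\top \triangleq (\partial\ell/\partial x)(dP_{\mathcal{C}_2}/dz)$, the inner product is
\[
\left\langle p_\Theta,\ \dd{\ell}{\Theta}\right\rangle
= (g^\top B)(g^\top \mathcal{J}_\Theta^{-1} B)^\top
= g^\top B B^\top \mathcal{J}_\Theta^{-\top} g,
\]
which is \emph{not} $u^\top \mathcal{J}_\Theta^{-\top} u$ for $u = B^\top g$: note $u\in\mathbb{R}^p$ while $\mathcal{J}_\Theta^{-\top}\in\mathbb{R}^{n\times n}$, so your displayed identity is not even dimensionally consistent. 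The matrices $BB^\top$ and $\mathcal{J}_\Theta^{-\top}$ do not commute, so the positive-definiteness of the symmetric part of $\mathcal{J}_\Theta^{-\top}$ by itself does \emph{not} force $g^\top BB^\top \mathcal{J}_\Theta^{-\top} g > 0$.

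Consequently, your diagnosis of the role of the condition-number hypothesis is off. It is not merely there to exclude the degenerate case $u=0$; it is the mechanism that makes the quadratic form $g\mapsto g^\top BB^\top \mathcal{J}_\Theta^{-\top} g$ positive at all. Roughly, one splits $\mathcal{J}_\Theta^{-\top} = I + (\mathcal{J}_\Theta^{-\top}-I)$, bounds the cross term via $\|\partial T_\Theta/\partial z\|\leq \Gamma$, and then needs $\sigma_{\min}(B)^2$ large relative to $\sigma_{\max}(B)^2\cdot\Gamma/(1-\Gamma)$-type quantities---this is precisely where the threshold ``condition number $< 1/\Gamma$'' in \citet{fung2022jfb} comes from. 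If you want a self-contained argument rather than a bare citation, that is the inequality you need to reproduce; the shortcut through $u^\top\mathcal{J}_\Theta^{-\top}u$ does not work.
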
 

%Thus, using $p_{\Theta}$ instead of $d\ell/d\Theta$ guarantees a decrease in $\ell(\Theta)$. We summarize this training procedure as Algorithm \ref{alg:Alg1}.
\noindent Theorem~\ref{thm: JFB_Applies!} also proves that $\mathcal{J}_{\Theta}(z_{\Theta})$ in \eqref{eq:Jac_based_equation} is invertible, thus also justifying the use of Jacobian-based backprop as well as numerous other gradient approximation techniques \citep{liao2018reviving,geng2021training}.

\noindent Finally, we note that in practice $p_{\Theta}(x^K)$ is used instead of $p_{\Theta}(x_{\Theta})$. \citet[Corollary 0.1]{fung2022jfb} guarantees that $p_{\Theta}(x^K)$ will still be a descent direction for $K$ sufficiently large, see also \citet[Cor. 1]{bolte2023one}.

\edits{\paragraph{Synergy between forward and backward pass} We highlight that the pseudogradient computed in the backward pass ({\em i.e.}, $p_{\Theta}$) does not require any Lagrange multipliers, in contrast with the gradient computed using \eqref{eq:KKT_Der_Eq}. This is why we may use Davis-Yin splitting, as opposed to a primal-dual interior point method, on the forward pass. This allows efficiency gains on the forward pass, as Davis-Yin splitting scales favorably with the problem dimension.}

\paragraph{Implementation} {\tt DYS-net} is implemented as an abstract PyTorch \citep{paszke2019pytorch} layer in the provided code. To instantiate it, a user only need specify $A$ and $b$ (see \eqref{eq:Polytope_standard_form}) and choose an architecture for $w_{\Theta}$. At test time, it can be beneficial to solve the underlying combinatorial problem \eqref{eq:Param_Comb_Problem} exactly using a specialized algorithm, {\em e.g.} commercial integer programming software for the knapsack prediction problem. This can easily be done using {\tt DYS-net} by instantiating the {\tt test-time-forward} abstract method.

\section{Numerical Experiments}
\label{sec:numerical_experiments}

\edits{We demonstrate the effectiveness of {\tt DYS-Net} through four experiments. In all our experiments with {\tt DYS-net} we used $\alpha = 0.05$
and $K = 1,000$, with an early stopping condition of $|z_{k+1} - z_k|_2 \leq \mathrm{tol}$ with $\mathrm{tol} = 0.01$.} We compare {\tt DYS-net} against three other methods: Perturbed Optimization \citep{berthet2020learning}---{\tt PertOpt-net}; an approach using Blackbox Backpropagation \citep{vlastelica2019differentiation}---{\tt BB-net}; and an approach using {\tt cvxpylayers} \citep{cvxpylayers2019}---{\tt CVX-net}. All code needed to reproduce our experiments is available at \url{https://github.com/mines-opt-ml/fpo-dys}.

\subsection{Experiments using {\tt PyEPO}}
\label{sec: pyeop_experiments}
First, we verify the efficiency of {\tt DYS-net} by performing benchmarking experiments within the {\tt PyEPO} \citep{tang2022pyepo} framework. 

\paragraph{Models} We use the {\tt PyEPO} implementations of Perturbed Optimization, respectively Blackbox Backpropagation, as the core of {\tt PertOpt-net}, respectively {\tt BB-net}. We use the same architecture for $w_{\Theta}(d)$ for all models\footnote{The architecture we use for shortest path prediction problems differs from that used for knapsack prediction problems. See the appendix for details.} and use an appropriate combinatorial solver at test time, via the {\tt PyEPO} interface to {\tt Gurobi}. Thus, the methods differ {\em only in training procedure}. The precise architectures used for $w_{\Theta}(d)$ for each problem are described in Appendix~\ref{app: Experiments}, and are also viewable in the accompanying code repository. Interestingly, we observed that adding drop-out during training to the output layer proved beneficial for the knapsack prediction problem. Without this, we find that $w_{\Theta}$ tends to output a sparse approximation to $w$ supported on a feasible set of items, and so does not generalize well.

\paragraph{Training} We train for a maximum of 100 epochs or 30 minutes, whichever comes first. We use a validation set for model selection as we observe that, for all methods, the best loss is seldom achieved at the final iteration. Exact hyperparameters are discussed in \edits{Appendix \ref{app: Experiments}}. For each problem size, we perform three repeats with different randomly generated data. Results are displayed in Figure~\ref{fig:knapsack_results}.

\paragraph{Evaluation} Given test data $\{(d_i, w(d_i), x^{\star}(d_i))\}_{i=1}^{N}$ and a fully trained model $w_{\Theta}(d)$ we define the {\em regret}:
\begin{equation}
    r(d) = w(d_i)^{\top}\left(x_{\Theta}(d_i) - x^{\star}(d_i)\right).
    \label{eq:define_regret}
\end{equation}
Note that regret is non-negative, and is zero precisely when $x_{\Theta}(d_i)$ is a solution of equal quality to $x^{\star}(d_i)$. Following \citet{tang2022pyepo}, we evaluate the quality of $w_{\Theta}(d)$ using {\em normalized regret}, defined as 
\begin{equation}
  \tilde{r} = \frac{\sum_{i=1}^{N} r(d_i)}{\sum_{i=1}^N\left[w(d_i)^{\top}x^{\star}(d_i)\right]} 
  \label{eq:define_normalized_regret}.
\end{equation}

\subsubsection{Shortest Path Prediction}
\label{sec:pyeopo_shortest_path}
 The shortest path between two vertices in a graph $\mathcal{G} = (\mathcal{V},\mathcal{E})$ can be found by:
\begin{equation}
    x^{\star} = \argmin_{x \in \mathcal{X}} w(d)^{\top}x; \ \mathcal{X} = \{x \in \{0,1\}^{|\mathcal{E}|}: Ex = b \}, 
    \label{eq:shortest_path_reformulation}
\end{equation}
where $E$ is the vertex-edge adjacency matrix, $b$ encodes the initial and terminal vertices, and $w(d) \in \mathbb{R}^{|\mathcal{E}|}$ is a vector encoding ($d$-dependent) edge lengths. Here, $x^{\star}$ will encode the edges included in the optimal path. In this experiment we focus on the case where $\mathcal{G}$ is the $k\times k$ grid graph. We use the {\tt PyEPO} \citet{tang2022pyepo} benchmarking software to generate training datasets of the form $\{(d_i, x^{\star}_i \approx x^{\star}(d_i))\}_{i=1}^{N}$ where $N=1000$ for each $k \in \{5,10,15,20,25,30\}$. The $d_i$ are sampled from the five-dimensional multivariate Gaussian distribution with mean $0$ and identity covariance. Note that the number of variables in \eqref{eq:shortest_path_reformulation} scales like $k^2$, not $k$.
 
\subsubsection{Knapsack Prediction}
\label{sec:pyepo_knapsack}
In the (0--1, single) knapsack prediction problem, we are presented with a container ({\em i.e.} a knapsack) of size $c$ and $I$ items, of sizes $s_1,\ldots, s_I$ and values $w_1(d),\ldots, w_I(d)$. The goal is to select the subset of maximum value that fits in the container, {\em i.e.} to solve:
\begin{equation*}
    x^{\star} = \argmax_{x \in \mathcal{X}} w(d)^{\top}x; \ \mathcal{X} = \{x \in \{0,1\}^{I}: \ s^{\top}x \leq c\}
\end{equation*}
Here, $x^{\star}$ encodes the selected items. In the (0--1) $k$-knapsack prediction problem we imagine the container having various notions of ``size'' (i.e. length, volume, weight limit) and hence a $k$-tuple of capacities $c \in \mathbb{R}^{k}$. Correspondingly, the items each have a $k$-tuple of sizes $s_1,\ldots, s_I \in \mathbb{R}^k$. We aim to select a subset of maximum value, amongst all subsets satisfying the $k$ capacity constraints:
\begin{equation}
\begin{split}
    & x^{\star} = \argmax_{x \in \mathcal{X}} w(d)^{\top}x \\
    & \mathcal{X} = \{x \in \{0,1\}^{I}: \ Sx \leq c\} \\
    & S = \begin{bmatrix} s_1 & \cdots & s_k \end{bmatrix} \in \mathbb{R}^{k\times I} 
\end{split}
\label{eq:Combinatorial_knapsack_problem}
\end{equation}
In Appendix \ref{sec:Deriving_Knapsack} we discuss how to transform $\mathcal{X}$ into the canonical form discussed in Section \ref{sec:P-then-O}. We again use {\tt PyEPO} to generate datasets of the form $\{(d_i, x^{\star}_i \approx x^{\star}(d_i))\}_{i=1}^{N}$ where $N=1000$ for $k=2$ and $I$ varying from $50$ to $750$ inclusive in increments of $50$. The $d_i$ from the five-dimensional multivariate Gaussian distribution with mean $0$ and identity covariance.

\begin{figure*}
    \centering
    \subfloat[Normalized Regret]{\includegraphics[width=0.24\textwidth]{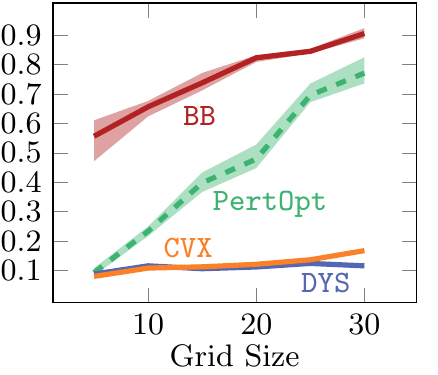}} 
    \subfloat[Train Time (in seconds)]{
    \includegraphics[width=0.24\textwidth]{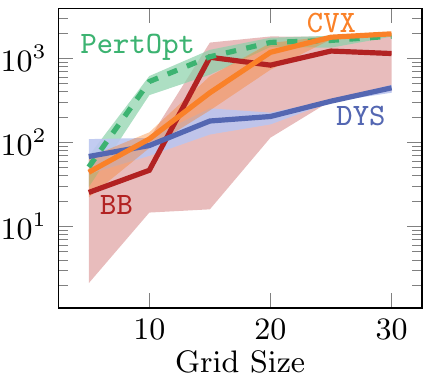}}
    \subfloat[Normalized Regret]{\includegraphics[width=0.24\textwidth]{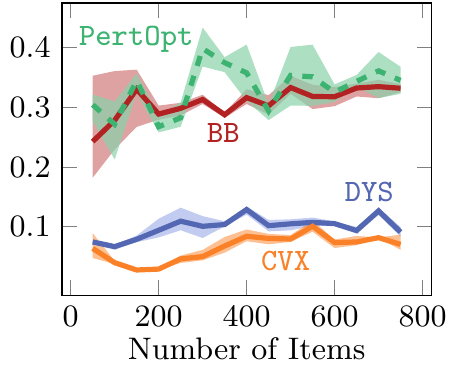}} 
    \subfloat[Train Time (in seconds)]{\includegraphics[width=0.24\textwidth]{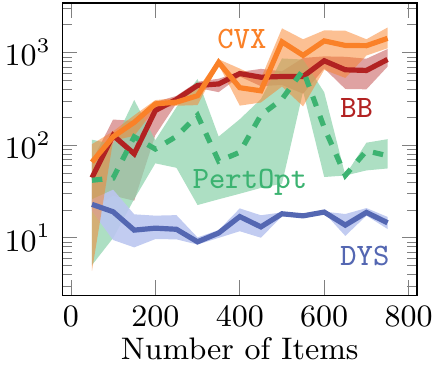}} 
    \caption{Results for the shortest path and knapsack prediction problems.  Figures (a) and (b) show normalized regret and train time for the shortest path prediction problem, while Figures (c) and (d) show normalized regret and train time for the knapsack prediction problem. Note that the train time in figures (b) and (d) is the time till the model achieving best normalized regret on the validation set is reached.}
    \label{fig:knapsack_results}
\end{figure*}

\subsubsection{Results}
As is clear from Figure~\ref{fig:knapsack_results} {\tt DYS-net} trains the fastest among the four methods, and achieves the best or near-best normalized regret in all cases. \edits{We attribute the accuracy to the fact that it is \emph{not} a zeroth order method. Moreover, avoiding computation and backpropagation through the projection $P_{\mathcal{C}}$ allows DYS to have fast runtimes.}

\subsection{Large-scale shortest path prediction}
\label{sec:large_scale_shortest_path}
Our shortest path prediction experiment described in Section~\ref{sec:pyeopo_shortest_path} is bottlenecked by the fact that the ``academic'' license of {\tt Gurobi} does not allow for shortest path prediction problems on grids larger than $30\times 30$. To further explore the limits of {\tt DYS-net}, we redo this experiment using a custom {\tt pyTorch} implementation of Dijkstra's algorithm\footnote{adapted from {\tt Tensorflow} code available at \url{https://github.com/google-research/google-research/blob/master/perturbations/experiments/shortest_path.py}} as our base solver for \eqref{eq:shortest_path_reformulation}. 

\paragraph{Data Generation}
We generate datasets $\mathcal{D} = \{(d_i,x^{\star}(d_i)\}_{i=1}^{1,000}$ for $k$-by-$k$ grids where $k \in \{5,10,20,30, 50, 100\}$ and the $d_i$ are sampled uniformly at random from $[0,1]^5$, the true edge weights are computed as $w(d) = Wd$ for fixed $W \in \mathbb{R}^{|\mathcal{E}|\times 5}$, and $x^{\star}(d)$ is computed given $w(d)$ using the aforementioned {\tt pyTorch} implementation of Dijkstra's algorithm. \edits{Thus, all entries of $w(d)$ are non-negative, and are positive with overwhelming
probability.}

\begin{figure}
    \centering
    \includegraphics{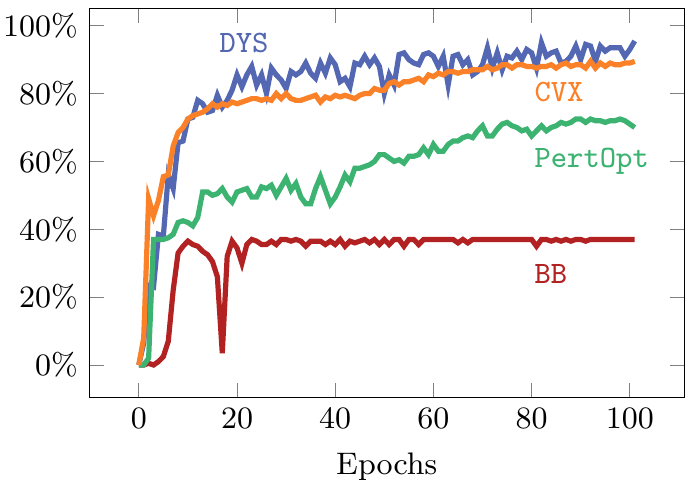}
    \caption{Accuracy (in percentage) of predicted paths on 5-by-5 grid during training.}
    \label{fig: pred_path_accuracy}
\end{figure}

\begin{table}
    \centering
    \begin{tabular}{c|c|c}
        % \hline
        grid size & number of variables & \edits{neural} network size
        \\
        \hline
        5-by-5 &  40 & 500
        \\
        10-by-10 &  180 & 2040
        \\
        20-by-20 &  760 & 8420
        \\
        30-by-30 &  1740 & 19200
        \\
        50-by-50 &  4900 & 53960
        \\
        100-by-100 &  19800 & 217860
    \end{tabular}
    \vspace*{5pt}
    \caption{Number of variables (\ie  number of edges) per grid size for the shortest path prediction problem of Section~\ref{sec:large_scale_shortest_path}. Third column: number of parameters in $w_{\Theta}(d)$ for {\tt DYS-Net}, {\tt CVX-net} and {\tt PertOpt-net}. For {\tt BB-Net}, we found a latent dimension that is 20-times larger than the aforementioned three to be more effective.}
    \label{tab: num_variables}
\end{table}

\begin{figure*}
    \centering 
    \subfloat[Test MSE Loss]{
        \includegraphics{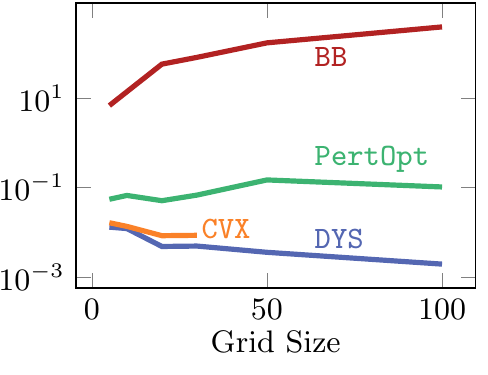}
    }
    \subfloat[Training Time (min)]{
        \includegraphics{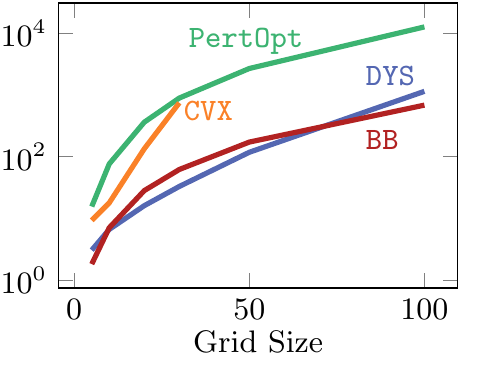}
    }
    \subfloat[Regret Values]{\includegraphics{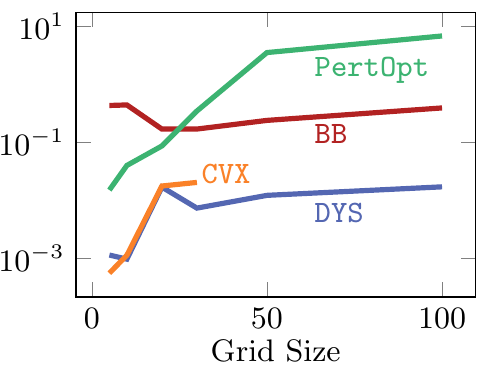}}
    
    \caption{ Results for the shortest path prediction problem. a) Test MSE loss (left), b) training time in minutes (middle), and c) regret values (right) vs. gridsize for DYS-Net (proposed) and approaches using {\tt cvxpylayers} \citep{agrawal2019differentiable} labeled CVX; Perturbed Optimization \citep{berthet2020learning} labeled PertOpt; and Blackbox Backpropagation \citep{vlastelica2019differentiation}, labeled BB. Note CVX is unable to load or run problems with gridsize over 30. Dimensions of the variables can be found in Table~\ref{tab: num_variables}.}
    \label{fig: loss_traintime_vs_gridsize}
\end{figure*}

%where the weights are an unknown function of a context parameter $d\in \mathbb{R}^5$. We do not allow diagonal edges thus each vertex (excepting those at corners and on sides) has four edges. In all experiments we draw $d$ uniformly at random from the unit cube $[0,1]^5$. We assume the true edge weights are given by $w(d) = Wd$ where $W \in \mathbb{R}^{|E|\times 5}$, and $E$ is the set of edges in the graph. While more complicated relationships between $w$ and $d$ can certainly be considered, we deliberately choose a simple relationship between context and edge weights so that the forward and backward pass through the combinatorial solver is the bottleneck in training.  %We generate pairs $(d, x^\star_d)$ as training data, where $x^\star_d$ is the shortest path given the true edge weights. We use the argmin loss (see Section~\ref{sec:P-then-O}).\\

\paragraph{Models and Training} We test the same four approaches as in Sections~\ref{sec:pyeopo_shortest_path} and \ref{sec:pyepo_knapsack}, but unlike in these sections we do not use the {\tt PyEPO} implementations of {\tt PertOpt-net} or {\tt BB-net}. We cannot, as the {\tt PyEPO} implementations call {\tt Gurobi} to solve \eqref{eq:shortest_path_reformulation} which, as mentioned above, cannot handle grids larger than $30\times 30$. Instead, we use custom implementations based on the code accompanying the works introducing {\tt PertOpt} \citep{berthet2020learning} and {\tt BB} \citep{vlastelica2019differentiation}. We use the same architecture for $w_{\Theta}(d)$ for {\tt DYS-net}, {\tt PertOpt-net}, and {\tt Cvx-net}; a two layer fully connected neural network with leaky ReLU activation functions. For {\tt DYS-net} and {\tt Cvx-net} we {\em do not} modify the forward pass at test time. For {\tt BB-net} we use a larger network by making the latent dimension 20-times larger than that of the first three as we found this more effective. Network sizes can be seen in Table~\ref{tab: num_variables}.\\
We tuned the hyperparameters for each approach to the best of our ability on the smallest problem (5-by-5 grid) and then used these hyperparameter values for all other graph sizes. See Figure~\ref{fig: pred_path_accuracy} for the results of this training run. We train all approaches for 100 epochs total on each problem using the $\ell_2$ loss \eqref{eq:def_end_to_end_loss}.

\paragraph{Results} The results are displayed in Figure \ref{fig: loss_traintime_vs_gridsize}. While {\tt CVX-net} and {\tt PertOpt-net} achieve low regret for small grids, {\tt DYS-net} model achieves a low regret {\em for all} grids. 
In addition to training faster, {\tt DYS-net} can also be trained for much larger problems, {\em e.g.}, 100-by-100 grids, as shown in Figure~\ref{fig: loss_traintime_vs_gridsize}. We found that {\tt CVX-net} could not handle grids larger than 30-by-30, \ie, problems with more than 1740 variables\footnote{This is to be expected, as discussed in in~\citet{amos2017optnet,cvxpylayers2019}} (see Table~\ref{tab: num_variables}). 
Importantly, {\tt PertOpt-net} takes close to a week to train for the 100-by-100 problem, whereas {\tt DYS-net} takes about a day (see right Figure~\ref{fig: loss_traintime_vs_gridsize}b). On the other hand, the training speed of {\tt BB-net} is comparable to that of {\tt DYS-net}, but does not lead to competitive accuracy as shown in Figure~\ref{fig: loss_traintime_vs_gridsize}(a). Interpreting the outputs of {\tt DYS-net} and {\tt CVX-net} as (unnormalized) probabilities over the grid, one can use a greedy algorithm to determine the most probable path from top-left to bottom-right. For small grids, \eg 5-by-5, this path coincides exactly with the true path for most $d$ (see Fig.~\ref{fig: pred_path_accuracy}). %For larger grids, we find there are often slight differences between the predicted and true paths. This is not surprising, as the number of possible paths grows exponentially with $k$.

\subsection{Warcraft shortest path prediction}
\label{sec:warcraft_shortest_path}
Finally, as an illustrative example, we consider the Warcraft terrains dataset first studied in \citet{vlastelica2019differentiation}. As shown in Figure~\ref{fig: FromBerthet}, $d$ is a 96-by-96 RGB image, divided into a 12-by-12 grid of terrain tiles. Each tile has a different cost to traverse, and the goal is to find the quickest path from the top-left corner to the bottom-right corner. 

\paragraph{Models, Training, and Evaluation} We build upon the code provided as part of the {\tt PyEPO} package \citep{tang2022pyepo}. We use the same architecture for $w_{\Theta}(d)$ for {\tt BB-net}, {\tt PertOpt-net}, and {\tt DYS-net}---a truncated ResNet18 \citep{he2016deep} as first proposed in \citet{vlastelica2019differentiation}. We train each network for 50 epochs, except for the baseline. The initial learning rate is $5\times 10^{-4}$ and it is decreased by a factor of 10 after 30 and 40 epochs respectively. {\tt PertOpt-net} and {\tt DYS-net} are trained to minimize the $\ell_2$ loss \eqref{eq:def_end_to_end_loss}, while {\tt BB-net} is trained to minimize the Hamming loss as described in \citet{vlastelica2019differentiation}. \edits{We evaluate the quality of $w_{\Theta}(d)$ using normalized regret \eqref{eq:define_normalized_regret}.}

\paragraph{Results} The results for this experiment are shown in Table~\ref{table:warcraft}. Interestingly, {\tt BB-net} and {\tt PertOpt-net} are much more competitive in this experiment than in the experiments of Sections~\ref{sec:pyeopo_shortest_path}, \ref{sec:pyepo_knapsack}, and \ref{sec:large_scale_shortest_path}. We attribute this to the ``discrete'' nature of the true cost vector $w(d)$---for the Warcraft problem entries of $w(d)$ can only take on four, well-separated values---as opposed to the more ``continuous'' nature of $w(d)$ in the previous experiments. \edits{Thus, an algorithm that learns a rough approximation to the true cost vector will be more successful in the Warcraft problem than in the shortest path problems of Sections \ref{sec:pyeopo_shortest_path} and \ref{sec:large_scale_shortest_path}.} This difference is illustrated in Figure~\ref{fig:comparing_cost_matrices}. \edits{We also note that model selection using a validation set is beneficial for all approaches, but particularly so for {\tt BB-net} and {\tt DYS-net}, which achieve their best-performing models in the first several epochs of training.}

\begin{figure}
    \centering
    \includegraphics[width=0.24\textwidth]{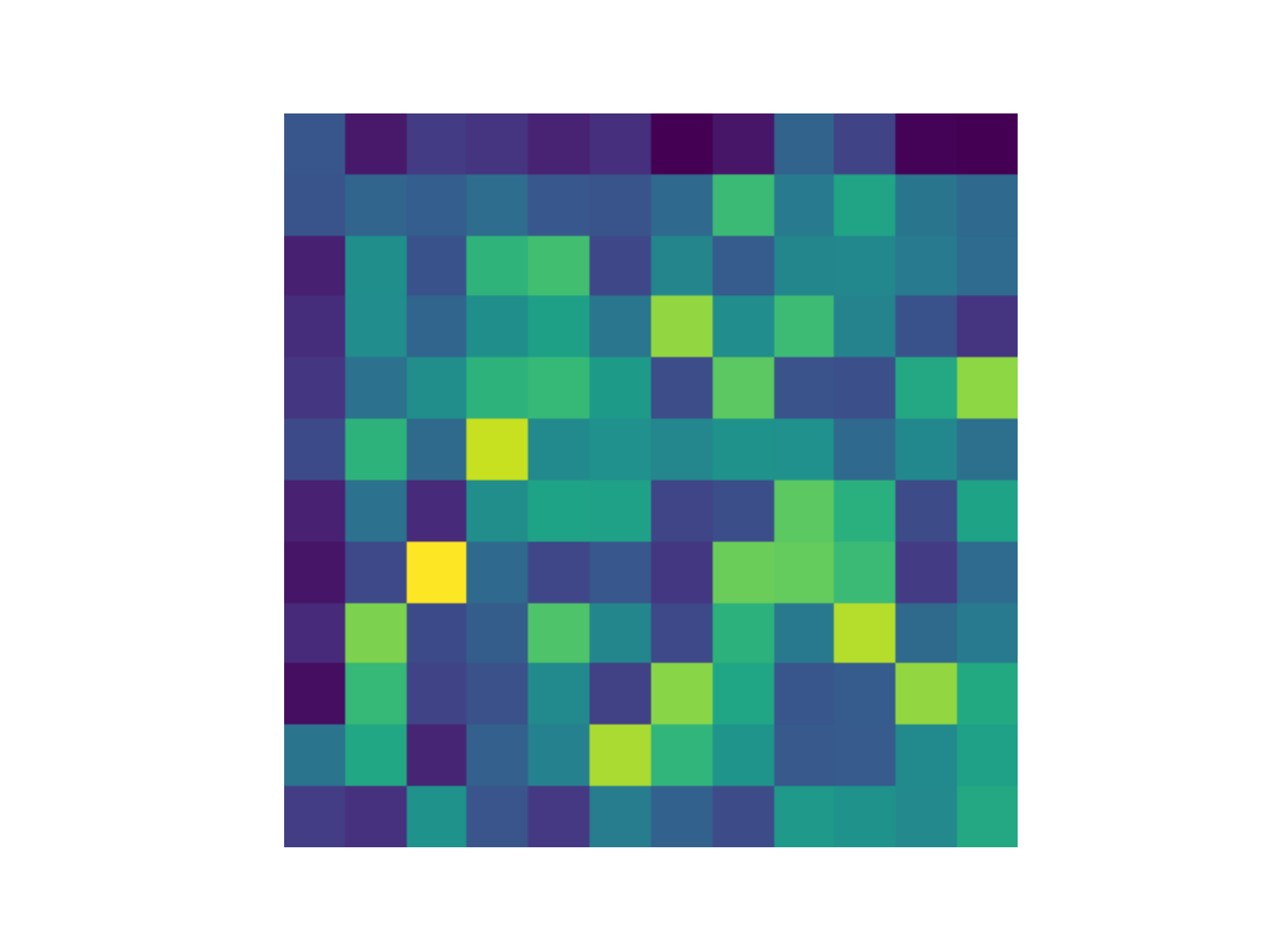} \hfill
    \includegraphics[width=0.24\textwidth]{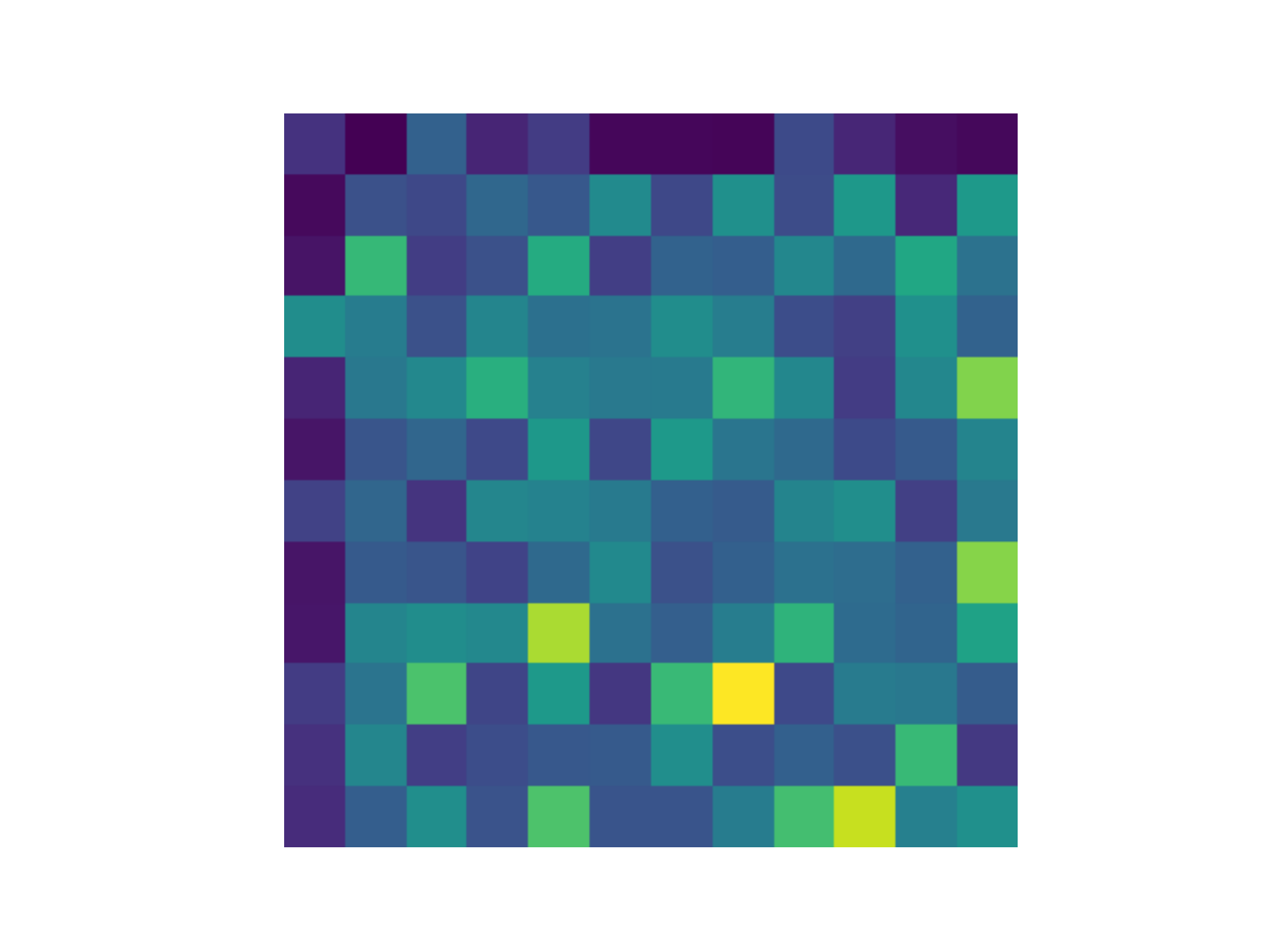} \hfill
    \includegraphics[width=0.24\textwidth]{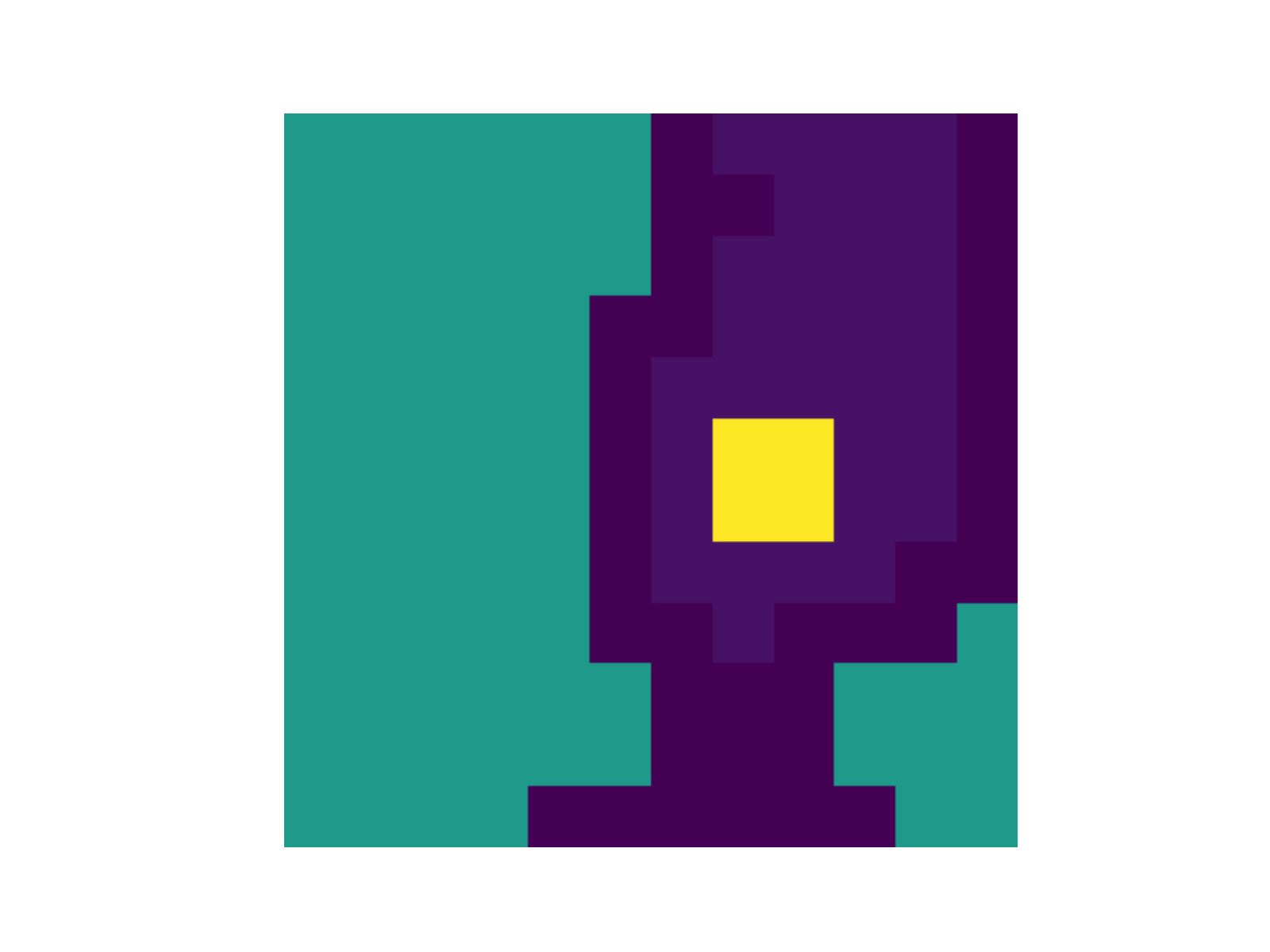} \hfill
    \includegraphics[width=0.24\textwidth]{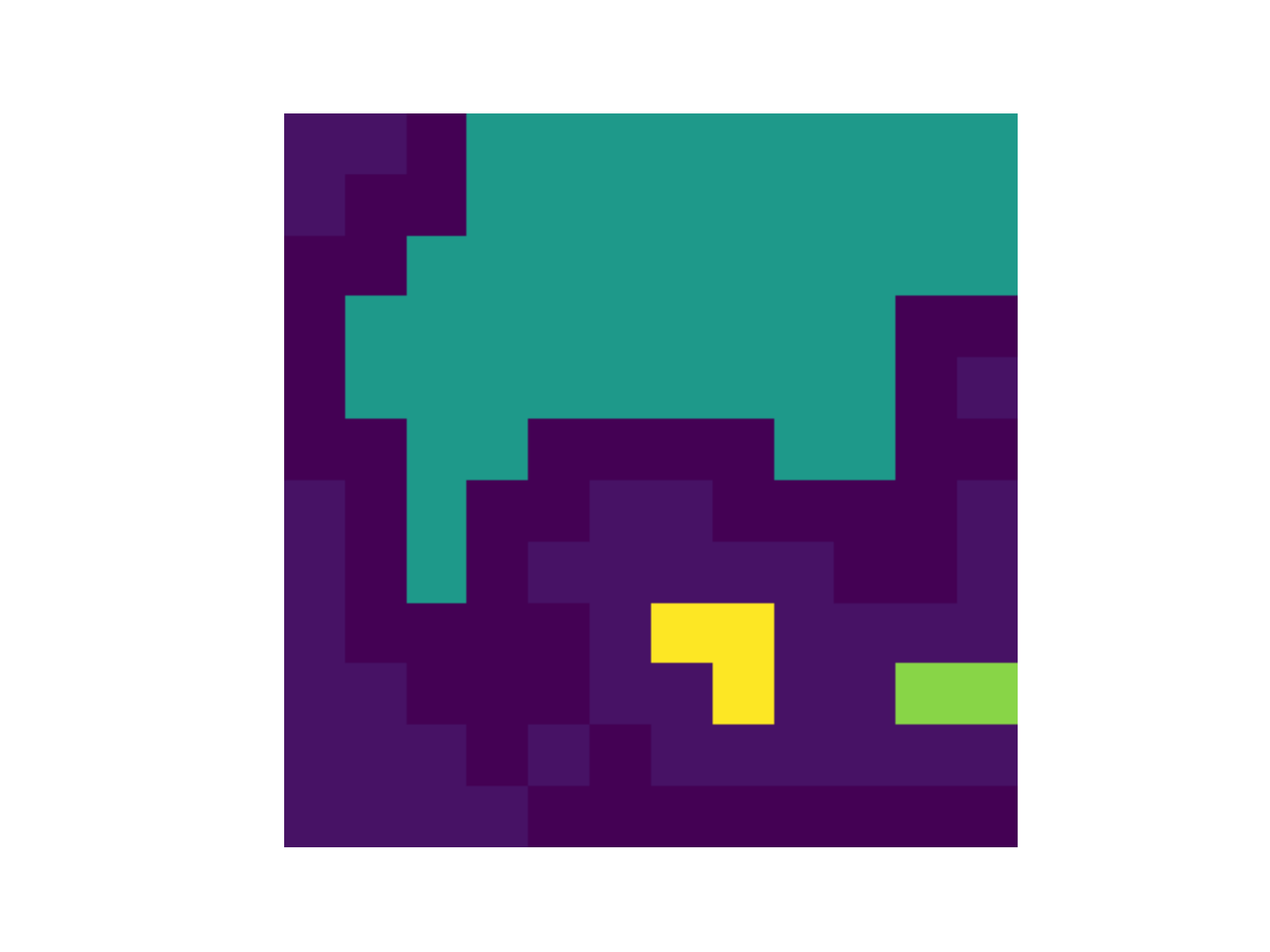}
    \caption{{\bf Left two figures:} Sample cost matrices for shortest problem considered in Section~\ref{sec:pyeopo_shortest_path}. {\bf Right two figures:} Sample cost matrices for the Warcraft shortest path prediction problem considered in Section~\ref{sec:warcraft_shortest_path}. Note that in Section~\ref{sec:warcraft_shortest_path} the node weighted shortest path prediction problem is considered, while in Section~\ref{sec:pyeopo_shortest_path} the edge weighted variant is solved. For ease of comparison, in the left two figures we have reshaped the edge cost vector into a node cost matrix.}
    \label{fig:comparing_cost_matrices}
\end{figure}

\begin{table}[t]
\begin{center}
    \begin{tabular}{l c c}
    \toprule
       Algorithm & Test \edits{Normalized} Regret & Time (in hours) \\
    \midrule
    {\tt BB-net} & \edits{$0.1204$} & \edits{$0.11$} \\
    {\tt PertOpt-net} & \edits{$0.1089$} & \edits{$2.07$} \\
    {\tt DYS-net} & \edits{$0.0889$} & \edits{$0.09$} \\
    \edits{{\tt CVX-net}} & \edits{$0.0214$} & \edits{$2.33$} \\
    \bottomrule
    \end{tabular}
\end{center}
    \caption{Results for the 12-by-12 Warcraft shortest path prediction problem. \edits{We select the model achieving best normalized regret on the validation set. The time displayed is the time till this best normalized regret is achieved. All results are averaged over three runs.}} 
    \label{table:warcraft}
\end{table}

\section{Conclusions}
\label{sec:conclusion}
This work presents a new method for learning to solve ILPs using Davis-Yin splitting which we call {\tt DYS-net}. We prove that the gradient approximation computed in the backward pass of {\tt DYS-net} is indeed a descent direction, thus advancing the current understanding of implicit networks. 
Our experiments show that {\tt DYS-net} is capable of scaling to truly large ILPs, and outperforms existing state-of-the-art methods, \edits{at least when training data is of the form $(d,x^{\star}(d))$. We note that in principle {\tt DYS-net} may be applied given training data of the form $(d,w(d))$; a common paradigm for many predict-then-optimize problems \cite{mandi2023decision}. However, preliminary experiments showed that {\tt SPO+} \cite{elmachtoub2022smart} achieved a substantially lower regret than {\tt DYS-net} in this setting. We leave the extension of {\tt DYS-net} to this data type to future work.}   Our experiments also reveal an interesting dichotomy between problems in which entries of $w(d)$ may take on only a handful of discrete values and problems in which $w(d)$ is more ``continuous''. Future work could explore this dichotomy further, as well as apply {\tt DYS-net} to additional ILPs, for example the traveling salesman problem.

\bibliography{main}

\begin{thebibliography}{61}
\providecommand{\natexlab}[1]{#1}
\providecommand{\url}[1]{\texttt{#1}}
\expandafter\ifx\csname urlstyle\endcsname\relax
  \providecommand{\doi}[1]{doi: #1}\else
  \providecommand{\doi}{doi: \begingroup \urlstyle{rm}\Url}\fi

\bibitem[Abadi et~al.(2016)Abadi, Barham, Chen, Chen, Davis, Dean, Devin, Ghemawat, Irving, Isard, et~al.]{abadi2016tensorflow}
Martin Abadi, Paul Barham, Jianmin Chen, Zhifeng Chen, Andy Davis, Jeffrey Dean, Matthieu Devin, Sanjay Ghemawat, Geoffrey Irving, Michael Isard, et~al.
\newblock Tensorflow: a system for large-scale machine learning.
\newblock In \emph{12th USENIX symposium on operating systems design and implementation (OSDI 16)}, pp.\  265--283, 2016.

\bibitem[Agrawal et~al.(2019{\natexlab{a}})Agrawal, Amos, Barratt, Boyd, Diamond, and Kolter]{cvxpylayers2019}
A.~Agrawal, B.~Amos, S.~Barratt, S.~Boyd, S.~Diamond, and Z.~Kolter.
\newblock Differentiable convex optimization layers.
\newblock In \emph{Advances in Neural Information Processing Systems}, 2019{\natexlab{a}}.

\bibitem[Agrawal et~al.(2019{\natexlab{b}})Agrawal, Amos, Barratt, Boyd, Diamond, and Kolter]{agrawal2019differentiable}
Akshay Agrawal, Brandon Amos, Shane Barratt, Stephen Boyd, Steven Diamond, and J~Zico Kolter.
\newblock Differentiable convex optimization layers.
\newblock \emph{Advances in neural information processing systems}, 32, 2019{\natexlab{b}}.

\bibitem[Agrawal et~al.(2019{\natexlab{c}})Agrawal, Barratt, Boyd, Busseti, and Moursi]{agrawal2019differentiating}
Akshay Agrawal, Shane Barratt, Stephen Boyd, Enzo Busseti, and Walaa~M Moursi.
\newblock Differentiating through a cone program.
\newblock \emph{arXiv preprint arXiv:1904.09043}, 2019{\natexlab{c}}.

\bibitem[Amos \& Kolter(2017)Amos and Kolter]{amos2017optnet}
Brandon Amos and J~Zico Kolter.
\newblock Optnet: Differentiable optimization as a layer in neural networks.
\newblock In \emph{International Conference on Machine Learning}, pp.\  136--145. PMLR, 2017.

\bibitem[Bai et~al.(2019)Bai, Kolter, and Koltun]{bai2019deep}
Shaojie Bai, J~Zico Kolter, and Vladlen Koltun.
\newblock Deep equilibrium models.
\newblock \emph{Advances in Neural Information Processing Systems}, 32, 2019.

\bibitem[Baillon \& Haddad(1977)Baillon and Haddad]{baillon1977quelques}
Jean-Bernard Baillon and Georges Haddad.
\newblock Quelques propri{\'e}t{\'e}s des op{\'e}rateurs angle-born{\'e}s et n-cycliquement monotones.
\newblock \emph{Israel Journal of Mathematics}, 26:\penalty0 137--150, 1977.

\bibitem[Bauschke \& Combettes(2009)Bauschke and Combettes]{bauschke2009baillon}
Heinz~H Bauschke and Patrick~L Combettes.
\newblock The {B}aillon-{H}addad theorem revisited.
\newblock \emph{arXiv preprint arXiv:0906.0807}, 2009.

\bibitem[Bengio(1997)]{bengio1997using}
Yoshua Bengio.
\newblock Using a financial training criterion rather than a prediction criterion.
\newblock \emph{International journal of neural systems}, 8\penalty0 (04):\penalty0 433--443, 1997.

\bibitem[Berthet et~al.(2020)Berthet, Blondel, Teboul, Cuturi, Vert, and Bach]{berthet2020learning}
Quentin Berthet, Mathieu Blondel, Olivier Teboul, Marco Cuturi, Jean-Philippe Vert, and Francis Bach.
\newblock Learning with differentiable perturbed optimizers.
\newblock \emph{Advances in neural information processing systems}, 33:\penalty0 9508--9519, 2020.

\bibitem[Bertoin et~al.(2021)Bertoin, Bolte, Gerchinovitz, and Pauwels]{bertoin2021numerical}
David Bertoin, J{\'e}r{\^o}me Bolte, S{\'e}bastien Gerchinovitz, and Edouard Pauwels.
\newblock Numerical influence of {ReLU}’(0) on backpropagation.
\newblock \emph{Advances in Neural Information Processing Systems}, 34:\penalty0 468--479, 2021.

\bibitem[Bertsekas(1997)]{bertsekas1997nonlinear}
Dimitri~P Bertsekas.
\newblock Nonlinear programming.
\newblock \emph{Journal of the Operational Research Society}, 48\penalty0 (3):\penalty0 334--334, 1997.

\bibitem[Blondel et~al.(2022)Blondel, Berthet, Cuturi, Frostig, Hoyer, Llinares-L{\'o}pez, Pedregosa, and Vert]{blondel2022efficient}
Mathieu Blondel, Quentin Berthet, Marco Cuturi, Roy Frostig, Stephan Hoyer, Felipe Llinares-L{\'o}pez, Fabian Pedregosa, and Jean-Philippe Vert.
\newblock Efficient and modular implicit differentiation.
\newblock \emph{Advances in neural information processing systems}, 35:\penalty0 5230--5242, 2022.

\bibitem[Bolte et~al.(2021)Bolte, Le, Pauwels, and Silveti-Falls]{bolte2021nonsmooth}
J{\'e}r{\^o}me Bolte, Tam Le, Edouard Pauwels, and Tony Silveti-Falls.
\newblock Nonsmooth implicit differentiation for machine-learning and optimization.
\newblock \emph{Advances in neural information processing systems}, 34:\penalty0 13537--13549, 2021.

\bibitem[Bolte et~al.(2024)Bolte, Pauwels, and Vaiter]{bolte2023one}
J{\'e}r{\^o}me Bolte, Edouard Pauwels, and Samuel Vaiter.
\newblock One-step differentiation of iterative algorithms.
\newblock \emph{Advances in Neural Information Processing Systems}, 36, 2024.

\bibitem[Bradbury et~al.(2018)Bradbury, Frostig, Hawkins, Johnson, Leary, Maclaurin, Necula, Paszke, VanderPlas, Wanderman-Milne, et~al.]{bradbury2018jax}
James Bradbury, Roy Frostig, Peter Hawkins, Matthew~James Johnson, Chris Leary, Dougal Maclaurin, George Necula, Adam Paszke, Jake VanderPlas, Skye Wanderman-Milne, et~al.
\newblock Jax: composable transformations of python+ numpy programs.
\newblock 2018.

\bibitem[Chen et~al.(2021)Chen, Donti, Baker, Kolter, and Berg{\'e}s]{chen2021enforcing}
Bingqing Chen, Priya~L Donti, Kyri Baker, J~Zico Kolter, and Mario Berg{\'e}s.
\newblock Enforcing policy feasibility constraints through differentiable projection for energy optimization.
\newblock In \emph{Proceedings of the Twelfth ACM International Conference on Future Energy Systems}, pp.\  199--210, 2021.

\bibitem[Chen et~al.(2022)Chen, Chen, Chen, Heaton, Liu, Wang, and Yin]{chen2021learning}
Tianlong Chen, Xiaohan Chen, Wuyang Chen, Howard Heaton, Jialin Liu, Zhangyang Wang, and Wotao Yin.
\newblock Learning to optimize: A primer and a benchmark.
\newblock \emph{Journal of Machine Learning Research}, 23\penalty0 (189):\penalty0 1--59, 2022.

\bibitem[Chen et~al.(2018)Chen, Liu, Wang, and Yin]{chen2018theoretical}
Xiaohan Chen, Jialin Liu, Zhangyang Wang, and Wotao Yin.
\newblock Theoretical linear convergence of unfolded ista and its practical weights and thresholds.
\newblock \emph{Advances in Neural Information Processing Systems}, 31, 2018.

\bibitem[Clarke(1983)]{clarke1983optimization}
FH~Clarke.
\newblock Optimization and nonsmooth analysis, wiley-interscience.
\newblock \emph{New York}, 1983.

\bibitem[Condat(2016)]{condat2016fast}
Laurent Condat.
\newblock Fast projection onto the simplex and the $\ell_1$ ball.
\newblock \emph{Mathematical Programming}, 158\penalty0 (1):\penalty0 575--585, 2016.

\bibitem[Dalle et~al.(2022)Dalle, Baty, Bouvier, and Parmentier]{dalle2022learning}
Guillaume Dalle, L{\'e}o Baty, Louis Bouvier, and Axel Parmentier.
\newblock Learning with combinatorial optimization layers: a probabilistic approach.
\newblock \emph{arXiv preprint arXiv:2207.13513}, 2022.

\bibitem[Davis \& Yin(2017)Davis and Yin]{davis2017three}
Damek Davis and Wotao Yin.
\newblock A three-operator splitting scheme and its optimization applications.
\newblock \emph{Set-valued and variational analysis}, 25\penalty0 (4):\penalty0 829--858, 2017.

\bibitem[Duchi et~al.(2008)Duchi, Shalev-Shwartz, Singer, and Chandra]{duchi2008efficient}
John Duchi, Shai Shalev-Shwartz, Yoram Singer, and Tushar Chandra.
\newblock Efficient projections onto the $\ell_1$-ball for learning in high dimensions.
\newblock In \emph{Proceedings of the 25th international conference on Machine learning}, pp.\  272--279, 2008.

\bibitem[El~Ghaoui et~al.(2021)El~Ghaoui, Gu, Travacca, Askari, and Tsai]{el2021implicit}
Laurent El~Ghaoui, Fangda Gu, Bertrand Travacca, Armin Askari, and Alicia Tsai.
\newblock Implicit deep learning.
\newblock \emph{SIAM Journal on Mathematics of Data Science}, 3\penalty0 (3):\penalty0 930--958, 2021.

\bibitem[Elmachtoub \& Grigas(2022)Elmachtoub and Grigas]{elmachtoub2022smart}
Adam~N Elmachtoub and Paul Grigas.
\newblock Smart “predict, then optimize”.
\newblock \emph{Management Science}, 68\penalty0 (1):\penalty0 9--26, 2022.

\bibitem[Ferber et~al.(2020)Ferber, Wilder, Dilkina, and Tambe]{ferber2020mipaal}
Aaron Ferber, Bryan Wilder, Bistra Dilkina, and Milind Tambe.
\newblock Mipaal: Mixed integer program as a layer.
\newblock In \emph{Proceedings of the AAAI Conference on Artificial Intelligence}, volume~34, pp.\  1504--1511, 2020.

\bibitem[Fung et~al.(2022)Fung, Heaton, Li, McKenzie, Osher, and Yin]{fung2022jfb}
Samy~Wu Fung, Howard Heaton, Qiuwei Li, Daniel McKenzie, Stanley Osher, and Wotao Yin.
\newblock {JFB}: Jacobian-free backpropagation for implicit networks.
\newblock In \emph{Proceedings of the AAAI Conference on Artificial Intelligence}, 2022.

\bibitem[Geng et~al.(2021)Geng, Zhang, Bai, Wang, and Lin]{geng2021training}
Zhengyang Geng, Xin-Yu Zhang, Shaojie Bai, Yisen Wang, and Zhouchen Lin.
\newblock On training implicit models.
\newblock \emph{Advances in Neural Information Processing Systems}, 34:\penalty0 24247--24260, 2021.

\bibitem[Gilton et~al.(2021)Gilton, Ongie, and Willett]{gilton2021deep}
Davis Gilton, Gregory Ongie, and Rebecca Willett.
\newblock Deep equilibrium architectures for inverse problems in imaging.
\newblock \emph{IEEE Transactions on Computational Imaging}, 7:\penalty0 1123--1133, 2021.

\bibitem[Guyomarch()]{guyomarchwarcraft}
Jean Guyomarch.
\newblock Warcraft ii open-source map editor, 2017.
\newblock \emph{http://github. com/war2/war2edit}.

\bibitem[He et~al.(2016)He, Zhang, Ren, and Sun]{he2016deep}
Kaiming He, Xiangyu Zhang, Shaoqing Ren, and Jian Sun.
\newblock Deep residual learning for image recognition.
\newblock In \emph{Proceedings of the IEEE conference on computer vision and pattern recognition}, pp.\  770--778, 2016.

\bibitem[Heaton \& Fung(2023)Heaton and Fung]{heaton2023explainable}
Howard Heaton and Samy~Wu Fung.
\newblock Explainable ai via learning to optimize.
\newblock \emph{Scientific Reports}, 13\penalty0 (1):\penalty0 10103, 2023.

\bibitem[Kacem et~al.(2021)Kacem, Kellerer, and Mahjoub]{kacem2021preface}
Imed Kacem, Hans Kellerer, and A~Ridha Mahjoub.
\newblock Preface: New trends on combinatorial optimization for network and logistical applications.
\newblock \emph{Annals of Operations Research}, 298\penalty0 (1):\penalty0 1--5, 2021.

\bibitem[Karp(1972)]{karp1972reducibility}
Richard~M Karp.
\newblock Reducibility among combinatorial problems.
\newblock In \emph{Complexity of computer computations}, pp.\  85--103. Springer, 1972.

\bibitem[Kotary et~al.(2021)Kotary, Fioretto, van Hentenryck, and Wilder]{kotary2021end}
James Kotary, Ferdinando Fioretto, Pascal van Hentenryck, and Bryan Wilder.
\newblock End-to-end constrained optimization learning: A survey.
\newblock In \emph{30th International Joint Conference on Artificial Intelligence, IJCAI 2021}, pp.\  4475--4482. International Joint Conferences on Artificial Intelligence, 2021.

\bibitem[Li \& Malik(2017)Li and Malik]{li2017learning}
Ke~Li and Jitendra Malik.
\newblock Learning to optimize.
\newblock In \emph{International Conference on Learning Representations}, 2017.

\bibitem[Li et~al.(2023)Li, McKenzie, and Yin]{li2021simplex}
Qiuwei Li, Daniel McKenzie, and Wotao Yin.
\newblock From the simplex to the sphere: faster constrained optimization using the hadamard parametrization.
\newblock \emph{Information and Inference: A Journal of the IMA}, 12\penalty0 (3):\penalty0 1898--1937, 2023.

\bibitem[Liao et~al.(2018)Liao, Xiong, Fetaya, Zhang, Yoon, Pitkow, Urtasun, and Zemel]{liao2018reviving}
Renjie Liao, Yuwen Xiong, Ethan Fetaya, Lisa Zhang, KiJung Yoon, Xaq Pitkow, Raquel Urtasun, and Richard Zemel.
\newblock Reviving and improving recurrent back-propagation.
\newblock In \emph{International Conference on Machine Learning}, pp.\  3082--3091. PMLR, 2018.

\bibitem[Liu et~al.(2023)Liu, Chen, Wang, Yin, and Cai]{liu2023towards}
Jialin Liu, Xiaohan Chen, Zhangyang Wang, Wotao Yin, and HanQin Cai.
\newblock Towards constituting mathematical structures for learning to optimize.
\newblock In \emph{International Conference on Machine Learning}, pp.\  21426--21449. PMLR, 2023.

\bibitem[Liu et~al.(2022)Liu, Xu, Gan, Kamilov, et~al.]{liu2022online}
Jiaming Liu, Xiaojian Xu, Weijie Gan, Ulugbek Kamilov, et~al.
\newblock Online deep equilibrium learning for regularization by denoising.
\newblock \emph{Advances in Neural Information Processing Systems}, 35:\penalty0 25363--25376, 2022.

\bibitem[Mandi \& Guns(2020)Mandi and Guns]{mandi2020interior}
Jayanta Mandi and Tias Guns.
\newblock Interior point solving for lp-based prediction+ optimisation.
\newblock \emph{Advances in Neural Information Processing Systems}, 33:\penalty0 7272--7282, 2020.

\bibitem[Mandi et~al.(2023)Mandi, Kotary, Berden, Mulamba, Bucarey, Guns, and Fioretto]{mandi2023decision}
Jayanta Mandi, James Kotary, Senne Berden, Maxime Mulamba, Victor Bucarey, Tias Guns, and Ferdinando Fioretto.
\newblock Decision-focused learning: Foundations, state of the art, benchmark and future opportunities.
\newblock \emph{arXiv preprint arXiv:2307.13565}, 2023.

\bibitem[McKenzie et~al.(2021)McKenzie, Heaton, Li, Fung, Osher, and Yin]{mckenzie2021operator}
Daniel McKenzie, Howard Heaton, Qiuwei Li, Samy~Wu Fung, Stanley Osher, and Wotao Yin.
\newblock Operator splitting for learning to predict equilibria in convex games.
\newblock \emph{arXiv e-prints}, pp.\  arXiv--2106, 2021.

\bibitem[Nesterov \& Spokoiny(2017)Nesterov and Spokoiny]{nesterov2017random}
Yurii Nesterov and Vladimir Spokoiny.
\newblock Random gradient-free minimization of convex functions.
\newblock \emph{Foundations of Computational Mathematics}, 17:\penalty0 527--566, 2017.

\bibitem[Paszke et~al.(2019)Paszke, Gross, Massa, Lerer, Bradbury, Chanan, Killeen, Lin, Gimelshein, Antiga, et~al.]{paszke2019pytorch}
Adam Paszke, Sam Gross, Francisco Massa, Adam Lerer, James Bradbury, Gregory Chanan, Trevor Killeen, Zeming Lin, Natalia Gimelshein, Luca Antiga, et~al.
\newblock Pytorch: An imperative style, high-performance deep learning library.
\newblock \emph{Advances in Neural Information Processing Systems}, 32, 2019.

\bibitem[Pedregosa \& Gidel(2018)Pedregosa and Gidel]{pedregosa2018adaptive}
Fabian Pedregosa and Gauthier Gidel.
\newblock Adaptive three operator splitting.
\newblock In \emph{International Conference on Machine Learning}, pp.\  4085--4094. PMLR, 2018.

\bibitem[Piorkowski et~al.(2009)Piorkowski, Sarafijanovic-Djukic, and Grossglauser]{piorkowski2009crawdad}
Michal Piorkowski, Natasa Sarafijanovic-Djukic, and Matthias Grossglauser.
\newblock Crawdad data set epfl/mobility (v. 2009-02-24), 2009.

\bibitem[Pogan{\v{c}}i{\'c} et~al.(2019)Pogan{\v{c}}i{\'c}, Paulus, Musil, Martius, and Rolinek]{poganvcic2019differentiation}
Marin~Vlastelica Pogan{\v{c}}i{\'c}, Anselm Paulus, Vit Musil, Georg Martius, and Michal Rolinek.
\newblock Differentiation of blackbox combinatorial solvers.
\newblock In \emph{International Conference on Learning Representations}, 2019.

\bibitem[Ruthotto et~al.(2018)Ruthotto, Chung, and Chung]{ruthotto2018optimal}
Lars Ruthotto, Julianne Chung, and Matthias Chung.
\newblock Optimal experimental design for inverse problems with state constraints.
\newblock \emph{SIAM Journal on Scientific Computing}, 40\penalty0 (4):\penalty0 B1080--B1100, 2018.

\bibitem[Ryu \& Yin(2022)Ryu and Yin]{ryu2022large}
Ernest Ryu and Wotao Yin.
\newblock \emph{Large-Scale Convex Optimization: Algorithm Designs via Monotone Operators}.
\newblock Cambridge University Press, Cambridge, England, 2022.

\bibitem[Sahoo et~al.(2022)Sahoo, Paulus, Vlastelica, Musil, Kuleshov, and Martius]{sahoo2022gradient}
Subham~Sekhar Sahoo, Anselm Paulus, Marin Vlastelica, V{\'\i}t Musil, Volodymyr Kuleshov, and Georg Martius.
\newblock Backpropagation through combinatorial algorithms: Identity with projection works.
\newblock In \emph{The Eleventh International Conference on Learning Representations}, 2022.

\bibitem[Sbihi \& Eglese(2010)Sbihi and Eglese]{sbihi2010combinatorial}
Abdelkader Sbihi and Richard~W Eglese.
\newblock Combinatorial optimization and green logistics.
\newblock \emph{Annals of Operations Research}, 175\penalty0 (1):\penalty0 159--175, 2010.

\bibitem[Tang \& Khalil(2022)Tang and Khalil]{tang2022pyepo}
Bo~Tang and Elias~Boutros Khalil.
\newblock Pyepo: A pytorch-based end-to-end predict-then-optimize library with linear objective function.
\newblock In \emph{OPT 2022: Optimization for Machine Learning (NeurIPS 2022 Workshop)}, 2022.

\bibitem[Vapnik(1999)]{vapnik1999nature}
Vladimir Vapnik.
\newblock \emph{The nature of statistical learning theory}.
\newblock Springer science \& business media, 1999.

\bibitem[Vlastelica et~al.(2019)Vlastelica, Paulus, Musil, Martius, and Rolinek]{vlastelica2019differentiation}
Marin Vlastelica, Anselm Paulus, Vit Musil, Georg Martius, and Michal Rolinek.
\newblock Differentiation of blackbox combinatorial solvers.
\newblock In \emph{International Conference on Learning Representations}, 2019.

\bibitem[Wang \& Tang(2021)Wang and Tang]{wang2021deep}
Qi~Wang and Chunlei Tang.
\newblock Deep reinforcement learning for transportation network combinatorial optimization: A survey.
\newblock \emph{Knowledge-Based Systems}, 233:\penalty0 107526, 2021.

\bibitem[Wilder et~al.(2019)Wilder, Dilkina, and Tambe]{wilder2019melding}
Bryan Wilder, Bistra Dilkina, and Milind Tambe.
\newblock Melding the data-decisions pipeline: Decision-focused learning for combinatorial optimization.
\newblock In \emph{Proceedings of the AAAI Conference on Artificial Intelligence}, volume~33, pp.\  1658--1665, 2019.

\bibitem[Yurtsever et~al.(2021)Yurtsever, Mangalick, and Sra]{yurtsever2021three}
Alp Yurtsever, Varun Mangalick, and Suvrit Sra.
\newblock Three operator splitting with a nonconvex loss function.
\newblock In \emph{International Conference on Machine Learning}, pp.\  12267--12277. PMLR, 2021.

\bibitem[Zhong \& Tang(2021)Zhong and Tang]{zhong2021preface}
Liwei Zhong and Guochun Tang.
\newblock Preface: Combinatorial optimization drives the future of health care.
\newblock \emph{Journal of Combinatorial Optimization}, 42\penalty0 (4):\penalty0 675--676, 2021.

\bibitem[Ziegler(2012)]{ziegler2012lectures}
G{\"u}nter~M Ziegler.
\newblock \emph{Lectures on polytopes}, volume 152.
\newblock Springer Science \& Business Media, 2012.

\end{thebibliography}
\bibliographystyle{tmlr}

\appendix

\section{Appendix A: Proofs}
\label{appendix:proofs}
For the reader's convenience we restate each result given in the main text before proving it.

\noindent{\bf Theorem~\ref{thm:DYS_Fixed_Point}. }\textit{\thmDYSFP}

\begin{proof}
First note that $\nabla_x f_{\Theta}({x}_{\Theta};\gamma, d) = w_{\Theta}(d) + \gamma x$. Because $f_{\Theta}({x}_{\Theta};\gamma, d)$ is strongly convex, ${x}_{\Theta}$ is unique and is characterized by the first order optimality condition:
\begin{equation}
    \nabla_x f_{\Theta}({x}_{\Theta};d)^{\top}\left(x - {x}_{\Theta}\right) \geq 0 \quad \text{ for all } x \in \mathcal{C}.
    \label{eq:FONC}
\end{equation}
Note that \eqref{eq:FONC} can equally be viewed as a variational inequality with operator $F_{\Theta}(x;d) = \nabla_x f_{\Theta}({x}_{\Theta};d)$. We deduce that $z_{\Theta}$ with $x_{\Theta} = P_{\mathcal{C}_2}(z_{\Theta})$ is a fixed point of
\begin{subequations}
\begin{align} 
T_\Theta(z)  & \triangleq  z \!-\! P_{\mathcal{C}_2}(z) + P_{\mathcal{C}_1}\left( 2\cdot P_{\mathcal{C}_2}(z) \!-\! z \!-\! \alpha \left[w_{\Theta}(d) + \gamma P_{\mathcal{C}_2}(z)\right])\right) \\
    & =z \!-\! P_{\mathcal{C}_2}(z) + P_{\mathcal{C}_1}\left(\left(2 - \alpha\gamma\right)\cdot P_{\mathcal{C}_2}(z) - z - \alpha w_{\Theta}(d)\right) \label{eqapp: T-DYS-Param_simpler}
\end{align}
\end{subequations}
from \citet[Theorem 4.2]{mckenzie2021operator}, which is itself a standard application of ideas from operator splitting \citep{davis2017three,ryu2022large}. \\
As $\nabla_x f_{\Theta}({x}_{\Theta};\gamma, d)$ is $\gamma$-Lipschitz continuous it is also $\nabla_x f_\Theta$ is $1/\gamma$-cocoercive by the Baillon-Haddad theorem \citep{baillon1977quelques,bauschke2009baillon}. \citet[Theorem 3.3]{mckenzie2021operator} then implies that $z^k \to z_{\Theta}$ \edits{and $x^k\to x_\Theta$. However, this theorem} does not give a convergence rate. In fact, the convergence rate can be deduced from known results; because $T_{\Theta}$ is averaged for $\alpha<2/\gamma$ the rate \edits{$\|z^{k+1} - z^k\| =\mathcal{O}(1/k)$} follows from \citet[Theorem 1]{ryu2022large}.
\edits{
Thus,
\begin{equation}
    \|x^{k+1} - x^k\|
    = \|P_{\sC_2}(z^{k+1}) - P_{\sC_1}(z^k)\|
    \leq \|z^{k+1} - z^k\| 
    = \mathcal{O}(1/k),
\end{equation}
as desired.
}

\end{proof}

\noindent Next, we state two auxiliary lemmas relating the Jacobian matrices to projections onto linear subspaces.

\begin{lemma}
    \label{lemma:dPC1/dz}
    If $\mathcal{C}_1 \triangleq \{ x : Ax=b\}$, for full-rank $A\in\mathbb{R}^{m\times n}$, $b\in\mathbb{R}^n$ with $m<n$, and $\mathcal{H}_1 \triangleq \Null(A)$ then 
    \begin{equation}
        \pp{P_{\mathcal{C}_1}}{z} = P_{\mathcal{H}_1},
        \quad \mbox{for all}\ z\in\mathbb{R}^n.
    \end{equation} 
\end{lemma}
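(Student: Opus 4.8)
The plan is to reduce the claim to the explicit formula for $P_{\mathcal{C}_1}$ furnished by Lemma~\ref{lemma:Formulas_for_projections} and then identify the resulting (constant) Jacobian with the orthogonal projector onto $\mathcal{H}_1 = \Null(A)$ via a short linear-algebra computation.

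First I would recall that Lemma~\ref{lemma:Formulas_for_projections} gives $P_{\mathcal{C}_1}(z) = z - A^\dagger(Az - b)$ with $A^\dagger = V\Sigma^{-1}U^\top$ and $A = U\Sigma V^\top$ the compact SVD of $A$. Since $b$ is constant, the map $z \mapsto z - A^\dagger(Az-b)$ is affine, hence everywhere differentiable with constant Jacobian
\[
\pp{P_{\mathcal{C}_1}}{z} = I - A^\dagger A .
\]

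Next I would show $I - A^\dagger A = P_{\mathcal{H}_1}$. Substituting the SVD and using that $A$ is full rank so that $U^\top U = I_m$ and $V^\top V = I_m$, one gets $A^\dagger A = V\Sigma^{-1}U^\top U \Sigma V^\top = VV^\top$. This matrix is symmetric and idempotent ($(VV^\top)^2 = V(V^\top V)V^\top = VV^\top$), hence it is the orthogonal projector onto $\Range(V) = \Range(A^\top) = \Null(A)^\perp$. Consequently $I - VV^\top$ is the orthogonal projector onto $\big(\Null(A)^\perp\big)^\perp = \Null(A) = \mathcal{H}_1$, i.e. $\pp{P_{\mathcal{C}_1}}{z} = I - VV^\top = P_{\mathcal{H}_1}$ for every $z$, as claimed.

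I do not anticipate a genuine obstacle; the only points needing minor care are the standard orthogonal-decomposition identity $\Range(A^\top) = \Null(A)^\perp$ and the fact that the compact SVD of a full-rank $A$ with $m<n$ yields $U^\top U = I_m$, $V^\top V = I_m$. As an alternative to the SVD argument, one may verify directly that $M := I - A^\dagger A$ is symmetric, satisfies $M^2 = M$, and has $\Range(M) = \Null(A)$: from $AM = A - AA^\dagger A = A - A = 0$ one gets $\Range(M) \subseteq \Null(A)$, and since $\operatorname{rank}(M) = n - \operatorname{rank}(A^\dagger A) = n - m = \dim\Null(A)$ the inclusion is an equality, which again characterizes $M$ as $P_{\mathcal{H}_1}$.
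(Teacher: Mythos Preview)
Your proposal is correct and follows essentially the same route as the paper: differentiate the affine formula $P_{\mathcal{C}_1}(z)=z-A^\dagger(Az-b)$ from Lemma~\ref{lemma:Formulas_for_projections} to obtain $I-A^\dagger A$, then use the compact SVD to identify $A^\dagger A = VV^\top$ as the projector onto $\Range(A^\top)$ and conclude $I-A^\dagger A = P_{\Null(A)}$. Your added rank-based alternative is a nice redundancy but not needed.
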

\begin{proof}
     Let $A = U\Sigma V^{\top}$ denote the reduced SVD of $A$, and note that as $A \in \mathbb{R}^{m\times n}$ with $m < n$ we have $U \in \mathbb{R}^{m\times m}, \Sigma \in \mathbb{R}^{m\times m}$ and $V \in \mathbb{R}^{n\times m}$.
    Differentiating the formula for $P_{\mathcal{C}_1}$ given in Lemma~\ref{lemma:Formulas_for_projections} yields
    \begin{equation}
        \pp{P_{\mathcal{C}_1}}{z} = I - A^{\dagger}A,
    \end{equation}
    where $A^{\dagger} \triangleq V\Sigma^{-1}U^{\top}$. Note
    \begin{equation}
        A^{\dagger}A = \left(V\Sigma^{-1}U^{\top}\right)\left( U\Sigma V^{\top}\right) = VV^{\top},
    \end{equation}
    which is the orthogonal projection onto $\Range(V) = \Range(A^{\top})$. It follows that $I - A^{\dagger}A$ is the orthogonal projection on to $\Range(A^{\top})^{\bot} = \Null(A)$.
\end{proof}

\begin{lemma} 
    \label{lemma:dPC2/dz}
    Define the multi-valued function 
    \begin{equation}
        c(\alpha) \triangleq \partial \max(0,\alpha)
        = \begin{cases} 
        1 & \text{ if } \alpha > 0 \\ 0 & \text{ if } \alpha < 0  \\ [0,1] & \text{ if } \alpha = 0 \end{cases}
    \end{equation}
    and, for $z \in \mathbb{R}^n$,  define $\mathcal{H}_{2,z} \triangleq \operatorname{Span}(e_i : z_i > 0)$. Then
    \begin{equation}
        \partial P_{\mathcal{C}_2}(\bar{z}) 
        =  \left[\dd{}{z} \Relu(z) \right]_{z=\overline{z}}
        = \mathrm{diag}(c(\overline{z})),
    \end{equation}
    and adopting the convention for choosing an element of $\partial P_{\mathcal{C}_2}(\bar{z})$ stated in the main text:
    \begin{equation}
        \left.\dd{P_{\mathcal{C}_2}}{z}\right|_{z = \bar{z}}  =  \mathrm{diag}(\tilde{c}(\overline{z}))
        = P_{\mathcal{H}_{2,\overline{z}}}.
    \end{equation} 
\end{lemma}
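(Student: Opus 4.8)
The plan is to leverage the explicit formula $P_{\mathcal{C}_2}=\Relu$ from Lemma~\ref{lemma:Formulas_for_projections}, which shows that $P_{\mathcal{C}_2}$ acts \emph{separably}: $[P_{\mathcal{C}_2}(z)]_i=\max\{0,z_i\}$ depends on $z_i$ alone. First I would record the elementary scalar fact that $t\mapsto\max\{0,t\}$ is differentiable on $\R\setminus\{0\}$ with derivative $1$ on $(0,\infty)$ and $0$ on $(-\infty,0)$. By separability this makes $P_{\mathcal{C}_2}$ differentiable precisely on $D_{P_{\mathcal{C}_2}}=\{z:z_i\neq 0\ \text{for all }i\}$, with $\dd{P_{\mathcal{C}_2}}{z}=\mathrm{diag}(\tilde c(z))$ there.

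Next I would compute the Clarke Jacobian at an arbitrary $\bar z$. Since the Jacobian on $D_{P_{\mathcal{C}_2}}$ is diagonal with its $(i,i)$ entry a function of $z_i$ only, a sequence $z^{(k)}\to\bar z$ in $D_{P_{\mathcal{C}_2}}$ forces the $(i,i)$ entry to $1$ when $\bar z_i>0$ and to $0$ when $\bar z_i<0$, while for $\bar z_i=0$ the entry equals $\tilde c(z^{(k)}_i)\in\{0,1\}$, and both values are attainable by choosing the sign of $z^{(k)}_i$ independently across coordinates. Hence the set of limiting Jacobians is the ``box'' of diagonal matrices $\mathrm{diag}(v)$ with $v_i=\tilde c(\bar z_i)$ for $\bar z_i\neq0$ and $v_i\in\{0,1\}$ for $\bar z_i=0$. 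Taking its convex hull — using that $v\mapsto\mathrm{diag}(v)$ is linear and that the convex hull of a finite product of sets is the product of the convex hulls — relaxes each $\bar z_i=0$ slot to $[0,1]$, yielding $\partial P_{\mathcal{C}_2}(\bar z)=\mathrm{diag}(c(\bar z))$ with $c$ applied entrywise. This is the first displayed identity.

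Finally I would verify the selection claim. When all $\bar z_i\neq0$ the Clarke Jacobian is the singleton $\{\mathrm{diag}(\tilde c(\bar z))\}$; when some $\bar z_i=0$, the convention fixed in the main text picks the element with $0$ in each such diagonal slot, which is again $\mathrm{diag}(\tilde c(\bar z))$. It remains to identify this matrix with $P_{\mathcal{H}_{2,\bar z}}$: because $\{e_i:\bar z_i>0\}$ is an orthonormal family, $P_{\mathcal{H}_{2,\bar z}}=\sum_{i:\bar z_i>0}e_ie_i^{\top}$, i.e.\ the diagonal matrix with a $1$ in position $(i,i)$ exactly when $\bar z_i>0$ and $0$ otherwise — precisely $\mathrm{diag}(\tilde c(\bar z))$. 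I do not anticipate a genuine obstacle here; the only step needing care is the passage from coordinatewise limits of Jacobians to the Clarke Jacobian, where separability of $P_{\mathcal{C}_2}$ (so that coordinates can be perturbed independently) and the commutation of convex hull with finite products are exactly what make the box of limiting matrices convexify entrywise.
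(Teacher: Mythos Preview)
Your proposal is correct and follows essentially the same approach as the paper: compute the classical Jacobian at smooth points, pass to the Clarke Jacobian at points with zero coordinates via limits and convex hulls, then identify the selected element with the orthogonal projection onto $\mathcal{H}_{2,\bar z}$. Your treatment is in fact slightly more careful than the paper's in justifying the multi-coordinate case (via separability and the product-of-convex-hulls argument) and in identifying the projection (via the explicit rank-one sum rather than the paper's idempotency check), but these are minor stylistic differences within the same line of argument.
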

\begin{proof}
    First, suppose $\overline{z} \in \mathbb{R}^n$ satisfies $\overline{z}_i \neq 0$, for all $i\in[n]$, \ie $\overline{z}$ is a smooth point of $P_{\mathcal{C}_2}$. Note 
    \begin{equation}
        \dd{[\Relu(\overline{z}_i)]}{z} = 1
        \ \mbox{if}\ i=j\ \mbox{and}\  z_i > 0
        \qquad \mbox{and}\qquad 
        \dd{[\Relu(\overline{z}_i)]}{z} = 0
        \ \mbox{if}\ i\neq j \ \mbox{or}\ z_i < 0.
    \end{equation}
    Thus, the Jacobian matrix is diagonal with a $1$ in the $(i,i)$-th position whenever $\overline{z}_i > 0$ and $0$ otherwise, \ie $\left.\dd{P_{\mathcal{C}_2}}{z}\right|_{z = \bar{z}} = \mathrm{diag}(c(\overline{z}))$. Now suppose $z_i = 0$ for one $i$. For all $z \in \mathbb{R}^n$ with $z_i < 0$, the Jacobian $\left.\dd{P_{\mathcal{C}_2}}{z}\right|_{z}$ is well-defined and has a 0 in the $(i,i)$-th position, while for $z \in \mathbb{R}^n$ with $z_i > 0$, the Jacobian $\left.\dd{P_{\mathcal{C}_2}}{z}\right|_{z}$ is well-defined and has a 1 in the $(i,i)$-th position. Taking the convex hull yields the interval $[0,1]$ in the $(i,i)$-th position, as claimed. The case where $\overline{z}_i = 0$ for multiple $i$ is similar. 
    
    Consequently, the product of $\left.\dd{P_{\mathcal{C}_2}}{z}\right|_{z = \bar{z}}$ and any vector $v\in\mathbb{R}^n$ equals $v$ if and only if $v \in \operatorname{Span}(e_i : \overline{z}_i > 0)$.
    This shows the linear operator is idempotent with fixed point set $\mathcal{H}_{2,\overline{z}}$, \ie  it is the projection operator $P_{\mathcal{H}_{2,\overline{z}}}$. 
\end{proof}

\noindent{\bf Theorem~\ref{thm:Jacobian_of_T}. }\textit{\thmJacobianT}

\begin{proof}
    Differentiating the expression for $T_{\Theta}$ in \eqref{eq: T-DYS-Param_simpler_maintext} with respect to $z$ yields
    \begin{subequations} 
    \begin{align}
        \left.\pp{T_{\Theta}}{z}\right|_{z=\hat{z}} 
        &= I - \left. \dd{P_{\mathcal{C}_2}}{z}\right|_{z=\hat{z}} + \left.\dd{P_{\mathcal{C}_1}}{z}\right|_{z=y(\hat{z})}\left[(2-\alpha\gamma)\cdot \left. \dd{P_{\mathcal{C}_2}}{z}\right|_{z=\hat{z}} - I\right] \\[5pt]
        & = I - P_{\mathcal{H}_{2,\hat{z}}} + P_{\mathcal{H}_{1}}\left((2-\alpha\gamma)P_{\mathcal{H}_{2,\hat{z}}} - I\right),
        \quad \mbox{for all}\ \hat{z}\in \mathbb{R}^n,
    \end{align}
    \end{subequations}
    where, for notational brevity, we set $y(\hat{z})\triangleq \left(2 - \alpha\gamma\right)\cdot P_{\mathcal{C}_2}(\hat{z}) - \hat{z} - \alpha w_{\Theta}(d)$ in the first line and the second line follows from Lemmas \ref{lemma:dPC1/dz} and \ref{lemma:dPC2/dz}. Repeatedly using the fact, for any subspace $\mathcal{H}\subset \mathbb{R}^{n}$, $P_{\mathcal{H}^{\bot}} = I - P_{\mathcal{H}}$, the derivative  $\partial T_\Theta / \partial z$ can be further rewritten:
    \begin{subequations}
    \begin{align}
       \left.\pp{T_{\Theta}}{z}\right|_{z=\hat{z}} &= \textcolor{green!50!black}{I - P_{\mathcal{H}_{2,\hat{z}}}}  + (2-\alpha\gamma)\cdot P_{\mathcal{H}_{1}}P_{\mathcal{H}_{2,\hat{z}}} -P_{\mathcal{H}_{1}} \\
        &= \textcolor{green!50!black}{P_{\mathcal{H}_{2,\hat{z}}^{\bot}}}  + (2-\alpha\gamma)\cdot P_{\mathcal{H}_{1}}\left(I - P_{\mathcal{H}_{2,\hat{z}}^{\bot}}\right) -P_{\mathcal{H}_{1}} \\
        & = \textcolor{red}{P_{\mathcal{H}_{2,\hat{z}}^{\bot}}} + P_{\mathcal{H}_{1}} + \textcolor{blue}{(1-\alpha\gamma)\cdot P_{\mathcal{H}_{1}}} - \textcolor{red}{P_{\mathcal{H}_{1}}P_{\mathcal{H}_{2,\hat{z}}^{\bot}}} - \textcolor{blue}{(1-\alpha\gamma)\cdot P_{\mathcal{H}_{1}}P_{\mathcal{H}_{2,\hat{z}}^{\bot}}} - P_{\mathcal{H}_{1}} \\
        &= \textcolor{red}{(I - P_{\mathcal{H}_{1}})P_{\mathcal{H}_{2,\hat{z}}^{\top}}} + \textcolor{blue}{(1-\alpha\gamma)\cdot P_{\mathcal{H}_{1}}(I - P_{\mathcal{H}_{2,\hat{z}}^{\bot}})} \\
        &= P_{\mathcal{H}_{1}^{\bot}}P_{\mathcal{H}_{2,\hat{z}}^{\bot}} + (1-\alpha\gamma)\cdot P_{\mathcal{H}_{1}}P_{\mathcal{H}_{2,\hat{z}}},
        \quad \mbox{for all} \ \hat{z}\in\mathbb{R}^n,
    \end{align}
    \end{subequations}
    completing the proof.
\end{proof} 
We use the following lemma   to prove Theorem~\ref{thm: JFB_Applies!}.

\begin{lemma} \label{lemma: LICQ-Empty-Orthogonal-Intersection}
    If the LICQ condition holds at ${x}_{\Theta}$, then $\mathcal{H}_1^{\bot}\cap\mathcal{H}_{2,{z}_\Theta}^{\bot} = \{0\}$. 
\end{lemma}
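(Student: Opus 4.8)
The statement to prove is that under LICQ at $x_\Theta$, we have $\mathcal{H}_1^\perp \cap \mathcal{H}_{2,z_\Theta}^\perp = \{0\}$, where $\mathcal{H}_1 = \operatorname{Null}(A)$ and $\mathcal{H}_{2,z_\Theta} = \operatorname{Span}(e_i : [z_\Theta]_i > 0)$. The plan is to translate everything into explicit spanning sets and invoke the linear independence guaranteed by LICQ.

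First I would identify the two orthogonal complements concretely. Since $A$ is full-rank with rows $a_1^\top,\dots,a_m^\top$, we have $\mathcal{H}_1^\perp = \operatorname{Null}(A)^\perp = \operatorname{Range}(A^\top) = \operatorname{Span}(a_1,\dots,a_m)$. For the second subspace, $\mathcal{H}_{2,z_\Theta} = \operatorname{Span}(e_i : [z_\Theta]_i > 0)$, so its orthogonal complement is $\mathcal{H}_{2,z_\Theta}^\perp = \operatorname{Span}(e_i : [z_\Theta]_i \le 0)$. The key bridge I need is that the index set $\{i : [z_\Theta]_i \le 0\}$ coincides with the active set $\mathcal{A}(x_\Theta) = \{i : [x_\Theta]_i = 0\}$. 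This follows from the fixed-point relation $x_\Theta = P_{\mathcal{C}_2}(z_\Theta) = \operatorname{ReLU}(z_\Theta)$ established in Theorem~\ref{thm:DYS_Fixed_Point}: indeed $[x_\Theta]_i = \max(0,[z_\Theta]_i)$, so $[x_\Theta]_i = 0 \iff [z_\Theta]_i \le 0$. (One should be slightly careful that the fixed point $z_\Theta$ producing $x_\Theta$ is the relevant one; this is exactly the relation in \eqref{eq: VI-DYS}.) Hence $\mathcal{H}_{2,z_\Theta}^\perp = \operatorname{Span}(e_i : i \in \mathcal{A}(x_\Theta))$.

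Next I would argue by contradiction on dimensions, or directly. Suppose $v \in \mathcal{H}_1^\perp \cap \mathcal{H}_{2,z_\Theta}^\perp$ is nonzero. Then $v$ is simultaneously a linear combination of $\{a_1,\dots,a_m\}$ and a linear combination of $\{e_i : i\in\mathcal{A}(x_\Theta)\}$, so writing $v = \sum_{j=1}^m \lambda_j a_j = \sum_{i\in\mathcal{A}(x_\Theta)} \mu_i e_i$ gives a nontrivial linear dependence $\sum_j \lambda_j a_j - \sum_i \mu_i e_i = 0$ among the combined family $\{a_1,\dots,a_m\}\cup\{e_i : i\in\mathcal{A}(x_\Theta)\}$ (nontrivial since $v\neq 0$ forces not all coefficients to vanish). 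This contradicts the LICQ hypothesis, which asserts precisely that this combined family is linearly independent. Therefore $v = 0$, establishing $\mathcal{H}_1^\perp \cap \mathcal{H}_{2,z_\Theta}^\perp = \{0\}$.

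I expect the only genuinely delicate point to be the identification of $\{i : [z_\Theta]_i \le 0\}$ with the active set $\mathcal{A}(x_\Theta)$ — in particular handling the boundary case $[z_\Theta]_i = 0$ (where $[x_\Theta]_i = 0$ as well, so $i$ is active and $e_i$ lies in $\mathcal{H}_{2,z_\Theta}^\perp$ by the strict-inequality definition of $\mathcal{H}_{2,z_\Theta}$, consistent with the convention chosen for $dP_{\mathcal{C}_2}/dz$ in Lemma~\ref{lemma:dPC2/dz}). Everything else is a routine translation between null spaces, ranges, and coordinate subspaces, followed by a one-line appeal to the definition of LICQ.
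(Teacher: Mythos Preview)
Your proposal is correct and follows essentially the same route as the paper: identify $\mathcal{H}_1^\perp=\operatorname{Span}(a_1,\dots,a_m)$ and $\mathcal{H}_{2,z_\Theta}^\perp=\operatorname{Span}(e_i:i\in\mathcal{A}(x_\Theta))$ via the relation $x_\Theta=\Relu(z_\Theta)$, then observe that any nonzero vector in the intersection would yield a nontrivial dependence among $\{a_1,\dots,a_m\}\cup\{e_i:i\in\mathcal{A}(x_\Theta)\}$, contradicting LICQ. Your explicit handling of the boundary case $[z_\Theta]_i=0$ is a nice touch that the paper leaves implicit.
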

\begin{proof}
    We first rewrite $\mathcal{H}_1^\perp$ and $\mathcal{H}_{2,{z}_\Theta}^\perp$.
    The subspace $\mathcal{H}_{2,{z}_\Theta}^{\bot}$ is spanned by all non-positive coordinates of ${z}_\Theta$. By \eqref{eq: VI-DYS},   $[{x}_{\Theta}]_i = \max\{0,[{z}_\Theta]_i\}$, and so $i \in \mathcal{A}({x}_{\Theta})$ if and only if $[{z}_\Theta]_i \leq 0$. It follows that
    \begin{equation}
        \mathcal{H}_{2,{z}_\Theta}^{\bot} \triangleq \operatorname{Span}\{e_i :   [{z}_\Theta]_i \leq 0 \} = \operatorname{Span}\{e_i : i \in  \mathcal{A}({x}_{\Theta})\}
        = \operatorname{Span}\{e_{i_1}, \ldots, e_{i_\ell}\},
    \end{equation}
    where we enumerate $\mathcal{A}(x_\Theta)$ via $\mathcal{A}(x_\Theta) = \{i_1,\ldots,i_\ell\}$. 
    On the other hand, $\mathcal{H}_1^{\bot} = \Range(A^{\top}) = \operatorname{Span}(a_1,\ldots, a_m)$ where $a_i^{\top}$ denotes the $i$-th row of $A$. 
    
    Let $v \in \mathcal{H}_1^{\bot}\cap\mathcal{H}_{2,{z}_\Theta}^{\bot}$ be given.
    Since $v \in \mathcal{H}_1^\perp$, there are scalars $\alpha_1,\ldots,\alpha_\ell$ such that
    $
        v = \alpha_1e_{i_1} + \cdots + \alpha_{\ell}e_{i_\ell}.
    $
    Similarly, since $v \in \mathcal{H}_{2,{z}_\Theta}^\perp$, there are scalars $\beta_1,\ldots, \beta_m$ such that
    $
        v = \beta_1 a_1 + \cdots + \beta_m a_m.
    $
    Hence
    \begin{align}
        0 = v - v = \big(\alpha_1e_{i_1} + \ldots + \alpha_{\ell}e_{i_{\ell}}\big) -  \big(\beta_1a_1 + \ldots + \beta_ma_m\big).
    \end{align}
    By the LICQ condition, $\{e_{i_1},\ldots,e_{i_\ell}\}\cup \{a_1,\ldots,a_m\}$ is a linearly independent set of vectors; hence   $\alpha_1 = \ldots = \alpha_{\ell} = \beta_1 = \ldots = \beta_m = 0$ and, thus, $v = 0$.
    Since $v$ was arbitrarily chosen in $\mathcal{H}_1^{\bot}\cap\mathcal{H}_{2,{z}_\Theta}^{\bot}$, the result follows.
\end{proof}

\noindent {\bf Theorem~\ref{thm: JFB_Applies!}.}\textit{\ThmJFBapplies}

\begin{proof}
By Lemma \ref{lemma: LICQ-Empty-Orthogonal-Intersection}, the LICQ condition implies 
$\mathcal{H}_1^{\bot}\cap\mathcal{H}_{2,{z}_\Theta}^{\bot} = \{0\}$. This implies that either (i) the first principal angle $\tau$ between these two subspaces is nonzero, and so the cosine of this angle is less than unity, \ie 
\begin{equation}
    1 > \cos(\tau) \triangleq \max_{u \in \mathcal{H}_{1}^{\bot}: \|u\| = 1} \max_{v \in \mathcal{H}_{2,z}^{\bot}: \|v\| = 1} \langle u, v \rangle,
    \label{eq: proof-angle-condition}
\end{equation}
or (ii) (at least) one of $\mathcal{H}_1^{\bot},\mathcal{H}_{2,{z}_\Theta}^{\bot}$ is the trivial vector space $\{0\}$. In either case, let $w \in \mathbb{R}^{n}$ be given. By Theorem \ref{thm:Jacobian_of_T}, in case (ii)
\begin{equation}
    \left[ \pp{T_{\Theta}}{z}w\right]_{z={z}_\Theta} = P_{\mathcal{H}_{1}^{\bot}}P_{\mathcal{H}_{2,{z}_\Theta}^{\bot}}w + (1-\alpha\gamma)\cdot P_{\mathcal{H}_{1}}P_{\mathcal{H}_{2,{z}_\Theta}}w = (1-\alpha\gamma)\cdot P_{\mathcal{H}_{1}}P_{\mathcal{H}_{2,{z}_\Theta}}w
\end{equation}
implying that 
\begin{equation}
    \left\|\pp{T_{\Theta}}{z}w\right\|_{z={z}_\Theta} = (1-\alpha\gamma)\left\|P_{\mathcal{H}_{1}}P_{\mathcal{H}_{2,{z}_\Theta}}w\right\| \leq (1-\alpha\gamma)\|w\|, 
\end{equation}
where the inequality follows as projection operators are firmly nonexpansive. In case (i), write $w = w_1 + w_2$, where $w_1 \in \mathcal{H}_{2,{z}_\Theta}$ and $w_2 \in \mathcal{H}_{2,{z}_\Theta}^{\bot}$. Appealing to Theorem \ref{thm:Jacobian_of_T} again
\begin{align}
        \left[ \pp{T_{\Theta}}{z}w\right]_{z={z}_\Theta}
        =  P_{\mathcal{H}_{1}^{\bot}}P_{\mathcal{H}_{2,{z}_\Theta}^{\bot}}w + (1-\alpha\gamma)\cdot P_{\mathcal{H}_{1}}P_{\mathcal{H}_{2,{z}_\Theta}}w
        = P_{\mathcal{H}_{1}^{\bot}}w_2 + (1-\alpha\gamma)\cdot P_{\mathcal{H}_{1}}w_1.
    \end{align}
Pythagoras' theorem may be applied to deduce,
    together with the fact $P_{\mathcal{H}_{1}^{\bot}}w_2$ and $P_{\mathcal{H}_{1}}w_1$ are orthogonal, 
    \begin{equation}
        \left\|\pp{T_{\Theta}}{z}w\right\|^2_{z={z}_\Theta} = \left\|P_{\mathcal{H}_{1}^{\bot}}w_2\right\|^2 + (1-\alpha\gamma)^2\cdot \left\|P_{\mathcal{H}_{1}}w_1\right\|^2.
        \label{eq: dT/dz-orthogonal-expression}
    \end{equation}
    Since $w_2 \in \mathcal{H}_{2,{z}_\Theta}^{\bot} $, 
    the angle condition  (\ref{eq: proof-angle-condition}) implies 
    \begin{equation}
        \left\|P_{\mathcal{H}_{1}^{\bot}}w_2\right\|^2 = \langle P_{\mathcal{H}_{1}^{\bot}}w_2, P_{\mathcal{H}_{1}^{\bot}}w_2 \rangle = \langle w_2, P_{\mathcal{H}_{1}^{\bot}} P_{\mathcal{H}_{1}^{\bot}}w_2 \rangle = \langle w_2, P_{\mathcal{H}_{1}^{\bot}}w_2 \rangle \leq \cos(\tau) \cdot \|w_2\|^2,
        \label{eq: PH1-v2-cosine}
    \end{equation}
    where the third equality holds since orthogonal linear projections are symmetric and idempotent. Because projections are non-expansive and $P_{\mathcal{H}_{2,{z}_\Theta}}$ is linear, 
    \begin{equation}
        \left\|P_{\mathcal{H}_{1}}w_1\right\|^2
        = \left\|P_{\mathcal{H}_{1}}w_1 - P_{\mathcal{H}_{1}} 0 \right\|^2
        \leq \|w_1 - 0\|^2
        = \|w_1\|^2.
        \label{eq: PH2-v1-non-expansive}
    \end{equation}
    Combining (\ref{eq: dT/dz-orthogonal-expression}), (\ref{eq: PH1-v2-cosine}) and (\ref{eq: PH2-v1-non-expansive}) reveals
    \begin{subequations}
        \begin{align}
            {\left\|\pp{T_{\Theta}}{z}w\right\|^2_{z={z}_\Theta} }
            &\leq \cos(\tau) \cdot \|w_2\|^2 + (1-\alpha\gamma)^2\|w_1\|^2 \\
            & \leq  \max\{\cos(\tau), (1-\alpha\gamma)^2\}\cdot \left(\|w_1\|^2 + \|w_2\|^2\right)\\[5pt]
            & = \max\{\cos(\tau), (1-\alpha\gamma)^2\} \cdot \|w\|^2,
        \end{align}
        \label{eq: proof-dT/dz-bound}
    \end{subequations}
    
    \vspace*{-15pt} 
    \noindent noting the final equality holds since $w_1$ and $w_2$ are orthogonal. Because (\ref{eq: proof-dT/dz-bound}) holds for arbitrarily chosen $w\in\mathbb{R}^n$,
    \begin{equation}
         {\left\|\pp{T_{\Theta}}{z}\right\|_{z={z}_\Theta} }
         \triangleq
         \sup \left\lbrace  {\left\|\pp{T_{\Theta}}{z}w\right\|_{z=z_\Theta} } : \|w\| = 1\right\rbrace 
         \leq \sqrt{\max\{\cos(\tau), (1-\alpha\gamma)^2\}}
         < 1,
    \end{equation}
    where the final inequality holds by (\ref{eq: proof-angle-condition}) and the fact $ \alpha \in (0, 2/\gamma)$ implies $1 - \alpha \gamma \in (-1, 1)$, as desired.
    %as desired.
    %\SWFnote{what if $\cos(\tau) = 1$? Or $\alpha \gamma > 2?$ DM: Because $\tau > 0$, $\cos(\tau) < 1$. By Corollary~\ref{cor:KM_iteration} we have to choose $\alpha\gamma < 2$ anyway.}
\end{proof}

\noindent {\bf Corollary~\ref{cor: JFB_Applies!}.} \textit{\CorJFB}
\begin{proof}
    From the proof of Theorem \ref{thm: JFB_Applies!} we see that $T_{\Theta}$ is contractive with constant $\Gamma = \sqrt{\max\{\cos(\tau), (1-\alpha\gamma)^2\}}$ and so the main theorem of \citep{fung2022jfb}, guaranteeing $p_{\Theta}$ is a descent direction, as long as the condition number of $(\partial T_\Theta / \partial \Theta)^\top (\partial T_\Theta / \partial \Theta)$ is less than $1/\Gamma$.
\end{proof}

\begin{remark}
    Similar guarantees, albeit with less restrictive assumptions on $\partial T_\Theta / \partial \Theta$, can be deduced from the results of the recent work \citep{bolte2023one}.
\end{remark}

\section{Derivation for Canonical Form of Knapsack Prediction Problem}
\label{sec:Deriving_Knapsack}
For completeness, we explain how to transform the $k$-knapsack prediction problem into the canonical form \eqref{eq:Theta_Reg_Problem}, and how to derive the standardized representation of the constraint polytope $\mathcal{C}$. Recall that the $k$-knapsack prediction problem, as originally stated, is
\begin{equation}
    x^{\star} = \argmax_{x \in \mathcal{X}} w^{\top}x \text{ where } \mathcal{X} = \{x \in \{0,1\}^{\ell}: \ Sx \leq c\} \text{ and } S = \begin{bmatrix} s_1 & \cdots & s_{\ell} \end{bmatrix} \in \mathbb{R}^{k\times \ell}
\end{equation}
We introduce slack variables $y_1,\ldots, y_k$ so that the inequality constraint $Sx \leq c$ becomes
\begin{align*}
    -Sx + c \geq 0 & \Longrightarrow -Sx + c = y \text{ and } y \geq 0 \\
        & \Longrightarrow \begin{bmatrix} S & I_k \end{bmatrix} \begin{bmatrix} x \\ y \end{bmatrix} = c
\end{align*}
We relax the binary constraint $x_i \in \{0,1\}$ to $0 \leq x_i \leq 1$. We add additional slack variables $z_{1},\ldots, z_{\ell}$ to account for the upper bound:
\begin{align}
    1 - x_i \geq 0 \Longrightarrow 1 - x_i = z_{i} \text{ and } z_{i} \geq 0 
    \Longrightarrow \begin{bmatrix} I_{\ell\times\ell} & 0_{\ell\times k} & I_{\ell\times \ell} \end{bmatrix}\begin{bmatrix} x \\ y \\ z \end{bmatrix} = \mathbf{1} 
\end{align}
Putting this together, define
\begin{equation}
    A = \begin{bmatrix} -S & - I_{k\times k} & 0_{k\times \ell} \\ I_{\ell\times \ell} & 0_{\ell\times k} & I_{\ell\times \ell}  \end{bmatrix} \in \mathbb{R}^{(k + \ell) \times (2\ell + k)} \text{ and } b = \begin{bmatrix} -c \\ \mathbf{1}_{\ell} \end{bmatrix} \in \mathbb{R}^{k + \ell}
\end{equation}
Finally, redefine $x = \begin{bmatrix} x & y & z \end{bmatrix}^{\top}$ and $w = \begin{bmatrix} -w & \mathbf{0}_k & \mathbf{0}_{\ell} \end{bmatrix}$ (where we're using $-w$ to switch the argmax to an argmin) and obtain:
\begin{equation}
    x^{\star} = \argmin_{x \in \mathrm{Conv}(\mathcal{X})} w^{\top}(d)x + \gamma \|x\|_2^2 \text{ where } \mathrm{Conv}(\mathcal{X}) = \{x: Ax = b \text{ and } x \geq 0\}
\end{equation}

\section{Experimental Details}
\label{app: Experiments}

\subsection{Additional Data Details for \edits{{\tt PyEPO} problems}}
We use PyEPO \citep{tang2022pyepo}, \edits{specifically the functions {\tt data.shortestpath.genData} and {\tt data.knapsack.genData}} to generate the training data. Both functions sample $B \in \mathbb{R}^{n\times 5}$ with $B_{ij} = +1$ with probability $0.5$ and $-1$ with probability $0.5$ and then construct the cost vector $w(d)$ as
\begin{equation*}
    [w(d)]_i = \left[ \frac{1}{3.5^{\mathrm{deg}}} \left(\frac{1}{\sqrt{5}}(Bd)_i + 3\right)^{\mathrm{deg}} + 1\right]\cdot \epsilon_{ij}
\end{equation*}
where $\mathrm{deg} = 4$ and $\epsilon_{ij}$ is sampled uniformly from the interval $[0.5, 1.5]$. \edits{Note that $w(d)$ is, by construction, non-negative.}

\edits{\subsection{Additional Model Details for {\tt PyEPO} problems}
For {\tt PertOpt-net} and {\tt BBOpt-net} we use the default hyperparameter values provided by {\tt PyEPO}, namely $\lambda = 5$ for {\tt BBOpt-net}, number of samples equal 3, $\epsilon=1$ and Gumbel noise for {\tt PertOpt-net}. See \citet{poganvcic2019differentiation} and \citet{berthet2020learning} respectively for descriptions of these hyperparameters. We do so as these are selected via hyperparameter tuning \cite{tang2022pyepo}. As {\tt DYS-net} and {\tt CVX-net} solve a regularized form of the underlying LP \eqref{eq:objective-regularized}, the regularization parameter $\gamma$ needs to be set. After experimenting with different values, we select $\gamma = 5\times 10^{-4}$ for {\tt DYS-net} and $\gamma = 5\times 10^{-1}$ for {\tt CVX-net}.}

\subsection{Additional Training Details for Knapsack Problem}
For all models we use an initial learning rate of $10^{-3}$ and a scheduler that reduces the learning rate whenever the validation loss plateaus. We also used weight decay with a parameter of $5\times 10^{-4}$. 

\subsection{Additional Model Details for Large-Scale Shortest Path Problem}
Our implementation of {\tt PertOpt-net} used a PyTorch implementation\footnote{See code at \url{github.com/tuero/perturbations-differential-pytorch}} of the original TensorFlow code\footnote{See code at \url{github.com/google-research/google-research/tree/master/perturbations}} associated to the paper \citet{berthet2020learning}.  We experimented with various hyperparameter settings for 5-by-5 grids and found setting the number of samples equal to 3, the temperature ({\em i.e.} $\varepsilon$) to 1 and using Gumbel noise to work best, so we used these values for all other shortest path experiments. Our implementation of {\tt BB-net} uses the {\tt blackbox-backprop} package\footnote{See code at \url{https://github.com/martius-lab/blackbox-backprop}} associated to the paper \citet{poganvcic2019differentiation}. We found setting $\lambda=100$ to work best for 5-by-5 grids, so we use this value for all other shortest path experiments. \edits{Again, we select $\gamma = 5\times 10^{-4}$ for {\tt DYS-net} and $\gamma = 5\times 10^{-1}$ for {\tt CVX-net}}

\subsection{Additional Training Details for \edits{Large-Scale} Shortest Path \edits{Problem}}
\label{app: training_setup}
We use the MSE loss to train {\tt DYS-net}, {\tt PertOpt-net}, and {\tt CVX-net}. We tried using the MSE loss with {\tt BB-net} but this did not work well, so we used the Hamming (also known as 0--1) loss, as done in \citet{poganvcic2019differentiation}.

\noindent To train {\tt DYS-net} and {\tt CVX-net}, we use an initial learning rate of $10^{-2}$ and use a scheduler that reduces the learning rate whenever the test loss plateaus---we found this to perform the best for these two models. For {\tt PertOpt-net} we found that using a fixed learning rate of $10^{-2}$ performed the best. For {\tt BB-net}, we performed a logarithmic grid-search on the learning rate between $10{-1}$ to $10^{-4}$ and found that $10^{-3}$ performed best; we also attempted adaptive learning rate schemes such as reducing learning rates on plateau but did not obtain improved performance.

\edits{\subsection{Additional Model Details for Warcraft Experiment}
For {\tt PertOpt-net} and {\tt BBOpt-net} we again use the default hyperparameter values provided by {\tt PyEPO}, namely $\lambda = 10$ for {\tt BBOpt-net}, number of samples equal 3, $\epsilon=1$ and Gumbel noise for {\tt PertOpt-net}. We set $\gamma = 0.5$ for both {\tt CVX-net} and {\tt DYS-net}.}

\subsection{Hardware}
All networks were trained using a AMD Threadripper Pro 3955WX: 16 cores, 3.90 GHz, 64 MB cache, PCIe 4.0 CPU and an NVIDIA RTX A6000 GPU.

\section{Additional Experimental Results}
In Figure~\ref{fig: convergence_histories}, we show the test loss and training time per epoch for all three architectures: {\tt DYS-net}, {\tt CVX-net}, and {\tt PertOpt-net} for 10-by-10, 20-by-20, and 30-by-30 grids. In terms of MSE loss, {\tt CVX-net} and {\tt DYS-net} lead to comparable performance. 
In the second row of Figure~\ref{fig: convergence_histories}, we observe the benefits of combining the three-operator splitting with JFB~\citep{fung2022jfb}; in particular, {\tt DYS-net} trains much faster. Figure~\ref{fig: predicted_paths} shows some randomly selected outputs for the three architectures once fully trained.
\begin{figure}[t]
    \centering
    \begin{tabular}{ccc}
        {10-by-10} & {20-by-20} & {30-by-30}
        \\
        & & 
        \\
        & {Test MSE Loss} & 
        \\
        \includegraphics[width=0.28\textwidth]{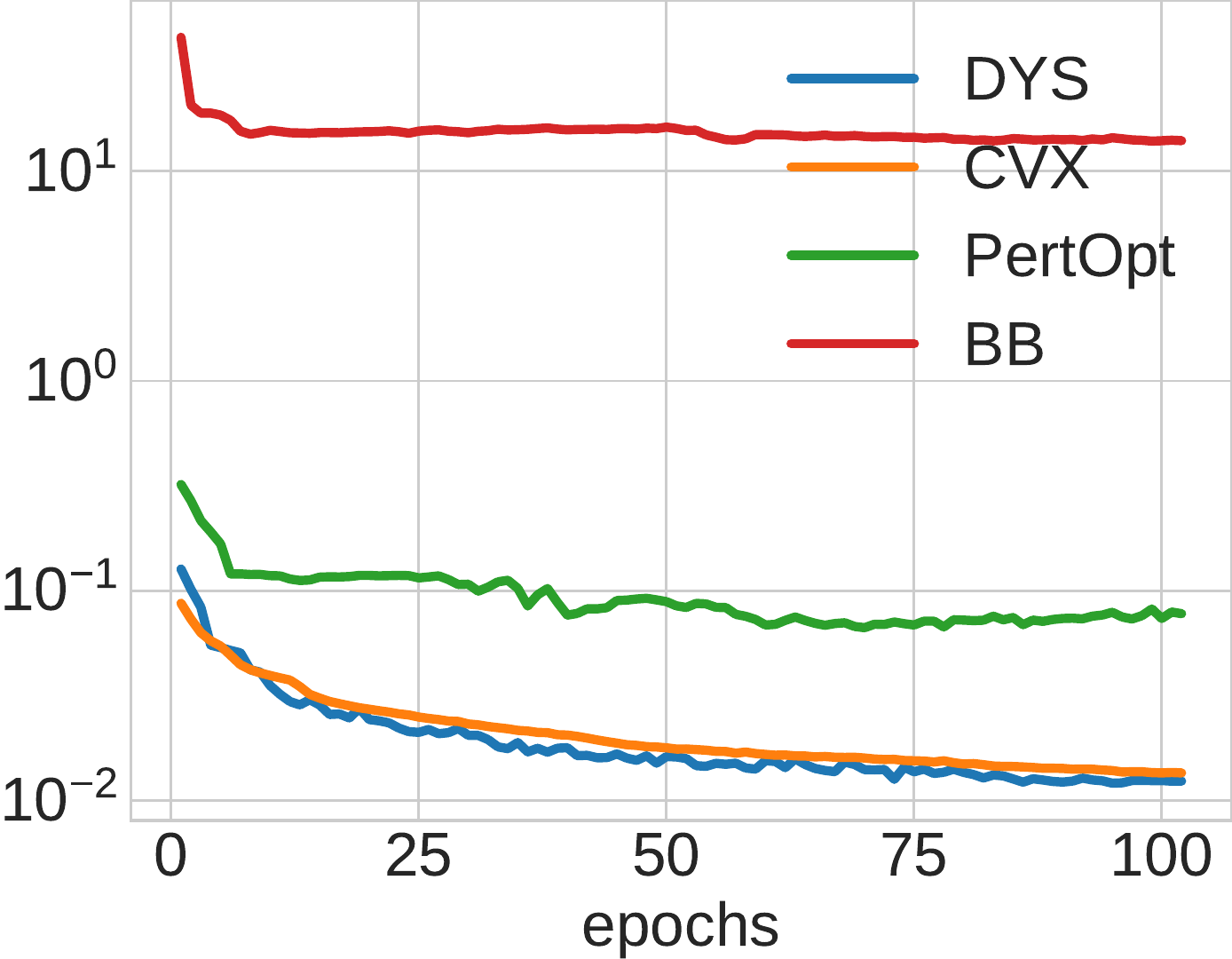}
        &
        \includegraphics[width=0.28\textwidth]{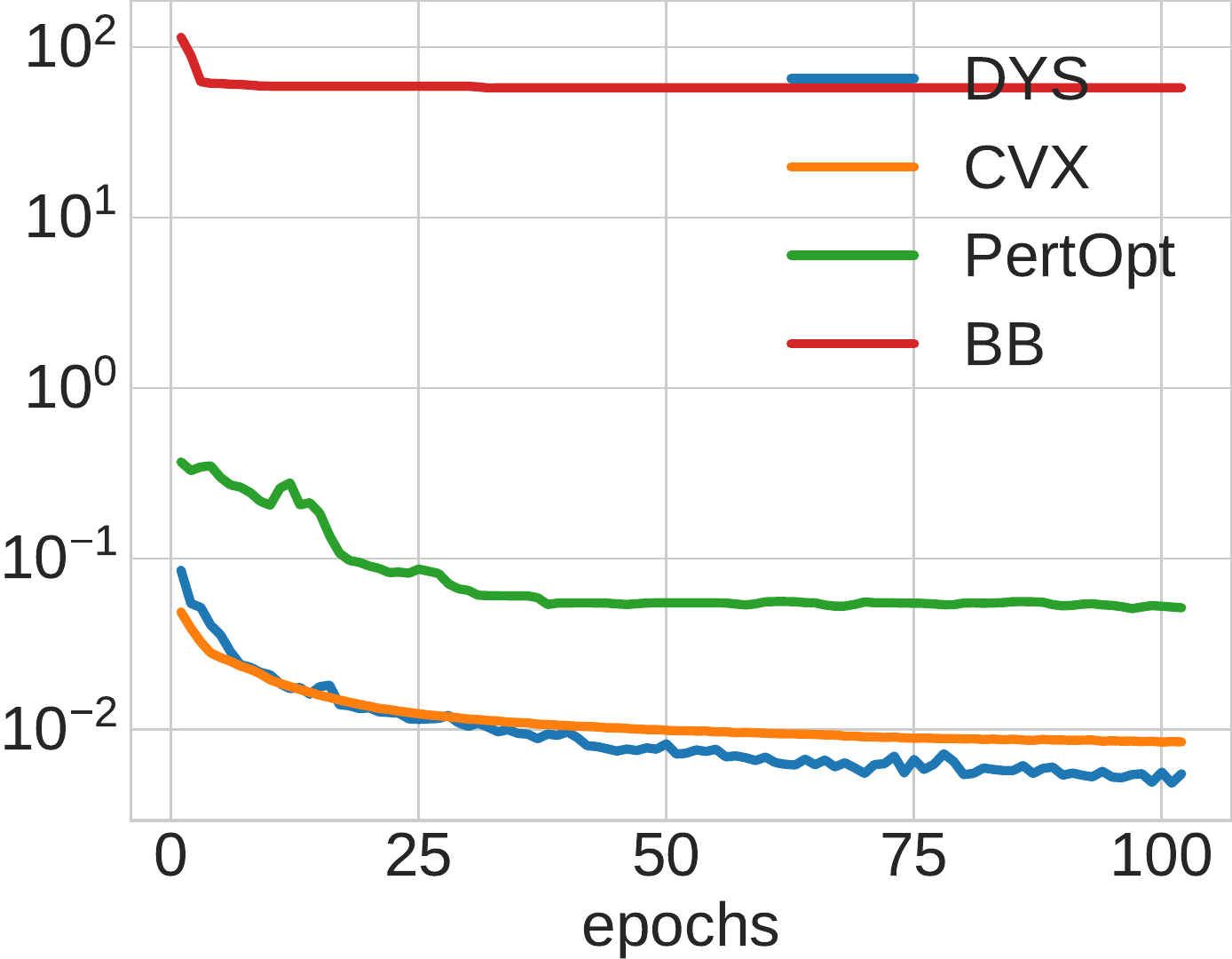}
        &
        \includegraphics[width=0.28\textwidth]{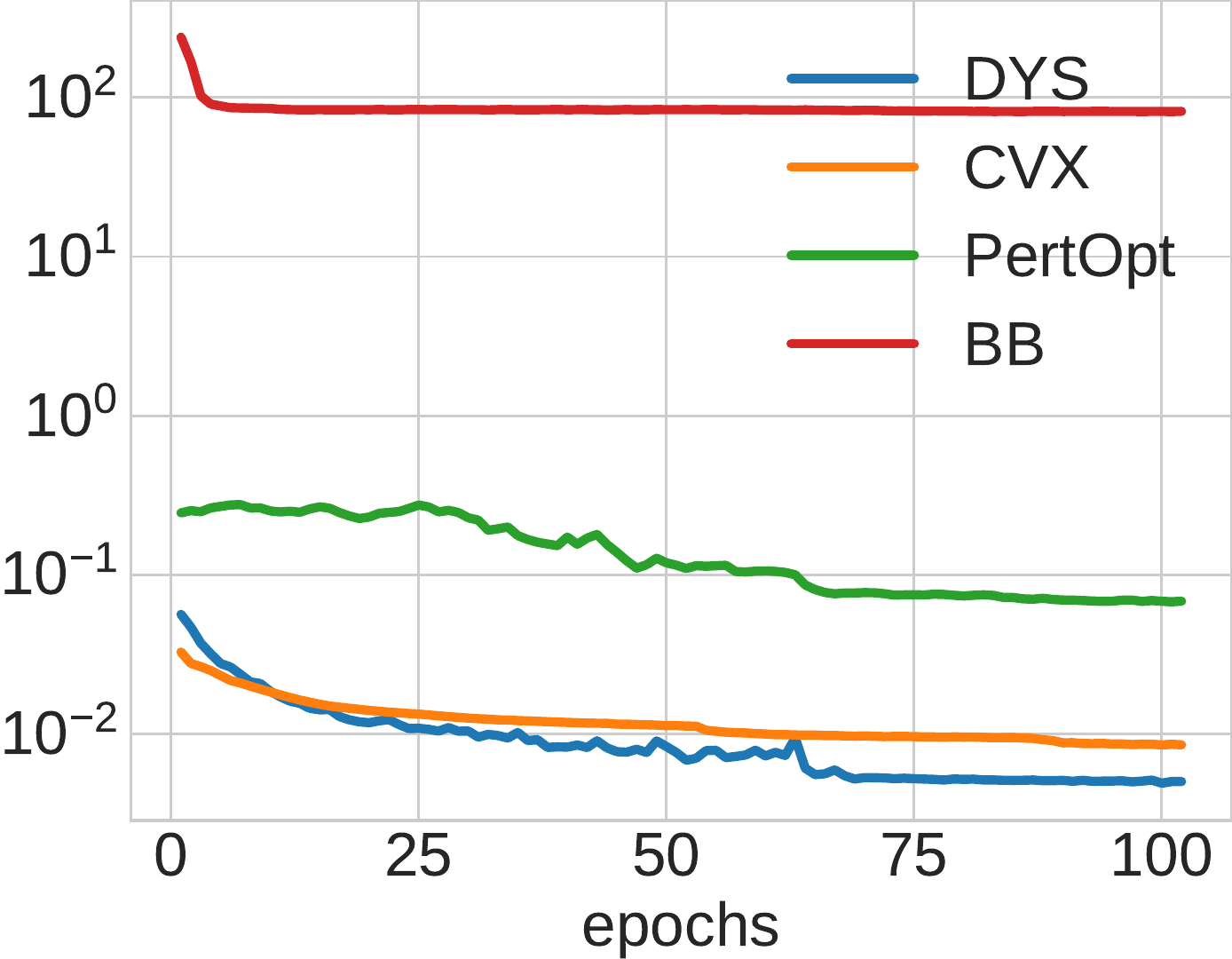}
        \\
        & & 
        \\
        & {Training Time (in minutes)} & 
        \\
        \includegraphics[width=0.28\textwidth]{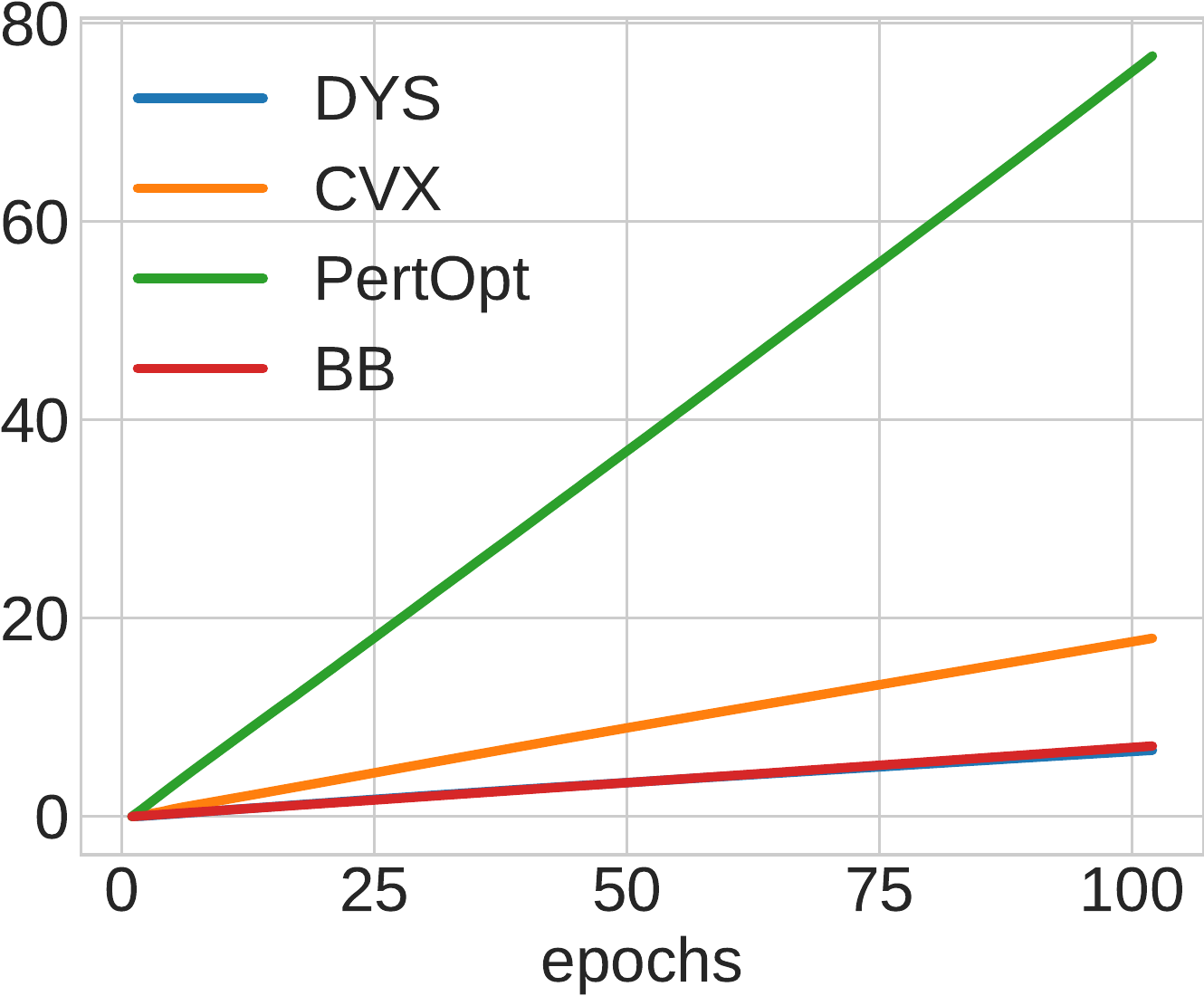}
        &
        \includegraphics[width=0.28\textwidth]{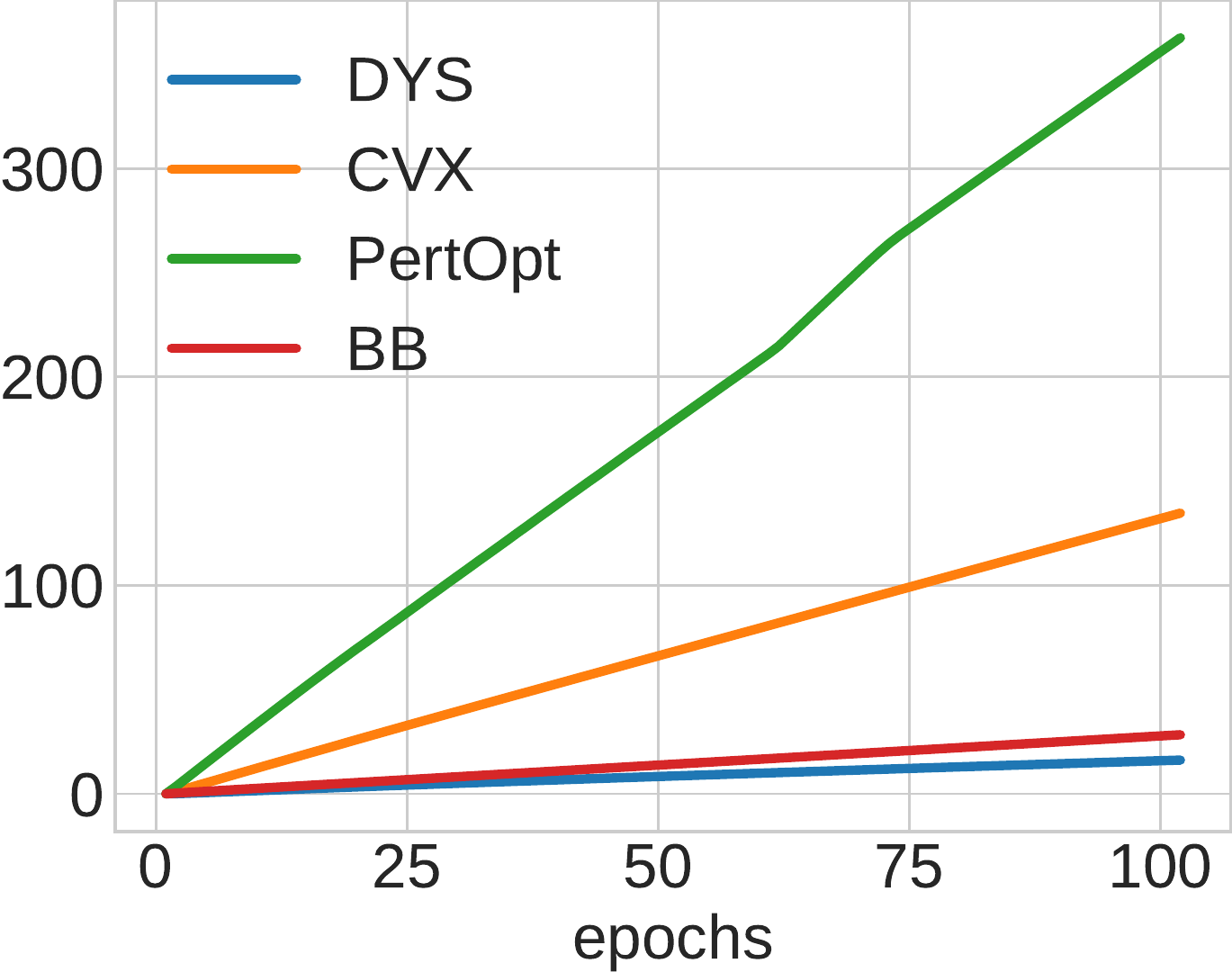}
        &
        \includegraphics[width=0.28\textwidth]{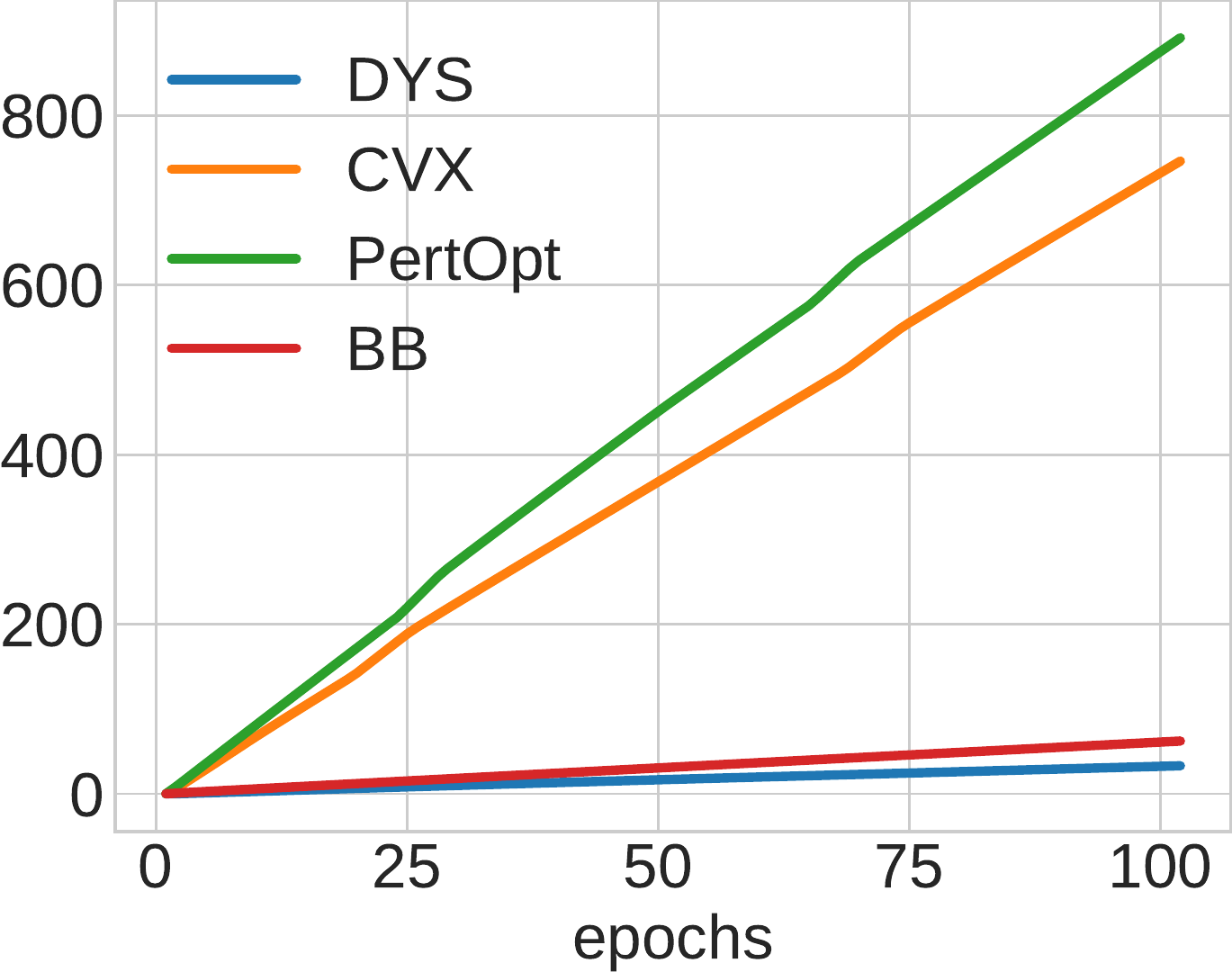}
    \end{tabular}
    \caption{Comparison of of DYS-Net, {\tt cvxpylayers}~\citep{cvxpylayers2019}, PertOptNet~\citep{berthet2020learning}, and Blackbox Backpropagation-net (BB-Net)~\citep{poganvcic2019differentiation} for three different grid sizes: $10 \times 10$ (first column), $20 \times 20$ (second column), and $30 \times 30$ (third column). The first row shows the MSE loss vs. epochs of the testing dataset. The second row shows the training time vs. epochs.}
    \label{fig: convergence_histories}
\end{figure}

\begin{figure}
    \centering
    \begin{tabular}{cccc}
    \\
    True Path & {\tt DYS-net} & {\tt CVX-net} & {\tt PertOpt-net}
    \\
    \multicolumn{4}{c}{10-by-10}
    \\
    \includegraphics[width=0.22\textwidth]{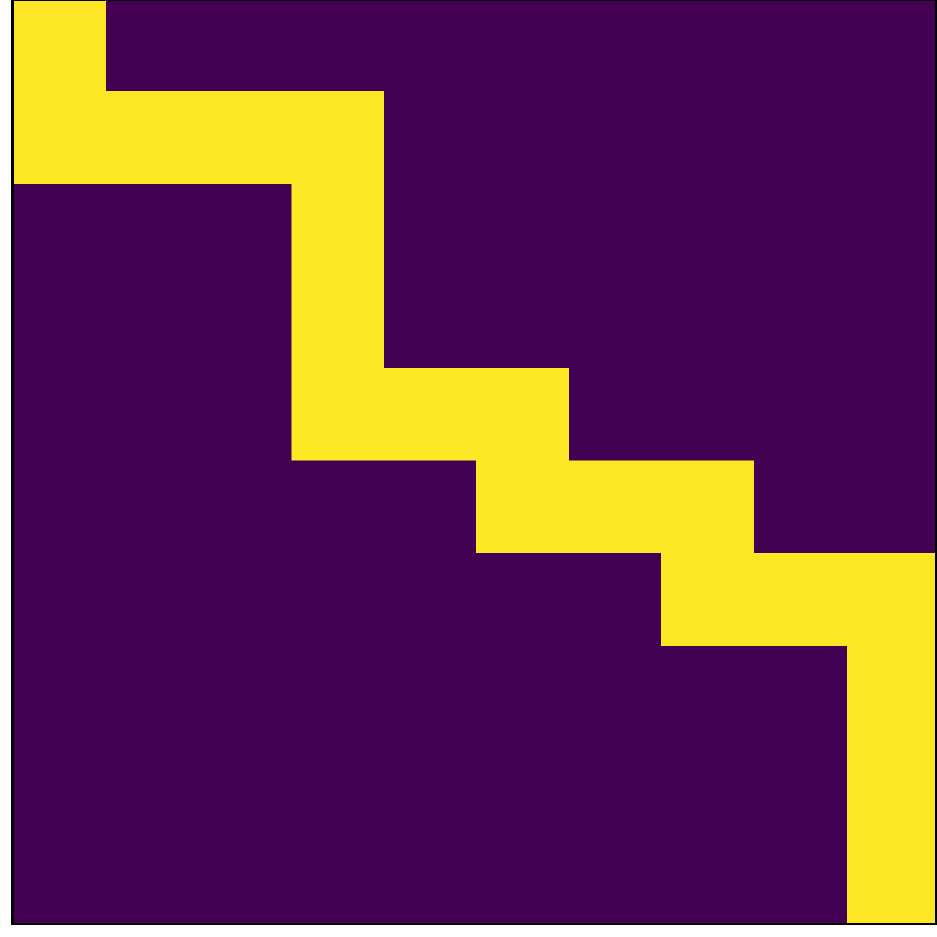}
    &
    \includegraphics[width=0.22\textwidth]{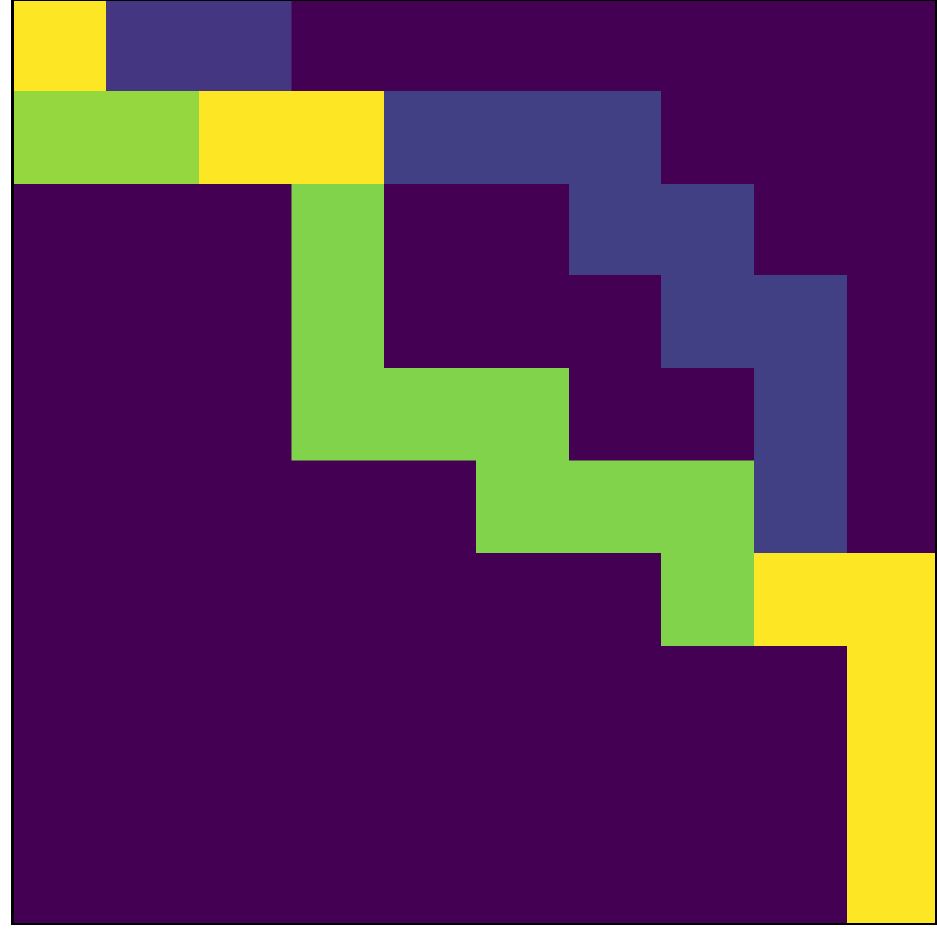}
    &
    \includegraphics[width=0.22\textwidth]{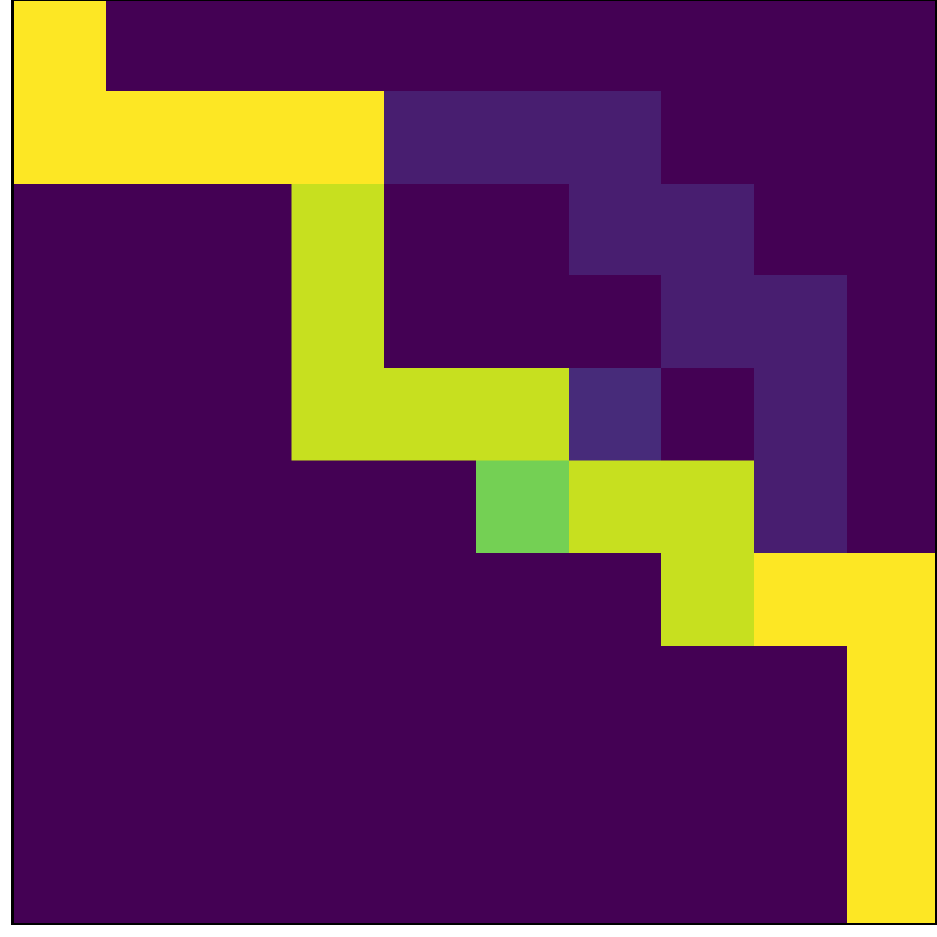}
    &
    \includegraphics[width=0.22\textwidth]{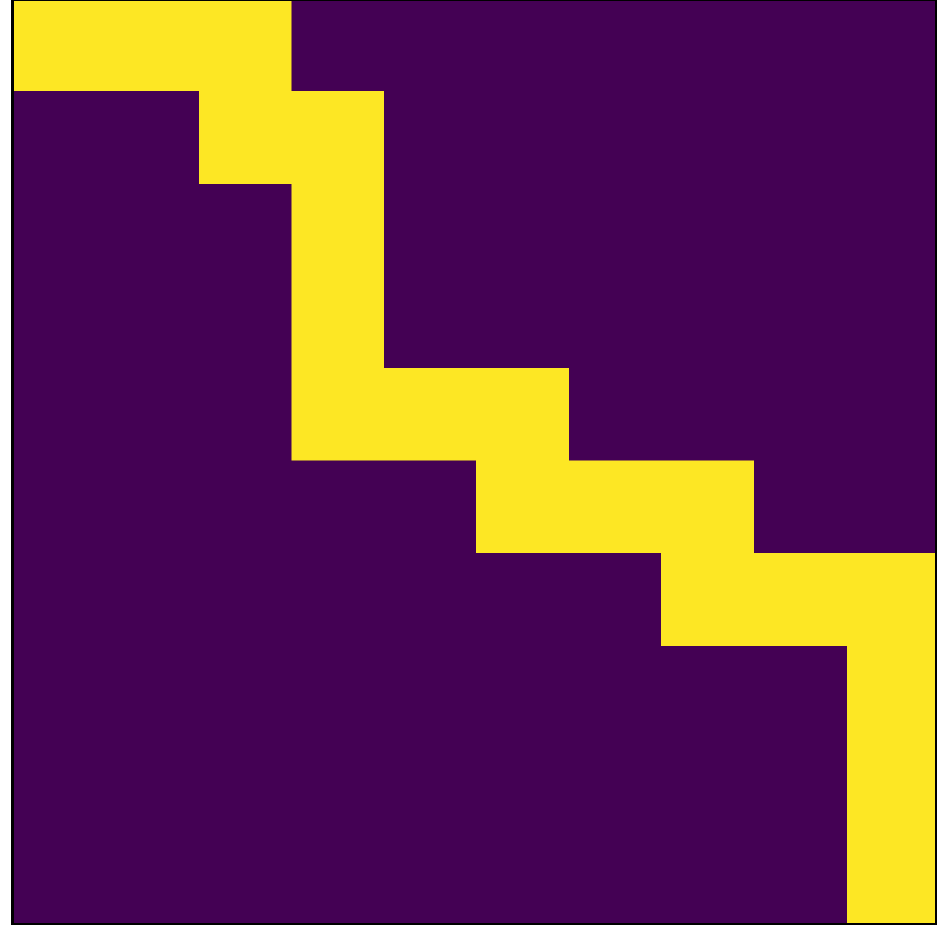}
    \\
    &&&
    \\
    \multicolumn{4}{c}{20-by-20}
    \\
    \includegraphics[width=0.22\textwidth]{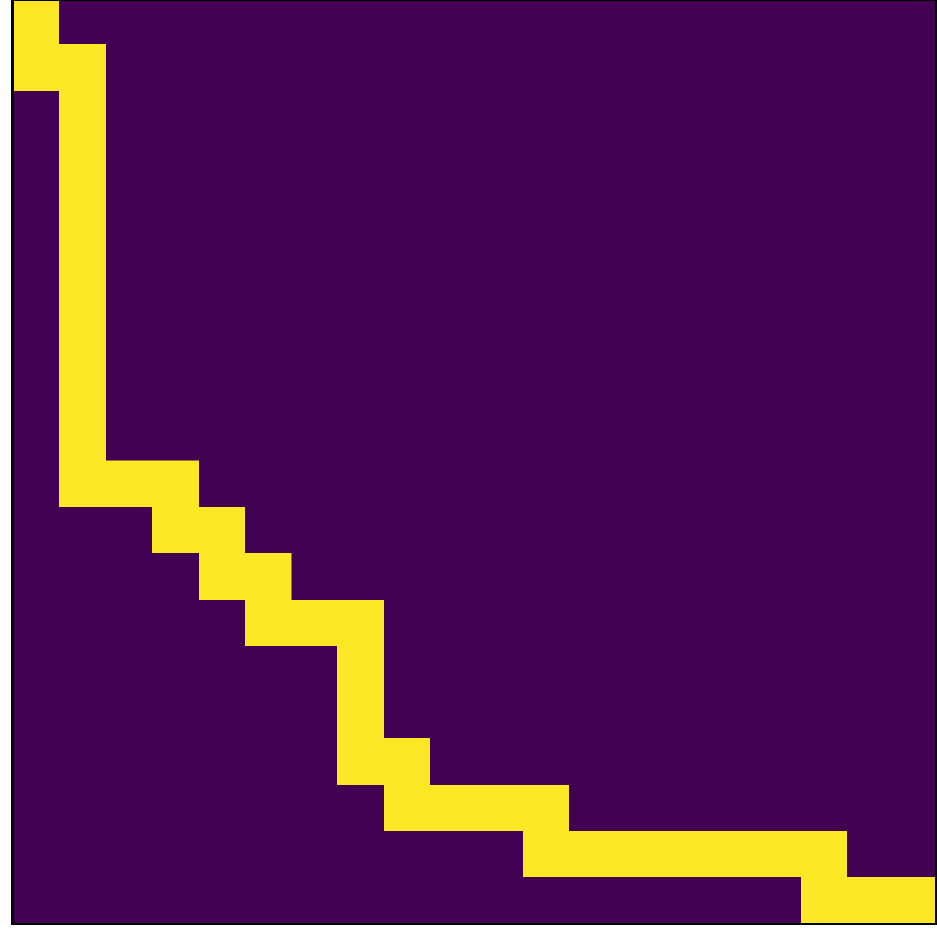}
    &
    \includegraphics[width=0.22\textwidth]{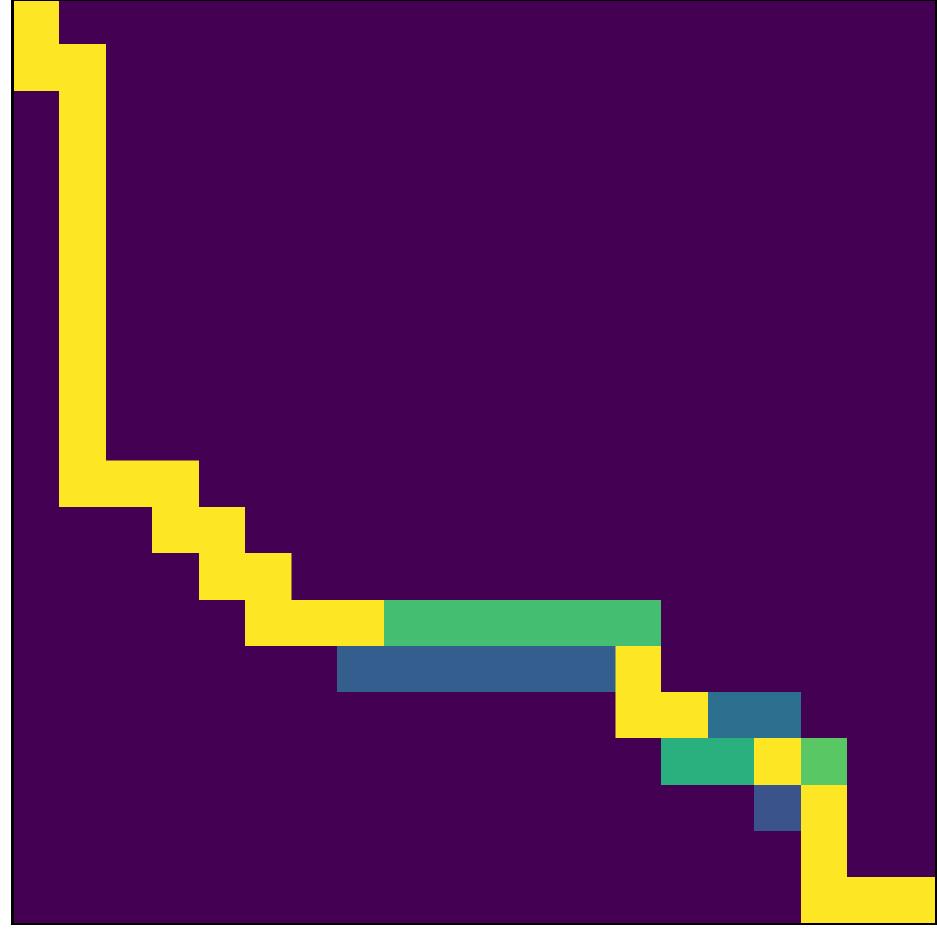}
    &
    \includegraphics[width=0.22\textwidth]{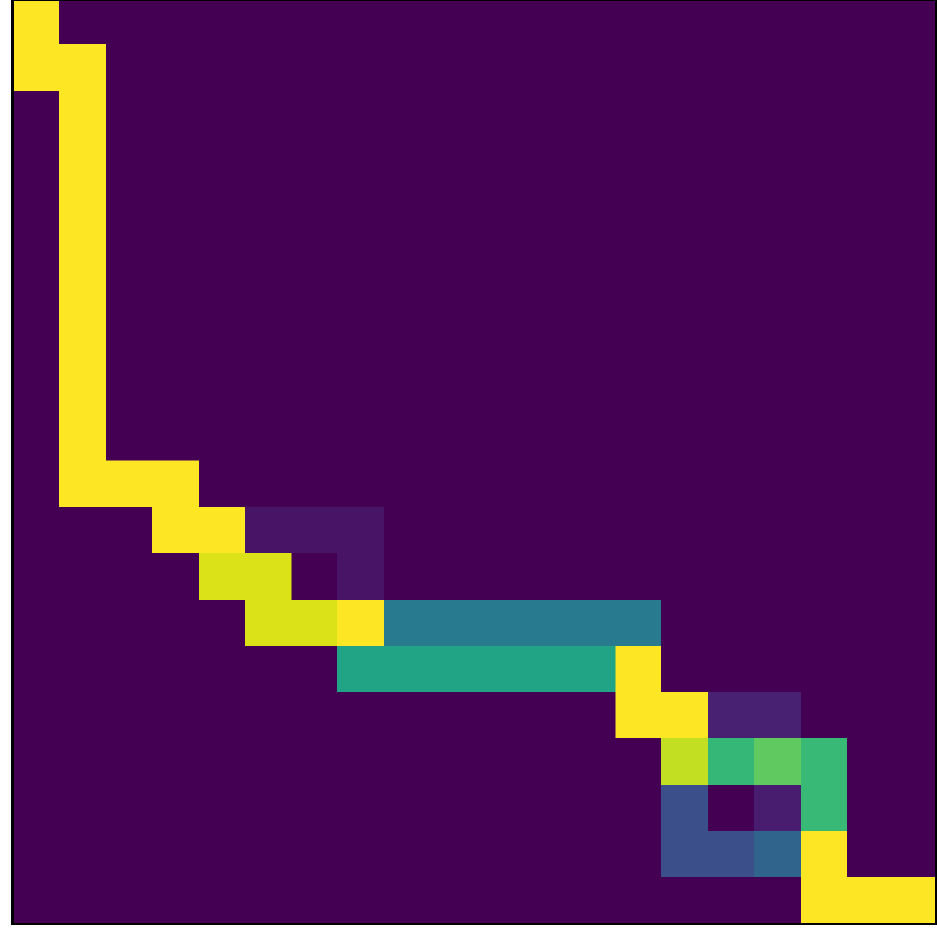}
    &
    \includegraphics[width=0.22\textwidth]{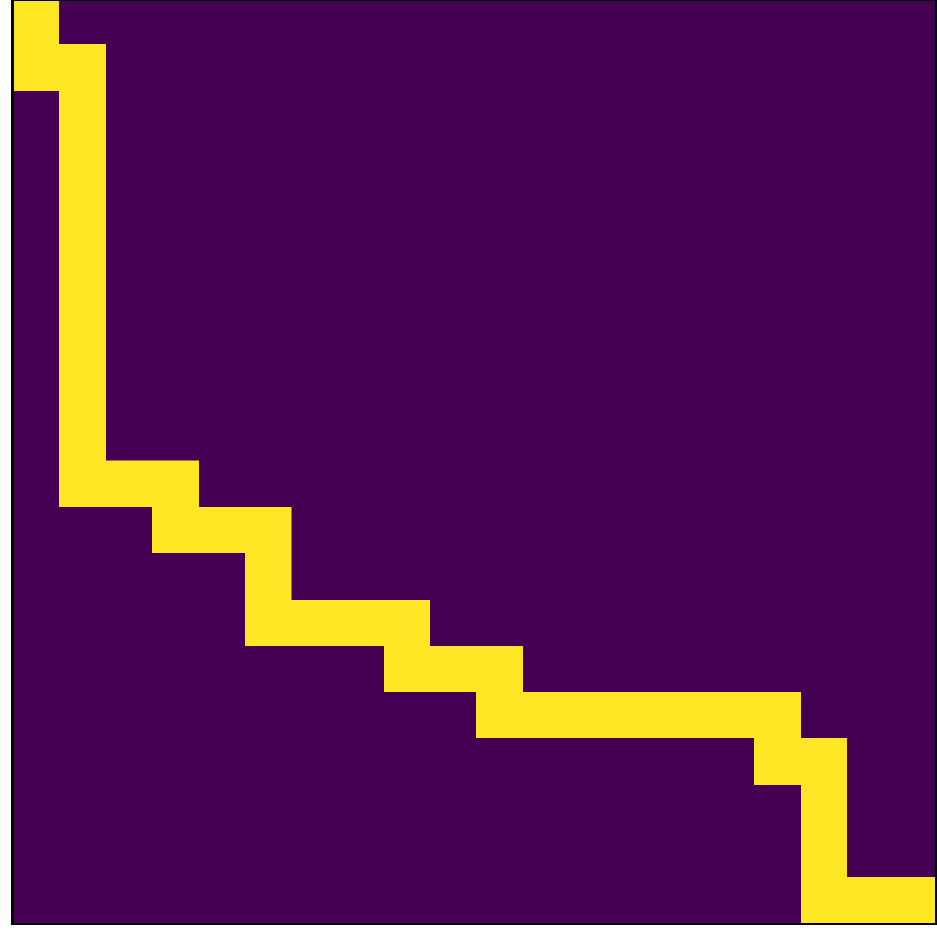}
    \\
    &&&
    \\
    \multicolumn{4}{c}{30-by-30}
    \\
    \includegraphics[width=0.22\textwidth]{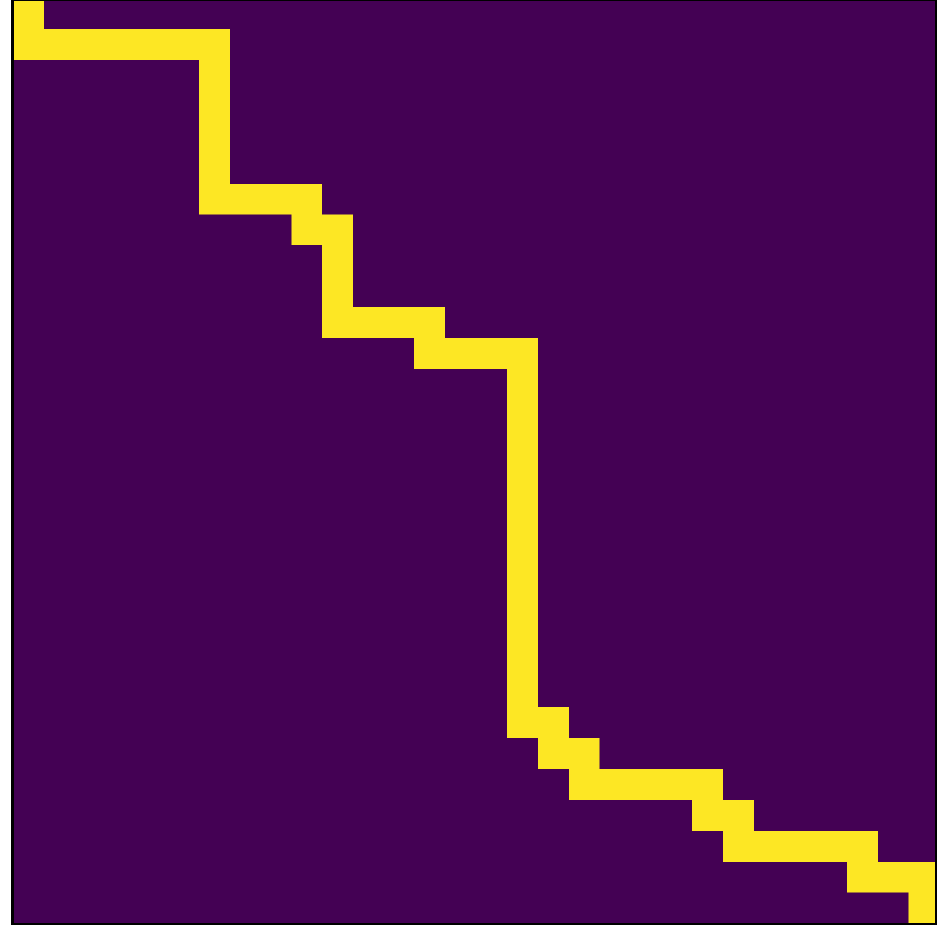}
    &
    \includegraphics[width=0.22\textwidth]{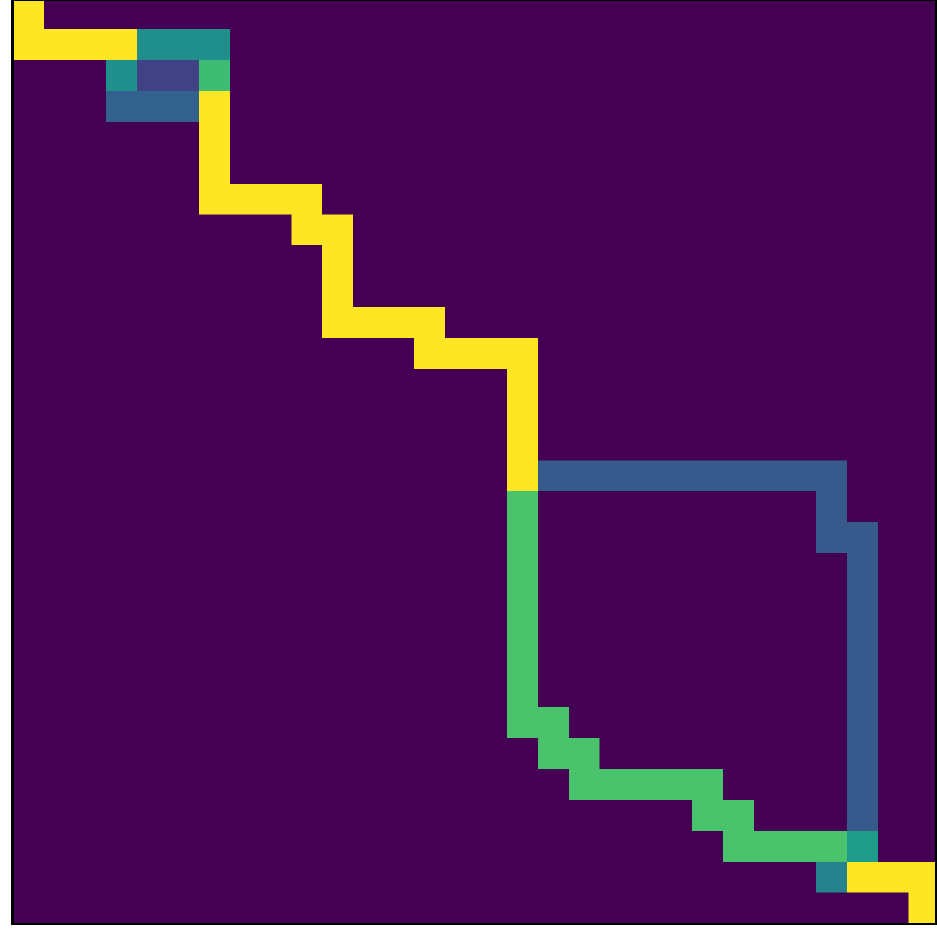}
    &
    \includegraphics[width=0.22\textwidth]{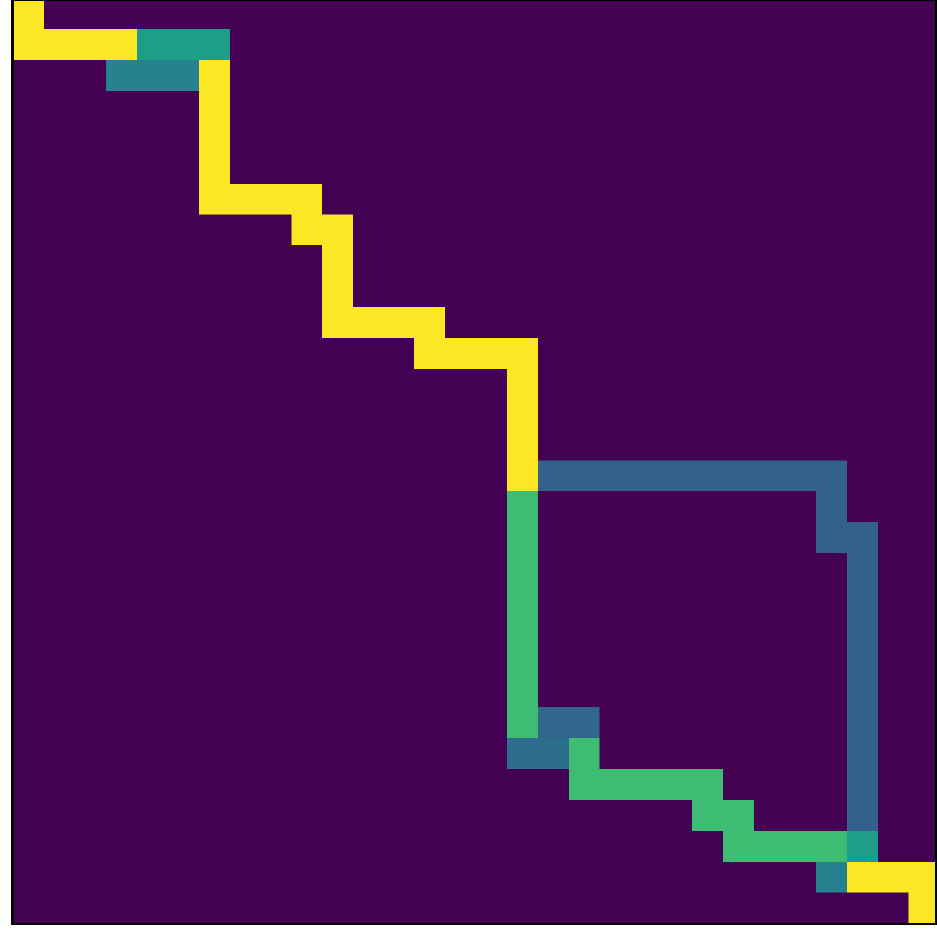}
    &
    \includegraphics[width=0.22\textwidth]{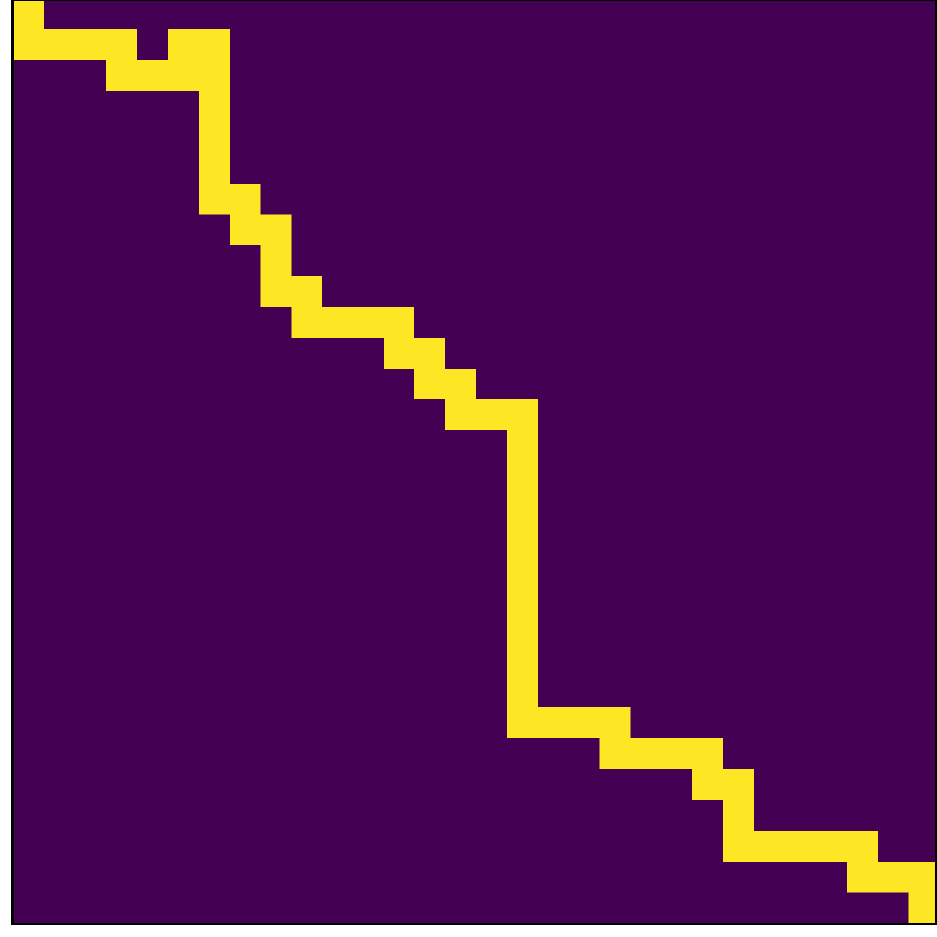}
    \end{tabular}
    \caption{True paths (column 1), paths predicted by {\tt DYS-net} (column 2), {\tt CVX-net} (column 3), and {\tt PertOpt-net} (column 4). Samples are taken from different grid sizes: 10-by-10 (row 1), 20-by-20 (row 2), and 30-by-30 (row 3).}
    \label{fig: predicted_paths}
\end{figure}

\label{sec:reference_examples}

% \appendix
% \section{Appendix}
% You may include other additional sections here.

\end{document}